\newcommand{\E}{\mathbb{E}} 
\renewcommand{\P}{\mathbb{P}} 
\newcommand{\R}{\mathbb{R}}
\newcommand{\N}{\mathbb{N}}
\newcommand{\trans}{\mathsf{T}}
\newcommand{\ind}{\mathbbm{1}}
\renewcommand{\S}{\mathcal{S}}
\newcommand{\A}{\mathcal{A}}
\DeclareMathOperator*{\argmin}{arg\,min}
\DeclarePairedDelimiter{\ceil}{\lceil}{\rceil}
\newtheorem{clm}{Claim}
\newtheorem{cor}{Corollary}
\newtheorem{lem}{Lemma}
\newtheorem{prop}{Proposition}
\newtheorem{thm}{Theorem}
\newtheorem{ass}{Assumption}
\newtheorem{defn}{Definition}
\newtheorem{rem}{Remark}
\renewcommand{\@algocf@capt@plain}{above}
\begin{document}

\title{Regret Bounds for Stochastic Shortest Path Problems with Linear Function Approximation}

\author[1]{Daniel Vial\thanks{Corresponding author. Email: \texttt{vial2@illinois.edu}.}}
\author[2]{Advait Parulekar}
\author[2]{Sanjay Shakkottai}
\author[1]{R.\ Srikant}
\affil[1]{University of Illinois at Urbana-Champaign}
\affil[2]{University of Texas at Austin}

\maketitle

\begin{abstract}
We propose an algorithm that uses linear function approximation (LFA) for stochastic shortest path (SSP). Under minimal assumptions, it obtains sublinear regret, is computationally efficient, and uses stationary policies. To our knowledge, this is the first such algorithm in the LFA literature (for SSP or other formulations). Our algorithm is a special case of a more general one, which achieves regret square root in the number of episodes given access to a certain computation oracle.
\end{abstract}

\section{Introduction} \label{secIntro}

To cope with the massive state spaces of modern reinforcement learning (RL) applications, a plethora of recent papers have studied function approximation. A particularly tractable case is linear function approximation (LFA). Here one assumes the transition kernel and cost vector are linear in known $d$-dimensional feature vectors, where typically $d \ll S$ and $A$ (the number of states and actions). In the online setting, an agent interacts with the Markov decision process over $T$ time steps (for infinite horizon average and discounted cost problems) or $K$ episodes (for finite horizon and stochastic shortest path problems). At a high level, one seeks algorithms with two properties:
\begin{itemize}[leftmargin=*,align=left,itemsep=0pt,topsep=0pt]
\item \textit{Statistically efficient:} regret independent of $S$ and $A$, sublinear (ideally, square root) in $T$ or $K$, and polynomial in $d$ and any other parameters.
\item \textit{Computationally efficient:} time and space complexity independent of $S$ and polynomial in $d$, $A$, $T$ or $K$, and any other parameters.
\end{itemize}

For the finite horizon problem, several algorithms have been shown to achieve both properties. A key question we address is whether such algorithms exist in settings where stationary policies are optimal, i.e., for stochastic shortest path (SSP) and average/discounted cost problems. To our knowledge, this problem is (essentially) open: state-of-the-art algorithms are computationally inefficient for average/discounted costs, and none have been proposed for SSP. (There is one exception for average cost but it uses non-stationary policies.) See Section \ref{secRelated} for related work. 

In addition to this theoretical point of interest, there is practical motivation for understanding stationary policy settings. First, stationary policies are simpler to deploy and (compared to large, but finite, horizons) less costly to store. Second, they do not require a notion of ``time zero,'' which may be ill-defined in practice. Third, RL applications like games with a random number of moves are best modeled in the stationary policy setting, in particular SSP, where the agent tries to minimize its cost before reaching a goal state.

{\bf Contributions:} Motivated by these theoretical and practical concerns, we provide the first algorithm for episodic SSP with LFA. More generally, this is the first statistically and computationally efficient LFA algorithm that uses stationary policies (in any setting). In more detail, our contributions are as follows:
\begin{itemize}[leftmargin=*,align=left,itemsep=0pt,topsep=0pt]
\item \textit{Optimistic approximate fixed points (OAFPs):} In Section \ref{secOAFP}, we show that under the LFA assumption, the optimal policy in an SSP can be computed from the fixed point of a $d$-dimensional Bellman operator, denoted by $G$ (see Proposition \ref{propFeatureBellman}). This is a simple observation, but it leads to an important definition of OAFPs (see Definition \ref{defnOAFP}). Roughly, these are $d$-dimensional vectors that have small Bellman error with respect to a data-driven operator $\hat{G}_t$ that we interpret as an optimistic approximation of $G$.

\item \textit{Regret bound with oracle:} In Section \ref{secRegret}, we assume access to an oracle that computes OAFPs from trajectories and propose Algorithm \ref{algMain}, which uses the oracle to update its policy. When the LFA assumption holds, the minimal cost for non-goal states $c_{min}$ is positive, and a proper policy exists (see Assumptions \ref{assSSPreg}-\ref{assSSPlin}), Theorem \ref{thmRegret} shows Algorithm \ref{algMain} achieves sublinear regret, with the exponent determined by the oracle's quality ($\sqrt{K}$ in the best case -- see Corollary \ref{corBestCase}). This reduces the problem of regret minimization to that of finding OAFPs (which exist with high probability, by the same theorem).

\item \textit{Oracle implementations:} In Section \ref{secOracle}, we show how to compute OAFPs. Combined with the results of Section \ref{secRegret}, this yields an efficient end-to-end algorithm with the following regret scaling in $K$:
\begin{itemize}[leftmargin=10pt,itemsep=0pt,topsep=0pt]
\item $K^{5/6}$ if Assumptions \ref{assSSPreg}-\ref{assSSPlin} hold (Theorem \ref{thmProb}).
\item $K^{3/4}$ if Assumptions \ref{assSSPreg}-\ref{assSSPlin} hold and all stationary policies are proper (Theorem \ref{thmProp}). 
\item $\sqrt{K}$ if Assumptions \ref{assSSPreg}-\ref{assSSPlin} hold and the features are orthogonal in a certain sense (Theorem \ref{thmOrth}).
\end{itemize}

\item \textit{Extensions:} In Section \ref{secExtensions}, we provide generalizations of Theorems \ref{thmProp} and \ref{thmOrth} and remove the $c_{min} > 0$ assumption. The latter point shows we can obtain sublinear regret and computational efficiency with stationary policies under minimal SSP assumptions.
\end{itemize}

\subsection{Related work} \label{secRelated}

{\bf Finite horizon LFA:} Several efficient algorithms have been proposed (of course, the policies are not stationary). To our knowledge, the earliest are \cite{jin2020provably,yang2020reinforcement,zanette2020frequentist}, which (like us) assume linear costs and transitions: $c(s,a) = \phi(s,a)^\trans \theta$ and $P(s'|s,a) = \phi(s,a)^\trans \mu(s')$ for known $\phi(s,a) \in \R^d$. The most relevant is \cite{jin2020provably}, which proposed an optimistic, least squares version of backward induction; our algorithm is the value iteration analogue. Subsequent work is too vast to survey here, but for later discussion, we note \cite{zhang2021variance,zhou2021nearly} proposed Berstein-style confidence sets for the related linear mixture model (see, e.g., \cite{ayoub2020model,jia2020model}), where $P(s'|s,a) = \varphi(s'|s,a)^\trans \vartheta$ for known $\varphi(s'|s,a)$.

{\bf Infinite horizon LFA:} Comparatively little is known for infinite horizons. \cite{wei2021learning,wu2021nearly} studied average costs under the minimal assumption that the optimal policy's long-term average reward is independent of the initial state (see references therein for work with stronger assumptions). The first algorithm in \cite{wei2021learning} has $\sqrt{T}$ regret assuming access to a certain fixed point oracle (analogous to our Algorithm \ref{algMain}) but no efficient oracle is provided, the second is computationally efficient with $T^{3/4}$ regret but uses non-stationary policies and requires knowledge of $T$ (unlike ours), and the third requires stronger assumptions. \cite{wu2021nearly} proved $\sqrt{T}$ regret for the linear mixture model, but the algorithm is inefficient due to computation\footnote{\cite[Appendix B]{zhou2020provably} provides a scheme to estimate the sums, but only in some special cases, and the estimation error is not accounted for in the regret analysis.} of $\sum_{s' \in \S}  h(s') \varphi(s'|s,a)$ for certain $h \in \R^\S$. Analogous algorithms are proposed in \cite{zhou2020nearly,zhou2020provably} for discounted costs, which have $\sqrt{T}$ regret but are inefficient for the same reason. Also, the discounted cost regret formulation is a bit unsatisfying, as it compares to the optimal policy along the algorithm's trajectory. Thus, one that stays in a bad set of states and only learns on this set can still have low regret. Finally, as mentioned above, we are not aware of any LFA papers that consider SSP.

{\bf Tabular SSP, $c_{min}^{-1}$ dependent:} \cite{tarbouriech2020no} proved $\tilde{O} ( D^{3/2} S \sqrt{ A K / c_{min} }  )$ regret, where $D$ is the SSP diameter (see their Assumption 2). \cite{rosenberg2020near} improved this to $\tilde{O} ( B_\star^{3/2} S \sqrt{ A K / c_{min} } )$, where $B_\star \leq D$ is the maximal cost-to-go of the optimal policy. Both algorithms use Hoeffding-style confidence sets and can be generalized to the case $c_{min} = 0$, though regret increases to $K^{2/3}$ (see Section \ref{secExtensions}).

{\bf Tabular SSP, $c_{min}^{-1}$ independent:} \cite{rosenberg2020near} also proved $\tilde{O} ( B_\star S \sqrt{  A K } )$ regret when $c_{min} = 0$, and the lower bound $\tilde{\Omega} ( B_\star \sqrt{ SA K } )$. Removing the $c_{min}^{-1}$ dependence required Berstein-style confidence sets, which \cite{chen2021implicit,cohen2021minimax,tarbouriech2021stochastic,jafarnia2021online} also employed. The former three showed UCB-based algorithms achieve the lower bound; the latter showed posterior sampling obtains $\tilde{O} ( B_\star S \sqrt{  A K } )$ regret. See references therein for prior work on SSP variants (e.g., adversarially changing costs).

\section{Preliminaries} \label{secPrelim}

{\bf Notation:} For $m \in \N$, we let $[m] = \{1,\ldots,m\}$. We write $\ind(\cdot)$ for the indicator function. We let $e_i$ be the vector with $j$-th element $e_i(j) = \ind ( i = j )$. For $x \in \R^d$ and positive definite $Y \in \R^{d \times d}$, $\| x \|_Y = \sqrt{ x^\trans Y x }$.

{\bf SSP:} An SSP instance is defined by $(\S,\A,P,c,s_{goal})$, where $\S$ is a set of $S = |\S| < \infty$ states, $\A$ is a set of $A = |\A| < \infty$ actions, $P$ is the transition kernel, $c$ is the cost vector, and $s_{goal} \in \S$ is an absorbing zero-cost state, i.e., $P(s_{goal}|s_{goal},a) = 1$ and $c(s_{goal},a) = 0$ for any $a \in \A$. Any stationary and deterministic policy $\pi : \S \rightarrow \A$ induces a trajectory $\{ s_t^\pi \}_{t=1}^\infty$, where $s_1^\pi$ is some initial state and $s_{t+1}^\pi \sim P ( \cdot | s_t^\pi , \pi ( s_t^\pi ) )$ for $t \in \N$. We call $\pi$ proper if $s_{goal}$ is reached with probability $1$ from any $s_1^\pi \in \S$; otherwise, we call it improper. We make the following assumption regarding proper policies, which we discuss in Remark \ref{remAssReg} below.

\begin{ass}[Basic properties] \label{assSSPreg}
There exists at least one proper policy, and for some $c_{min} > 0$ and any $(s,a) \in ( \S \setminus \{ s_{goal} \} ) \times \A$, $c(s,a) \in [c_{min},1]$.
\end{ass}

For any $\pi : \S \rightarrow \A$, we define the (possibly infinite) cost-to-go function $J^\pi : \S \rightarrow \R$ by
\begin{equation} \label{eqCostToGo}
 J^\pi(s) = \lim_{T \rightarrow \infty} \E \left[ \sum_{t=1}^T c( s_t^\pi , \pi(s_t^\pi) ) \middle| s_1^\pi  = s \right] .
\end{equation}
Given Assumption \ref{assSSPreg}, the optimal policy $\pi^\star$, i.e., the $\pi$ that minimizes $J^\pi(s)$ over all $s$, is stationary, deterministic, and proper \citep{bertsekas1991analysis}. It also satisfies the Bellman optimality equations
\begin{equation} \label{eqBellman}
 J^\star(s) = \min_{a \in \A} Q^\star(s,a) , \quad \pi^\star(s) \in \argmin_{a \in \A}  Q^\star(s,a)  ,
\end{equation}
where $J^\star = J^{\pi^\star}$ and the optimal state-action cost-to-go function $Q^\star : \S \times \A \rightarrow \R$ is given by
\begin{equation} \label{eqQstar}
 Q^\star (s,a) = c(s,a) + \sum_{ s' \in \S }  J^\star(s') P(s'|s,a) .
\end{equation}
Finally, we define $B_\star = \max_{s \in \S} J^\star(s)$.

\begin{rem}[Positive costs]\label{remAssReg}
We require $c(s,a) \geq c_{min}$ to show that episodes incurring finite total cost must terminate in finite time. In Section \ref{secExtensions}, we remove this assumption while still achieving sublinear regret and computational efficiency with stationary policies.
\end{rem}

{\bf Linearity:} As discussed in the introduction, we make the following assumption to enable LFA.
\begin{ass}[Linearity] \label{assSSPlin}
For some $d \geq 2$, there exists known $\{ \phi(s,a) \}_{ (s,a) \in \S  \times \A } \subset \R^d$, unknown $\theta \in \R^d$, and unknown $\{ \mu(s') \}_{ s' \in \S } \subset \R^d$, such that, for any $(s,a,s') \in ( \S \setminus \{ s_{goal} \} ) \times \A \times \S$,
\begin{gather}
c(s,a) = \phi(s,a)^\trans \theta ,\quad P ( s' | s,a ) = \phi(s,a)^\trans \mu(s') , \label{eqSSPlin} \\
\| \phi(s,a) \|_2 \leq 1 , \quad \| \theta \|_2 \leq \sqrt{d} , \label{eqSSPnorm1} \\
 \left\| \sum_{s' \in \S} h(s') \mu(s')  \right\|_2 \leq \sqrt{d} \| h \|_\infty\ \forall\ h \in \R^{\S} . \label{eqSSPnorm2}
\end{gather}
\end{ass}

This assumption naturally generalizes that of \cite{jin2020provably} to SSP. We also assume $d \geq 2$, which, given \eqref{eqSSPlin}, only eliminates a trivial case where $\phi(s,a)$ is independent of $(s,a)$. Finally, we assume without further loss of generality that $\phi(s_{goal},a) = 0\ \forall\ a \in \A$.

\begin{rem}[Tabular case] \label{remTabular}
Any SSP with $c(s,a) \in [0,1]$ satisfies Assumption \ref{assSSPlin} with $d = SA$, $\phi(s,a) = e_{(s,a)}$, $\theta = c$, and $\mu(s')  = \{ P(s' |s,a) \}_{(s,a) \in \S \times \A}$. 
\end{rem}

\begin{rem}[Realizability]
As shown in Appendix \ref{appProofFeatureBellman}, Assumption \ref{assSSPlin} implies $Q^\star$ is linear in the features. Ideally, we would only assume this, but recent work for finite horizons (a special case of SSP) has shown this problem is fundamentally harder \citep{du2020good,wang2021exponential,weisz2021exponential}.
\end{rem}

{\bf Regret:} We consider a protocol with $K$ episodes. For each $k \in [K]$, the agent begins at step $h=1$ at initial state $s_h^k$. At step $h$, the agent takes action $a_h^k$, incurs cost $c(s_h^k,a_h^k)$, and transitions to $s_{h+1}^k \sim P(\cdot | s_h^k, a_h^k)$. If $s_{h+1}^k = s_{goal}$, the episode terminates (without taking action $a_{h+1}^k$). We assume $s_1^k \neq s_{goal}$ without loss of generality but make no further assumptions on the sequence of initial states $\{ s_1^k \}_{k=1}^K$. We let $(s_t,a_t,s_t')$ denote the $t$-th state-action-state triple observed across all episodes. Hence, for each $t$, $s_t' \sim P(\cdot| s_t,a_t)$, and $s_t' = s_{t+1}$ \textit{unless} an episode ends at time $t$ (in which case $s_t' = s_{goal}$ and $s_{t+1} = s_1^{k+1}$, where $k$ is the episode that ended at $t$). We also let $T$ denote the random total number of steps across all $K$ episodes.\footnote{We reiterate $T$ is random for SSP, unlike the fixed $T$ used in the infinite horizon discussion of Section \ref{secIntro}.} As in the tabular SSP literature, we define the regret
\begin{equation}
R(K) = \sum_{t=1}^T c(s_t,a_t) - \sum_{k=1}^K J^\star(s_1^k),
\end{equation}
which is the difference between the total cost of the agent and the expected total cost of a ``genie'' who knows the optimal policy \textit{a priori} and runs it for $K$ episodes with the same initial states. 

\begin{rem}[Challenge 1] \label{remInfReg}
Unlike finite horizon LFA, no episode is guaranteed to end, since the agent may use improper policies. In this case, $T = \infty$ and we suffer infinite regret. Thus, we will need to detect improper policies and fix them within episodes, a challenge that does not arise for finite horizon LFA.
\end{rem}

\section{Optimistic approximate fixed point} \label{secOAFP}

To motivate the definition of OAFPs, we begin with the simple observation that for a linear SSP, the optimal policy can be computed from a feature space version of the Bellman operator (when the model is known). The proof is elementary; see Appendix \ref{appProofFeatureBellman}.
\begin{prop}[Feature space fixed point] \label{propFeatureBellman} 
Let Assumptions \ref{assSSPreg} and \ref{assSSPlin} hold. Define $G : \R^d \rightarrow \R^d$ by
\begin{equation}
 G w = \theta + \sum_{ s \in \S }  \min_{a \in \A} \phi(s,a)^\trans w \mu(s)\ \forall\ w \in \R^d .
\end{equation}
Then $w^\star = \theta + \sum_{ s \in \S }  J^\star(s) \mu(s)$ is a fixed point of $G$ (i.e., $G w^\star = w^\star$), $J^\star(s) = \min_{a \in \A} \phi(s,a)^\trans w^\star$, and
\begin{equation} \label{eqOptFromWd}
 \pi^\star(s) \in \argmin_{a \in \A} \phi(s,a)^\trans w^\star\ \forall\ s \in \S .
\end{equation}
\end{prop}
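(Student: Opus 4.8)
The plan is to show that the feature-space vector $w_d^\star$ exactly reproduces the optimal state-action cost-to-go function $Q^\star$ through the features, and then to feed this identity back into the definition of $G_d$. Concretely, the first step is to establish that
\begin{equation}
Q^\star(s,a) = \phi(s,a)^\trans w_d^\star \quad \forall\ (s,a) \in (\S \setminus \{ s_{goal} \}) \times \A .
\end{equation}
This follows by substituting the linearity relations $c(s,a) = \phi(s,a)^\trans \theta$ and $P(s'|s,a) = \phi(s,a)^\trans \mu(s')$ from \eqref{eqSSPlin} into the Bellman expression \eqref{eqQstar} for $Q^\star$ and factoring $\phi(s,a)^\trans$ out of the sum, which recovers exactly $\phi(s,a)^\trans \left( \theta + \sum_{s' \in \S} \mu(s') J^\star(s') \right) = \phi(s,a)^\trans w_d^\star$.

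Second, I would translate the inner minimization in $G_d$ into $J^\star$. For $s \neq s_{goal}$, the identity above combined with the Bellman optimality equation \eqref{eqBellman} gives $\min_a \phi(s,a)^\trans w_d^\star = \min_a Q^\star(s,a) = J^\star(s)$. For the goal state, I would invoke the convention $\phi(s_{goal},a) = 0$ (justified in the remark following Assumption \ref{assSSPlin}) together with $J^\star(s_{goal}) = 0$, so that $\min_a \phi(s_{goal},a)^\trans w_d^\star = 0 = J^\star(s_{goal})$. Hence $\min_a \phi(s,a)^\trans w_d^\star = J^\star(s)$ holds for every $s \in \S$, and substituting this into the definition of $G_d$ yields $G_d w_d^\star = \theta + \sum_{s \in \S} \mu(s) J^\star(s) = w_d^\star$, establishing the fixed point claim.

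Third, the optimal policy characterization \eqref{eqOptFromWd} falls out of the same identity: for $s \neq s_{goal}$ we have $\argmin_a \phi(s,a)^\trans w_d^\star = \argmin_a Q^\star(s,a) \ni \pi^\star(s)$ by \eqref{eqBellman}, while at $s_{goal}$ every action attains the same (zero) value so the inclusion is trivially satisfied. The argument is essentially a sequence of substitutions, so I do not anticipate a serious obstacle; the only point requiring care is the goal state, where the linearity assumption \eqref{eqSSPlin} is not imposed and must be handled separately via the $\phi(s_{goal},a) = 0$ convention. I would also note that Assumption \ref{assSSPreg} is precisely what guarantees $J^\star$, and hence $w_d^\star$, is well-defined and finite, so that all of the displayed manipulations are valid.
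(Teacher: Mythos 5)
Your proposal is correct and follows essentially the same route as the paper's proof: substitute the linearity relations from \eqref{eqSSPlin} into \eqref{eqQstar} to get $Q^\star(s,a) = \phi(s,a)^\trans w_d^\star$, invoke the Bellman optimality equations \eqref{eqBellman} to obtain $\min_{a} \phi(s,a)^\trans w_d^\star = J^\star(s)$ and the $\argmin$ inclusion, then substitute back into $G_d$. Your explicit handling of $s_{goal}$ via the convention $\phi(s_{goal},a) = 0$ (with $J^\star(s_{goal}) = 0$) is slightly more careful than the paper, which applies the identity ``for any $s \in \S$'' even though \eqref{eqSSPlin} is only stated for $s \neq s_{goal}$ and leaves that convention to a remark.
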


When the model is unknown, we instead must estimate $G w$ from data. Formally, let $\{ s_\tau , a_\tau , s_\tau' \}_{\tau=1}^t$ denote the first $t$ state-action-state triples as in Section \ref{secPrelim}, and define $\Lambda_t = I + \sum_{\tau=1}^t \phi(s_\tau,a_\tau) \phi(s_\tau,a_\tau)^\trans$. Then the regularized least-squares estimate of $G w$ is
\begin{equation} 
\tilde{G} w = {\Lambda}_t^{-1} \sum_{\tau=1}^t \phi(s_\tau,a_\tau) \Big( c(s_\tau,a_\tau) + \min_{a \in \A} \phi(s_\tau',a)^\trans w \Big) .
\end{equation}
Due to Assumption \ref{assSSPlin} and concentration, we should expect $\tilde{G} w \approx G w$ for any (bounded) $w$. Thus, it seems reasonable to find a fixed point $\tilde{w}^\star$ of $\tilde{G}$ and define policies like \eqref{eqOptFromWd}, with $w^\star$ replaced by $\tilde{w}^\star$.

This is roughly our approach, though we will modify $\tilde{G}$ in two ways. First, as is common for LFA, we subtract linear bandit-style bonuses \citep{abbasi2011improved} to encourage exploration. Namely, we consider the following optimistic estimate of $\min_a \phi(s,a)^\trans w$:
\begin{equation} \label{eqFtDefn}
f_t(s,w) = \min_{a \in \A} \left( \phi(s,a)^\trans w - \alpha_t \| \phi(s,a) \|_{\Lambda_t^{-1}}  \right) ,
\end{equation}
where $\alpha_t > 0$ is an exploration parameter. Second, and again common for LFA, we ``clip'' this estimate between $0$ and some $B_t > 0$ (see Remark \ref{remClipping}) to ensure bounded random variables, i.e., we define
\begin{equation} \label{eqGtDefn}
g_t(s,w) = \min \{ \max \{ f_t(s,w) , 0 \} , B_t \} .
\end{equation}
This yields the operator $\hat{G}_t : \R^d \rightarrow \R^d$ given by
\begin{equation}
 \hat{G}_t w = \Lambda_t^{-1} \sum_{\tau=1}^t \phi(s_\tau,a_\tau) \left( c(s_\tau,a_\tau) + g_t(s_\tau' , w ) \right) .
\end{equation}
Thus far, everything has naturally generalized finite horizon LFA. However, in the SSP setting, we will encounter several additional challenges.

\begin{rem}[Challenge 2] \label{remClipping}
In finite horizon LFA, one sets $B_t = H$ (the \emph{known} horizon). Since the optimal value is $[0,H]$-valued, clipping as in \eqref{eqGtDefn} only improves the optimal value estimate. In contrast, the analogous quantity in SSP is $B_\star$, which is \emph{unknown}. Hence, we will need to learn an upper bound $B_t \geq B_\star$ to ensure the clipping does not distort our $J^\star$ estimate.
\end{rem}

\begin{rem}[Challenge 3] \label{remRandomParam}
In light of Remark \ref{remClipping}, $B_t$, and thus $\alpha_t$ (which needs to scale with $B_t$ to ensure optimism), become trajectory-dependent random variables. This stands in contrast to other LFA settings, where the exploration parameter is deterministic.
\end{rem}

\begin{rem}[Challenge 4] \label{remFixedPt}
 In SSP, we need to find fixed points, which we will do by showing the iterates of $\hat{G}_t$ converge (see Remark \ref{remGtConvergence}). In contrast, finite horizon LFA uses a simple backward induction procedure, which basically iterates the operator $H$ times and does not require any sort of convergence.
\end{rem}

To overcome these issues, we break the problem into two parts, which treat Challenges 1-3 and 4, respectively. First, in Section \ref{secRegret}, we assume an oracle provides OAFPs, which we use to solve the regret minimization problem. Second, in Section \ref{secOracle}, we show how to compute OAFPs.

We define OAFPs as follows. In essense, we require the estimate \eqref{eqFtDefn} to be optimistic with respect to $J^\star$ (when $w$ in \eqref{eqFtDefn} is the OAFP), and the vector to be a fixed point of $\hat{G}_t$ up to some tolerance.
\begin{defn}[OAFP] \label{defnOAFP}
We say that $w \in \R^d$ is an optimistic approximate fixed point (OAFP) if
\begin{equation} \label{eqOAFP}
f_t(s,w) \leq J^\star(s)\ \forall\ s \in \S , \quad  \| \hat{G}_t w - w \|_{\Lambda_t} \leq \alpha_t .
\end{equation}
Note that by Cauchy-Schwarz, the latter bound implies
\begin{equation}\label{eqPhiFixedPoint}
| \phi(s,a)^\trans ( \hat{G}_t w - w ) | \leq \alpha_t \| \phi(s,a) \|_{\Lambda_t^{-1}} .
\end{equation}
\end{defn}

Finally, we note that due to the bonuses and clipping, $\hat{G}_t$ need not concentrate near $G$. Instead, Lemma \ref{lemMainEkTail2} in Appendix \ref{appProofGeneral} shows it concentrates near $U_t$, where $U_t w = \theta + \sum_{s \in \S} g_t(s,w) \mu(s)$. More specifically, we show that with high probability, for any bounded $w$,
\begin{equation} \label{eqOpConcMain}
| \phi(s,a)^\trans ( \hat{G}_t w - U_t w ) | = O(  \sqrt{\log t}  ) \| \phi(s,a) \|_{\Lambda_t^{-1}} .
\end{equation}
To prove \eqref{eqOpConcMain}, we use covering arguments to take union bounds over $w$, and the random functions $g_t$. This is similar to \cite{jin2020provably}, though we have the added complication of random (and dependent) $B_t$ and $\alpha_t$. For later use, we also note that by Assumption \ref{assSSPlin},
\begin{align} \label{eqPhiUtMain}
\phi(s,a)^\trans U_t w = c(s,a) + \E_{s'} g_t(s',w) ,
\end{align}
where $\E_{s'}$ is expectation with respect to $s' \sim P(\cdot|s,a)$.

\section{Regret minimization with oracle} \label{secRegret}

We can now describe Algorithm \ref{algMain}, which assumes access to an OAFP oracle -- i.e., a black box that, given $\{ s_\tau , a_\tau , s_\tau' \}_{\tau=1}^t$, returns an OAFP $w_t$ per Definition \ref{defnOAFP}.

{\bf Inputs:} The inputs are a failure probability $\delta$ and a sequence $\{ \kappa_t \}_{t=1}^\infty$ that will be used to shape $\alpha_t$ in \eqref{eqFtDefn} (we cannot define $\alpha_t$ \textit{a priori} due to Remark \ref{remRandomParam}).

{\bf Intervals:} As in tabular SSP, we split time into intervals indexed by $l$. The $l$-th interval will end at time $M_l$, which will either correspond to the end of an episode or an intra-episode policy update (see Remark \ref{remInfReg}). At each such $M_l$, we will call the oracle for an OAFP $w_{M_l}$, which will define the policy executed in interval $l+1$.

{\bf Initialization:} Lines \ref{lnInitStart}-\ref{lnInitEnd} initialize the regularizer $\Lambda_0 = I$, a (candidate) $B_\star$ upper bound $B_0$ (see Remark \ref{remClipping}), and the time and interval indices $t$ and $l$. We also set $w_0 = \alpha_0 = M_0 = 0$ to ensure the forthcoming notation is well-defined.

{\bf Episodic protocol:} Lines \ref{lnMainEpStart}-\ref{lnEpProtocol}, \ref{lnTriples}-\ref{lnLamUpdate}, and \ref{lnIncrementHT} implement the protocol from Section \ref{secPrelim}. Additionally, Line \ref{lnAction} chooses the action to minimize the optimistic cost-to-go estimate \eqref{eqFtDefn} with respect to the most recent OAFP $w_{M_{l-1}}$, and Line \ref{lnLamUpdate} updates $\Lambda_t$. When the last episode ends (if it ever does), Lines \ref{lnMainTermCond}-\ref{lnMainNoTerm} record the total number of intervals $L$ and the total time $T$.

{\bf Cost-to-go bound:} If the cost-to-go estimate exceeds $B_{t-1}$, then since $f_{M_{l-1}}(s_t',w_{M_{l-1}}) \leq B_\star$ by Definition \ref{defnOAFP}, we know $B_{t-1}$ was \textit{not} an upper bound for $B_\star$, so we double it (Lines \ref{lnBtDoubleCond}-\ref{lnBtDouble}). Otherwise, we let $B_t = B_{t-1}$ (Lines \ref{lnBtKeepCond}-\ref{lnBtKeep}). Having defined $B_t$, we use it and the input $\kappa_t$ to define $\alpha_t$ (Line \ref{lnAlpha}).
 
{\bf Policy update conditions:} Line \ref{lnUpdateCond} checks four conditions that require policy updates. The first three cause updates after the first observation, an episode ends, or $B_{t-1}$ doubles. The fourth, taken from \cite{abbasi2011improved}, is that the determinant of $\Lambda_t$ doubles. The idea is that, before this doubling occurs,
\begin{equation} \label{eqDoublingConsequence}
\| \phi(s,a) \|_{ \Lambda_t^{-1} } \leq \sqrt{2} \| \phi(s,a) \|_{ \Lambda_{M_{l-1}}^{-1} }\ \forall\ (s,a) \in \S \times \A ,
\end{equation}
which is analogous to tabular RL algorithms that wait to update until the number of visits to some $(s,a)$ double (e.g., \cite{jaksch2010near}).

{\bf Policy update:} If any of the conditions are met, Lines \ref{lnOracle}-\ref{lnIntEnd1} call the oracle for an OAFP $w_{M_l}$ and end the current interval. Note that in the next interval, the policy in Line \ref{lnAction} will use this OAFP.

\IncMargin{1.2em}
\begin{algorithm}[t] \label{algMain}

\caption{Regret minimization with oracle}

\KwIn{$\delta \in (0,1)$, $\{ \kappa_t \}_{t=1}^\infty \subset [1,\infty)$}

$\Lambda_0 = I_d$ (regularizer), $B_0 = c_{min}$ ($B_\star$ bound) \label{lnInitStart}

$w_0 = 0_d$ (OAFP), $\alpha_0 = 0$ (explore parameter)

$M_0 = 0$ (time $0$-th interval ended) 

$t= 1$ (current time), $l = 1$ (current interval) \label{lnInitEnd}

\For{episode $k = 1 , \ldots , K$}{ \label{lnMainEpStart}

$h=1$ (current step), observe $s_h^k \in \S\setminus \{ s_{goal} \}$ 

\While{$s_h^k \neq s_{goal}$}{ 

Choose $a_h^k \in \A$ to minimize \label{lnAction}
\begin{equation}
\phi(s_h^k,a_h^k)^\trans w_{M_{l-1}} - \alpha_{M_{l-1}} \| \phi(s_h^k,a_h^k) \|_{\Lambda_{M_{l-1}}^{-1}}
\end{equation}

Observe $c(s_h^k,a_h^k)$ and $s_{h+1}^k \sim P(\cdot|s_h^k,a_h^k)$ \label{lnEpProtocol}

\uIf{$k = K$ and $s_{h+1}^k = s_{goal}$}{ \label{lnMainTermCond}

$L = l$ (total number intervals), $M_L = t$

$T = t$ (total time elapsed)

}
\Else{ \label{lnMainNoTerm}

$(s_t,a_t,s_t') = (s_h^k,a_h^k,s_{h+1}^k)$ \label{lnTriples}

$\Lambda_t = \Lambda_{t-1} + \phi(s_t,a_t) \phi(s_t,a_t)^\trans$ \label{lnLamUpdate}

\If{$f_{M_{l-1}}(s_t',w_{M_{l-1}}) > B_{t-1}$}{ \label{lnBtDoubleCond}

$B_t = 2 B_{t-1}$ \label{lnBtDouble}

} 
\Else{ \label{lnBtKeepCond}

$B_t = B_{t-1}$ \label{lnBtKeep}

} 

$\alpha_t = (B_t+1) \kappa_t \sqrt{ \log( t (B_t+1) \kappa_t / \delta)}$ \label{lnAlpha}

\If{$t=1$ \Or $s_t' = s_{goal}$ \Or $B_t \neq B_{t-1}$ \Or $det(\Lambda_t) \geq 2 \det(\Lambda_{M_{l-1}})$}{ \label{lnUpdateCond}

Call oracle for OAFP $w_t$ (Def.\ \ref{defnOAFP}) \label{lnOracle}

$M_l = t$, $l \leftarrow l+1$ \label{lnIntEnd1}

} \label{lnMainPolicyEnd}

$t \leftarrow t+1$, $h \leftarrow h+1$ \label{lnIncrementHT}

}

}

}

\end{algorithm}
\DecMargin{1.2em}

We now present the main result of this section (Theorem \ref{thmRegret}), which assumes the input $\kappa_t$ to Algorithm \ref{thmRegret} scales as $t^\lambda$ for some $\lambda \in [0,\frac{1}{2})$. Provided this holds, the theorem shows that Algorithm \ref{algMain} obtains $K^{\frac{1}{2} + \lambda}$ regret, i.e., smaller $\kappa_t$ yields lower regret. The tradeoff is that smaller $\kappa_t$ means smaller $\alpha_t$ (see Line \ref{lnAlpha}), so lower regret requires the OAFP to be a tighter fixed point and yield sharper cost-to-go estimates (see Definition \ref{defnOAFP}). Note this is only a computational issue (not a statistical one), because Theorem \ref{thmRegret} shows that OAFPs exist even when $\lambda = 0$ (where we obtain the optimal $\sqrt{K}$ rate). Hence, the tradeoff is not too relevant in this section, though it will be in Section \ref{secOracle}.

\begin{thm}[General result]\label{thmRegret}
Suppose Assumptions \ref{assSSPreg} and \ref{assSSPlin} hold and $\kappa_t \in [ 9 d , \Psi t^\lambda \log (t+1) ]$ for some $\Psi> 0$ independent of $t$ and some absolute constant $\lambda \in [0,\frac{1}{2})$. With probability at least $1-\delta$, there exists an OAFP for all $t \in [T]$ and
\begin{align}
R(K) = \tilde{O} \Big( & ( B_\star^{\frac{3}{2}+\lambda} + B_\star^{\frac{1}{2}+\lambda} ) d^{\frac{1}{2}} \Psi ( K / c_{min} )^{\frac{1}{2}+\lambda} \\
&  + (B_\star+1)^{\frac{2}{1-2\lambda}} d^{\frac{1}{1-2\lambda} } \Psi^{\frac{2}{1-2\lambda}} c_{min}^{-\frac{1+2\lambda}{1-2\lambda}} \Big) .
\end{align}
\end{thm}

Thus, Algorithm \ref{algMain} ensures $\sqrt{K}$ regret when given an oracle that returns OAFPs for $\lambda = 0$. This is analogous to \cite{wei2021learning,zanette2020learning}, which provide $\sqrt{K}$ regret for average cost and finite horizon problems when given certain optimization oracles. More specifically, in the best case $\kappa_t = 9d$ permitted by Theorem \ref{thmRegret}, we have the following corollary.
\begin{cor}[Best case] \label{corBestCase}
Suppose Assumptions \ref{assSSPreg} and \ref{assSSPlin} hold, $B_\star \geq 1$, and $\kappa_t = 9d$. With probability at least $1-\delta$, there exists an OAFP for all $t \in [T]$ and
\begin{equation}
R(K) = \tilde{O} \Big( \sqrt{ B_\star^3 d^3 K / c_{min} } + B_\star^{2} d^3 / c_{min}\Big) .
\end{equation}
\end{cor}

Note Corollary \ref{corBestCase} also assumes $B_\star \geq 1$, which is natural (otherwise, $J^\star$ can arbitrarily smaller than the cost upper bound $1 \geq c(s,a)$). Of course, the $B_\star < 1$ case can be recovered from Theorem \ref{thmRegret}. Forthcoming results also assume $B_\star \geq 1$, but we report bounds for the general case in the appendix.

There are no existing LFA bounds for SSP to compare with, so we consider the tabular case. Here we obtain $R(K) = \tilde{O} ( \sqrt{ B_\star^3 S^3 A^3 K / c_{min} } )$ for large $K$, which matches the best Hoeffding algorithms in terms of $B_\star$, $K$, and $c_{min}$ (see Section \ref{secRelated}). Our scaling $(SA)^{3/2}$ is worse but can be improved to within a $\sqrt{A}$ factor, i.e., to $SA$ (see Remark \ref{remSharpeningTabular2} in Appendix \ref{appRegretProof}). We note a similar gap arises when specializing \cite{jin2020provably}'s bound to the tabular finite horizon setting.

\begin{proof}[Theorem \ref{thmRegret} proof sketch]
The proof is in Appendix \ref{appRegretProof} but we discuss the key ideas for the regret bound here. For simplicity, we set $\lambda = 0$ and show $R(K) = O ( \sqrt{K} )$ while hiding terms independent of $K$. Again for simplicity, we use $f_t$ and its clipping $g_t$ interchangeably.

{\bf Regret decomposition:} Fix $\tilde{T} \in \N$ and let $\tilde{K}$ and $\tilde{L}$ denote the number of episodes and intervals completed by time $T \wedge \tilde{T}$. In light of Remark \ref{remInfReg}, we will bound regret by time $T \wedge \tilde{T}$, show it is finite, and let $\tilde{T} \rightarrow \infty$. More specifically, let $\tilde{R}(\tilde{T}) = \tilde{R}_1(\tilde{T}) + \tilde{R}_2(\tilde{T})$, where we define the per-interval regret
\begin{equation} \label{eqPerInterval}
\tilde{R}_1(\tilde{T}) = \sum_{l=0}^{\tilde{L}-1}  \sum_{t = 1+M_l}^{M_{l+1}}  c(s_t,a_t) - J^\star ( s_{1+M_l} )  ,
\end{equation}
and the ``excess regret'' from intra-episode updates
\begin{equation}
\tilde{R}_2(\tilde{T}) = \sum_{l=0}^{\tilde{L}-1} J^\star ( s_{1+M_l} ) - \sum_{k=1}^{\tilde{K}} J^\star(s_1^k) .
\end{equation}

{\bf Cost-to-go bound:} To bound both terms, we require a bound on $B_t$. Since $f_{M_{l-1}}(s_t',w_{M_{l-1}}) \leq B_\star$ by Definition \ref{defnOAFP}, as soon as $B_{t-1}$ exceeds $B_\star$, the condition Line \ref{lnBtDoubleCond} will stop occurring. This implies $B_t \leq 2 B_\star$.

{\bf Per-interval regret:} First note that by \eqref{eqOAFP},
\begin{equation} \label{eqPerIntervalOpt}
\tilde{R}_1(\tilde{T}) \leq \sum_{l=0}^{\tilde{L}-1}  \sum_{t = 1+M_l}^{M_{l+1}}  c(s_t,a_t) - f_{M_l}(s_{1+M_l}, w_{M_l} ) .
\end{equation}
Now fix $l$ and $t$ as the double summation. Then by the chosen policy (Line \ref{lnAction} of Algorithm \ref{algMain}), we know
\begin{equation}
f_{M_l}(s_t,w_{M_l}) \approx \phi(s_t,a_t)^\trans w_{M_l} ,
\end{equation}
where $\approx$ hides the bonus term $\alpha_{M_l} \| \phi(s_t,a_t) \|_{\Lambda_{M_l}^{-1}}$. Again up to the bonus, \eqref{eqPhiFixedPoint} and \eqref{eqOpConcMain} imply
\begin{equation}
\phi(s_t,a_t)^\trans w_{M_l} \approx \phi(s_t,a_t)^\trans U_t w_{M_l} .
\end{equation}
Finally, by \eqref{eqPhiUtMain}, up to a conditionally zero-mean term,
\begin{equation}
\phi(s_t,a_t)^\trans U_t w_{M_l} \approx c(s_t,a_t) + f_{M_l}(s_{t+1}, w_{M_l}) .
\end{equation}
Combining the last three inequalities, we obtain
\begin{equation}
c(s_t,a_t) - f_{M_l}(s_t, w_{M_l}) \approx - f_{M_l}(s_{t+1}, w_{M_l}) .
\end{equation}
Iterating in \eqref{eqPerIntervalOpt}, this implies $\tilde{R}_1(\tilde{T}) \approx 0$, where $\approx$ hides a sum of $T \wedge \tilde{T}$ zero-mean terms and bonuses. Both are $\tilde{O}(\sqrt{ T \wedge \tilde{T} })$, because $\alpha_t = \tilde{O}(1)$ by $\lambda = 0$ and the above proof that $B_t \leq 2 B_\star = O(1)$.

{\bf Excess regret:} By definition, $\tilde{R}_2(\tilde{T}) \leq B_\star(\tilde{L} - \tilde{K})$, where $\tilde{L} - \tilde{K}$ is the number of intra-episode episodes, i.e., the number of times $B_t$ or $det(\Lambda_t)$ double. The former occurs $O(1)$ times since $B_t = O(1)$ and the latter $\tilde{O}(1)$ times since $det(\Lambda_t) = O(t)$. Thus, $\tilde{R}_2(\tilde{T})$ is dominated by $\tilde{R}_1(\tilde{T})$ (in terms of $\tilde{T}$).

{\bf Completing the proof:} So far, we have argued $\tilde{R}(\tilde{T}) = O ( \sqrt{ T \wedge \tilde{T}} )$. By definition, we also know
\begin{equation}
(T \wedge \tilde{T} ) c_{min} \leq \sum_{t=1}^{T \wedge \tilde{T}} c(s_t,a_t) = \tilde{R}(\tilde{T}) + \sum_{k=1}^{\tilde{K}} J^\star(s_1^k) .
\end{equation}
Combining, we obtain $T \wedge \tilde{T}  = O ( \sqrt{T \wedge \tilde{T}}  + K )$, which implies $T \wedge \tilde{T} = O(K)$. Thus, choosing $\tilde{T} \gg K$, we conclude $T = T \wedge \tilde{T} = O(K)$, so $R(K) = \tilde{R}(\tilde{T})$ and $\sqrt{ T \wedge \tilde{T}} = O(\sqrt{K})$. Plugging into the bound $\tilde{R}(\tilde{T}) = O ( \sqrt{ T \wedge \tilde{T}} )$ completes the proof.
\end{proof}

\begin{rem}[Finite $T$] \label{remMistake}
It is tempting to choose $\tilde{T} = \infty$ at the start of the proof, show $T = O ( \sqrt{T} + K )$ as above, and conclude $T = O(K) < \infty$. However, such logic is circular: it \emph{assumes} $T$ is finite (e.g., to justify adding/subtracting $T$ terms) in order to \emph{prove} it is finite. We point out this mistake (which some tabular SSP papers have made) so future work can avoid it.
\end{rem}

\begin{rem}[Complexity] \label{remComp}
Algorithm \ref{algMain}'s runtime is dominated by computation of $\{ \pi(s_t) \}_{t=1}^T$, which is $O( A d^2 T)$ when $\Lambda_t^{-1}$ and $det(\Lambda_t)$ are iteratively updated. In the proof, we show $T$ is polynomial in all parameters (see Remark \ref{remTbound} in Appendix \ref{appRegretProof}), so given an efficient oracle, Algorithm \ref{algMain} is itself efficient.
\end{rem}

\section{Oracle implementation} \label{secOracle}

We next discuss how to compute OAFPs. The obvious approach is to iterate $\hat{G}_t$. This indeed yields optimistic estimates, i.e., the first inequality in \eqref{eqOAFP} will hold.
\begin{lem}[Informal version of Corollary \ref{corGtOptimism} from Appendix \ref{appOracleProof}] \label{lemGtOptimismInformal} 
With high probability, if $\alpha_t = \Omega(\sqrt{\log t})$,
\begin{equation}
f_t(s, \hat{G}_t^{n-1} 0 ) \leq J^\star(s)\ \forall\ s \in \S , n \in \N , t \in [T] .
\end{equation}
\end{lem}
\begin{proof}[Proof sketch] 
When $n = 0$, the bound is immediate, since $f_t(s,\hat{G}_t^0 0) = f_t(s,0) \leq 0$. If true for $n$, then
\begin{equation}
g_t(s' , \hat{G}_t^{n-1} 0 ) \leq \max \{ f_t(s' , \hat{G}_t^{n-1} 0 ) , 0 \} \leq J^\star(s') .
\end{equation}
Thus, by \eqref{eqPhiUtMain} and Bellman optimality \eqref{eqBellman},
\begin{equation}
\phi(s,a)^\trans U_t ( \hat{G}_t^{n-1} 0 ) \leq c(s,a) + \E_{s'} J^\star(s') = Q^\star(s,a) ,
\end{equation}
so by \eqref{eqOpConcMain}, $\alpha_t = \Omega(\sqrt{\log t})$, and \eqref{eqBellman},
\begin{align}
f_t(s, \hat{G}_t^n 0 ) & \leq \min_{a \in \A} Q^\star(s,a) = J^\star(s) . \qedhere
\end{align}
\end{proof}

\IncMargin{1.2em}
\begin{algorithm}[t]
\caption{Computing OAFPs} \label{algIterate}

Set $n=1$, compute $\hat{G}_t^n 0 = \hat{G}_t 0$ and $\hat{G}_t^{n-1} 0 = 0$

\While{$\| \hat{G}_t^n 0 - \hat{G}_t^{n-1} \|_{\Lambda_t} > \alpha_t$}{

Set $n \leftarrow n+1$, compute $\hat{G}_t^n 0 = \hat{G}_t ( \hat{G}_t^{n-1} 0 )$

}

\Return{$w_t = \hat{G}_t^{n-1} 0$}
\end{algorithm}
\DecMargin{1.2em}

We thus propose Algorithm \ref{algIterate} for OAFP computation, which iterates $\hat{G}_t$ until the second inequality in \eqref{eqOAFP} holds (the first holds by Lemma \ref{lemGtOptimismInformal}). Our next theorem shows that, for appropriate $\kappa_t$, it terminates in polynomial iterations. Combined with Remark \ref{remComp}, this shows Algorithms \ref{algMain} and \ref{algIterate} provide an end-to-end statistically/computationally efficient scheme that uses stationary policies -- a \textit{first} in the LFA literature.

\begin{thm}[End-to-end algorithm]\label{thmProb}
Suppose Assumptions \ref{assSSPreg} and \ref{assSSPlin} hold, $B_\star \geq 1$, $\kappa_t = 54 d t^{1/3}$, and Algorithm \ref{algIterate} is the oracle. With probability at least $1-\delta$, Algorithm \ref{algIterate} returns an OAFP within $O(d t^{1/6})$ iterations for each $t \in [T]$ it is called, and
\begin{equation}
R(K) = \tilde{O} \Big( B_\star^{\frac{11}{6}} d^{\frac{3}{2}} ( K / c_{min} )^{\frac{5}{6}} + B_\star^6 d^9 c_{min}^{-5} \Big) .
\end{equation}
\end{thm}
\begin{proof}[Proof sketch]
The proof (and those of Theorems \ref{thmProp} and \ref{thmOrth}) can be found in Appendix \ref{appOracleProof}. Given Theorem \ref{thmRegret} and Lemma \ref{lemGtOptimismInformal}, the remaining challenge is to show Algorithm \ref{algIterate} terminates, i.e., $\| \hat{G}_t^n 0 - \hat{G}_t^{n-1} 0 \|_{\Lambda_t} \leq \alpha_t$ for some $n = O ( dt^{1/6} )$. Equivalently, if we ignore the regularizer, then by definition of $\|\cdot\|_{\Lambda_t}$, we aim to show
\begin{align} \label{eqExploredGoal}
\sum_{\tau=1}^t ( \phi(s_\tau,a_\tau)^\trans ( \hat{G}_t^{n} 0 - \hat{G}_t^{n-1} 0 ) )^2 \leq \alpha_t^2 .
\end{align}
To bound the $\tau$-th summand, we show $U_t$ converges, $\hat{G}_t$ tracks $U_t$, and use the triangle inequality.

{\bf $U_t$ converges:} \cite[Lemma 4.3]{bonet2007speed} implies the standard Bellman iterates converge at rate $\frac{SA}{n}$. By \eqref{eqPhiUtMain}, $\phi(s,a)^\trans U_t^n 0$ are basically the same iterates (up to bonuses and clipping), which means they converge at rate $\frac{SA}{n}$ as well. The constant $SA$ is infeasible, but with a more careful analysis, we can exploit the low rank structure to show (in terms of $d$ and $n$)
\begin{equation} \label{eqUtConvergeMain}
\max_{(s,a) \in \S \times \A} | \phi(s,a)^\trans ( U_t^n 0 - U_t^{n-1} 0 ) | = O \Big( \frac{ d^2}{n} \Big) .
\end{equation}

{\bf $\hat{G}_t$ tracks $U_t$:} Let $x_n = \hat{G}_t^n 0$ and $y_n = U_t^n 0$. By \eqref{eqOpConcMain},
\begin{equation}
| \phi(s,a)^\trans ( x_{n+1} - U_t x_n )| = \tilde{O}(1) | \phi(s,a) \|_{\Lambda_t^{-1}} .
\end{equation}
On the other hand, \eqref{eqPhiUtMain} implies
\begin{equation}
| \phi(s,a)^\trans ( U_t x_n - y_{n+1} )  | \leq \E_{s'} | g_t(s', x_n) - g_t(s', y_n) | .
\end{equation}
Combining and using the triangle inequality, we obtain
\begin{align}
&  | \phi(s,a)^\trans ( x_{n+1}   - y_{n+1} ) |= \tilde{O}(1) | \phi(s,a) \|_{\Lambda_t^{-1}} \\
& \qquad + \E_{s'} | g_t(s', x_n) - g_t(s', y_n)  | . \label{eqExploredSketch} 
\end{align}
Finally, a straightforward calculation yields
\begin{equation} 
| g_t(s', x_n) - g_t(s', y_n)  | \leq \max_{a'} | \phi(s',a')^\trans (x_n-y_n) | .
\end{equation}
This suggests bounding the average in \eqref{eqExploredSketch} by the max (over $s' \in \S$) and iterating. However, such a bound involves $\max_{(s,a) \in \S \times \A} \| \phi(s,a) \|_{\Lambda_t^{-1}}$, which is too large. The crucial idea is to take max only over \textit{explored} states, namely, $s'$ such that $\max_{a'} \| \phi(s',a') \|_{\Lambda_t^{-1}} \ll \alpha_t^{-1}$. The key implication is that if $s'$ is \textit{un}explored, then $\alpha_t \| \phi(s',a') \|_{\Lambda_t^{-1}} \gg 1$ for some $a'$, so $f_t(s',x_n) \leq 0$ by definition and $g_t(s',x_n) = 0$ by clipping (and similar for $y_n$). This insight allows us to iterate the above, but only over $(s,a) \in \S_t \times \A$, to obtain
\begin{equation}
\max_{(s,a) \in \S_t \times \A} | \phi(s,a)^\trans (x_n-y_n) | = \tilde{O} ( n / \alpha_t ) .
\end{equation}
Plugging into \eqref{eqExploredSketch} and recalling $x_n = \hat{G}_t^n 0$ and $y_n = U_t^n 0$, this extends to \textit{all} $(s,a) \in \S \times \A$ as follows:
\begin{equation} \label{eqGtracksUmain}
| \phi(s,a)^\trans ( \hat{G}_t^n 0 - U_t^n 0 ) | = \tilde{O} \Big( \| \phi(s,a) \|_{\Lambda_t^{-1}} + \frac{n}{\alpha_t} \Big) .
\end{equation}

{\bf Completing the proof:} By \eqref{eqUtConvergeMain} and \eqref{eqGtracksUmain}, the $\tau$-th summand in \eqref{eqExploredGoal} is $\tilde{O}( \| \phi(s,a) \|_{\Lambda_t^{-1}}^2 + ( \frac{n}{\alpha_t} + \frac{1}{n} )^2 )$ (in terms of $n$ and $t$). This yields a sum of squared bonuses, which is independent of $t$, plus $O ( t ( \frac{1}{n} + \frac{n}{\alpha_t} )^2 )$. Finally, since $\alpha_t = O ( t^{1/3} )$ by choice of $\kappa_t$, after $n = O(t^{1/6})$ iterations, $t ( \frac{1}{n} + \frac{n}{\alpha_t} )^2 = O ( t^{2/3} ) = O ( \alpha_t^2 )$.
\end{proof}

\begin{rem}[Clipping]
Most LFA papers use clipping to show an event like \eqref{eqOpConcMain} occurs with high probability, then bound regret on this event, after which clipping becomes somewhat of a nuisance. In contrast, the proof sketch \emph{exploits} it on the high probability event.
\end{rem}

\begin{rem}[Convergence] \label{remGtConvergence}
The proof sketch shows $\| x_t \|_{\Lambda_t} = O ( t^{1/3} )$, where $x_t = \hat{G}_t^{n_t} 0 - \hat{G}_t^{n_t-1} 0$ is the fixed point error after $n_t = O(t^{1/6})$ iterations. Note the norm equivalence $\|x_t\|_{\Lambda_t} = O ( \sqrt{t} ) \|x_t\|_2$ always holds, so if it is reasonably tight (i.e., if $\| x_t \|_2 = o(t^{-1/3}) \| x_t \|_{\Lambda_t}$), then $x_t \rightarrow 0$ as $t \rightarrow \infty$ (i.e., Algorithm \ref{algIterate} yields a fixed point asymptotically in $t$).
\end{rem}

If we strengthen Assumption \ref{assSSPreg} to mandate that \textit{all} stationary policies are proper, we can improve Theorem \ref{thmProb}'s regret bound. While this assumption is arguably strong, it seems perfectly reasonable for, e.g., games that eventually end. The benefit is that the Bellman operator $\mathcal{T} : \S \times \A \rightarrow \R$ given by
\begin{equation} \label{eqOperator}
(\mathcal{T} Q )(s,a)= c(s,a) + \E_{s'} \min_{a' \in \A} Q(s',a') 
\end{equation}
is contractive. More precisely, for some $\rho \in (0,1)$ and $\omega(s) > 0$, if $\| x \| = \max_{(s,a) \in \S \times \A} \omega(s) | x(s,a) |$, then
\begin{align} \label{eqContraction}
\| \mathcal{T} Q_1 - \mathcal{T} Q_2 \| \leq \rho \| Q_1 - Q_2 \| .
\end{align}
Define $\chi = \max_{s \in \S } \omega(s) / \min_{s \in \S } \omega(s)$. Assuming nontrivial upper bounds $\bar{\rho} \in [\rho,1)$ and $\bar{\chi} \in [\chi,\infty)$ are known, our next result establishes $K^{3/4}$ regret.
\begin{thm}[All proper] \label{thmProp}
Suppose Assumptions \ref{assSSPreg} and \ref{assSSPlin} hold, $B_\star \geq 1$, all stationary policies are proper, $\kappa_t = 54 d t^{1/4} \sqrt{N_t}$ with $N_t = \log(3 t \bar{\chi}) / (1-\bar{\rho})$, and Algorithm \ref{algIterate} is the oracle. With probability at least $1-\delta$, Algorithm \ref{algIterate} returns an OAFP within $N_t$ iterations for each $t \in [T]$ it is called, and
\begin{align}
R(K) & = \tilde{O} \Big(  B_\star^{\frac{7}{4}} d^{\frac{3}{2}} ( K / c_{min} )^{\frac{3}{4}} N_t^{1/2} + B_\star^4 d^6 N_t^2 c_{min}^{-3} \Big) .
\end{align}
\end{thm}
\begin{proof}[Proof sketch]
Recall in the Theorem \ref{thmProb} proof sketch, we showed $\| \hat{G}_t^n 0 - \hat{G}_t^{n-1} \|_{\Lambda_t} = O ( \sqrt{t} ( \frac{1}{n} + \frac{n}{\alpha_t } ) )$, where $\frac{1}{n}$ was the $U_t$ convergence rate. Under the stronger assumption of Theorem \ref{thmProp}, $U_t$ inherits a contraction property from \eqref{eqContraction}, which improves the rate to $\rho^n$. Hence, after $N_t$ iterations, we have $\| \hat{G}_t^n 0 - \hat{G}_t^{n-1} \|_{\Lambda_t} = \tilde{O} ( \sqrt{t} / \alpha_t ) = \tilde{O}(\alpha_t)$ by the choice $\kappa_t = \tilde{O}(t^{1/4})$.
\end{proof}

Finally, we demonstrate a nontrival case where Algorithm \ref{algIterate} returns OAFPs for the best case $\kappa_t$.
\begin{thm}[Orthogonal features] \label{thmOrth}
Suppose Assumptions \ref{assSSPreg} and \ref{assSSPlin} hold, $B_\star \geq 1$, $\{ \phi(s,a) \}_{(s,a) \in \S \times \A} \subset \{ q_i \}_{i=1}^d$ for some orthonormal set $\{ q_i \}_{i=1}^d$, $\kappa_t = 9d$, and Algorithm \ref{algIterate} is the oracle. With probability at least $1-\delta$, Algorithm \ref{algIterate} returns an OAFP within $\tilde{O}(t)$ iterations for each $t \in [T]$ it is called, and regret is bounded as in Corollary \ref{corBestCase}.
\end{thm}
\begin{proof}[Proof sketch]
The additional assumption yields an explicit expression for $\Lambda_t^{-1}$, which allows us to show $\hat{G}_t$ itself is contractive. This enables a direct convergence proof, i.e., without comparing to the iterates of $U_t$.
\end{proof}

\section{Extensions} \label{secExtensions}

Before closing, we mention some extensions of our results. We defer the details to Appendix \ref{appExtensions}.

{\bf Generalizing Theorem \ref{thmProp}:} When the upper bounds $\bar{\chi}$ and $\bar{\rho}$ are unavailable, we can instead set $N_t = t^{2 \gamma}$ for some absolute constant $\gamma \in (0,\frac{1}{4})$ and modify Algorithm \ref{algIterate} to terminate after $N_t$ iterations (if it has not already). This approach is efficient by design, returns OAFPs for $t \geq \Gamma = \tilde{O} ( ( \frac{ \log \chi }{ 1-\rho } )^{\frac{1}{2\gamma}} )$, and (combined with Algorithm \ref{algMain}) achieves the Theorem \ref{thmRegret} regret bound with $\lambda = \frac{1}{4} + \gamma$ and an additive $\Gamma$ term.

{\bf Generalizing Theorem \ref{thmOrth}:} When $\{ \phi(s,a) \}$ is not orthogonal but there at most $d'$ unique features, they can be orthogonalized to recover the $\sqrt{K}$ regret bound from Theorem \ref{thmOrth}, with $d'$ replaced by $d$. This is efficient if $d' \ll SA$, which is reminiscent of state aggregation. 

{\bf Zero/vanishing costs:} Suppose we modify Assumption \ref{assSSPreg} to allow for $c_{min} = 0$, which is the minimal assumption in tabular SSP (the upper bound $c(s,a) \leq 1$ can be easily generalized). In this setting, as in the tabular case, we define regret with respect to the optimal \textit{proper} policy $\pi_{\text{prop}}^\star$. We use the same algorithms but replace $c(s,a)$ with $c(s,a) + \eta$ for some small perturbation $\eta > 0$ in the definition of $\hat{G}_t$, invoke Theorem \ref{thmRegret} to bound the regret of this algorithm with respect to the optimal policy in the perturbed SSP (which remains linear), and compare the cost-to-go of the latter with that of $\pi_{\text{prop}}^\star$. With $\kappa_t$ scaling as $t^\lambda$ for some $\lambda \in [0,\frac{1}{2})$ (as in Theorem \ref{thmRegret}) and $\eta$ as $K^{(2\lambda-1)/(2\lambda+3)}$, this yields $K^{(4\lambda+2)/(2\lambda+3)}$ regret.\footnote{If $K$ is unknown, we can use a standard doubling trick.} Since $\frac{4\lambda+2}{2\lambda+3} < 1$ for any $\lambda \in [0,\frac{1}{2})$, Algorithms \ref{algMain}-\ref{algIterate} with the Theorem \ref{thmProb} parameters obtain \textit{statistical/computational efficiency with stationary policies under minimal SSP assumptions}. Note this approach also works if Assumption \ref{assSSPreg} holds but $c_{min}$ vanishes in $K$. For example, Corollary \ref{corBestCase} only promises linear regret when $c_{min} = K^{-1}$, but choosing $\eta = K^{-1/3}$ ensures $K^{2/3}$ regret.

\begin{rem}[$c_{min}^{-1}$ dependence] \label{}
As seen above, the $c_{min}^{-1}$ dependence of the leading term in our regret bound inflates the scaling in $K$ when dealing with small costs. Consequently, it would be ideal if this term was independent of $c_{min}^{-1}$. In the special case of tabular SSP, avoiding this issue seems to require Bernstein-style confidence sets, which for LFA have only been studied recently and only for simple finite horizon problems (see Section \ref{secRelated}). Given the unique LFA challenges that arise for SSP (see Remarks \ref{remInfReg}, \ref{remClipping}, \ref{remRandomParam}, and \ref{remFixedPt}), we leave such bounds for future work.
\end{rem}

\section{Conclusion} \label{secConclusion}

In this paper, we presented the first algorithms and regret bounds for SSP with LFA, and more generally, the first efficient LFA algorithm that uses stationary policies. Addressing the remaining statistical/computational gap (i.e., proving $\sqrt{K}$ regret in general) is an important open problem. Given the modular nature of the paper, one solution approach would be to combine our results with an improved oracle. 

\textit{Broader societal impact:} This paper is theoretical and has no immediate societal impact. Nevertheless, RL focuses on automated decision making, and training data can inject bias into these decisions. Care should be taken to minimize this bias when using our (or any other) algorithms in practice.

\section*{Acknowledgements}

This work was partially supported by ONR Grant N00014-19-1-2566, ARO Grant ARO W911NF-19-1-0379, NSF/USDA Grant AG 2018-67007-28379, and NSF Grants 1910112, 2019844, 1704970, and 1934986.

\bibliography{references}

\newpage \appendix \onecolumn \allowdisplaybreaks

\section{Section \ref{secExtensions} details} \label{appExtensions}

\subsection{Generalizing Theorem \ref{thmProp}} \label{appGenThmProp}

As discussed in Section \ref{secExtensions}, we can use the following OAFP oracle, which modifies Algorithm \ref{algIterate} by returning the $N_t$-th iterate if it reaches the $N_t$-th iteration. Let $N_t = t^{2 \gamma}$ for some absolute constant $\gamma \in (0,\frac{1}{4})$ and set $\kappa_t = 54 d t^{\frac{1}{4}} \sqrt{N_t}$ as in Theorem \ref{thmProp}. We show in Appendix \ref{appOracleProof} (see Remark \ref{remThmProbGen}) that with probability at least $1-\delta/2$, for any $t \geq (\log(3 t \chi) / (1-\rho) )^{\frac{1}{2\gamma}}$ that Algorithm \ref{algIterateTimeout} is called, it returns an OAFP within $t^{2 \gamma}$ iterations. 

Now suppose we run Algorithm \ref{algMain} with Algorithm \ref{algIterateTimeout} as the oracle. Let $\Gamma = \tilde{O} ( ( \frac{ \log \chi }{ (1-\rho ) } )^{\frac{1}{2\gamma}} )$. Then for the first $\Gamma$ time steps, Algorithm \ref{algIterateTimeout} need not return an OAFP (though it will terminate, so everything is well-defined) but does thereafter. Using Assumption \ref{assSSPreg}, we bound regret by $\Gamma$ for the first $\Gamma$ time steps, and by modifying the proof of Theorem \ref{thmRegret}, we can bound regret by $K^{\frac{3}{4}+\gamma}$ thereafter (in terms of $K$). Thus, regret will scale as $K^{\frac{3}{4} + \gamma }$ for this algorithm, with a second-order term $\Gamma$ in addition to the one from Theorem \ref{thmRegret}.

\IncMargin{1.2em}
\begin{algorithm}
\caption{Computing OAFPs with iteration limit} \label{algIterateTimeout}

Set $n=1$, compute $\hat{G}_t^n 0 = \hat{G}_t 0$ and $\hat{G}_t^{n-1} 0 = 0$

\While{$\| \hat{G}_t^n 0 - \hat{G}_t^{n-1} \|_{\Lambda_t} > \alpha_t$ \And $n \leq N_t$ }{

Set $n \leftarrow n+1$, compute $\hat{G}_t^n 0 = \hat{G}_t ( \hat{G}_t^{n-1} 0 )$

}

\Return{$w_t = \hat{G}_t^{n-1} 0$}
\end{algorithm}
\DecMargin{1.2em}

\subsection{Generalizing Theorem \ref{thmOrth}}

Let Assumption \ref{assSSPreg} hold and suppose $\phi(s,a)$, $\theta$, and $\mu(s')$ satisfy Assumption \ref{assSSPlin}. Denote by $\{ \varphi_i \}_{i=1}^{d'}$ the unique elements of $\{ \phi(s,a) \}_{(s,a) \in (\S \setminus \{ s_{goal} \}) \times \A}$. For any $d'' \in \{ d', d'+1, \ldots \}$, define
\begin{equation}
\Phi = \begin{bmatrix} \varphi_1 & \cdots & \varphi_{d'} \end{bmatrix} \in \R^{d \times d'} , \quad \Xi = \begin{bmatrix} \Phi & 0_{d \times (d''-d')}  \end{bmatrix} \in \R^{d \times d''} .
\end{equation}
Let $\Xi = R \tilde{\Phi}$ be an RQ decomposition, i.e., $R \in \R^{d \times d''}$ is upper triangular $\tilde{\Phi} \in \R^{d'' \times d''}$ is orthogonal. For $(s,a) \in (\S \setminus \{s_{goal} \}) \times \A$, let $\tilde{\phi}(s,a)$ be the $i(s,a)$-th column of $\tilde{\Phi}$, where $i(s,a) \in [d']$ is such that $\phi(s,a) = \varphi_{i(s,a)}$, and set $\tilde{\phi}(s_{goal},a) = 0\ \forall\ a \in \A$. We claim that $\tilde{\phi}(s,a)$, $R^\trans \theta$, and $R^\trans \mu(s')$ satisfy Assumption \ref{assSSPlin}. To prove \eqref{eqSSPlin}, we first observe that for any $(s,a,s') \in (\S \setminus \{s_{goal}\}) \times \A$,
\begin{equation}
\tilde{\phi}(s,a)^\trans R^\trans = e_{i(s,a)}^\trans \tilde{\Phi}^\trans  R^\trans  = e_{i(s,a)}^\trans \Xi^\trans  = e_{i(s,a)}^\trans \Phi^\trans =  \phi(s,a)^\trans ,
\end{equation}
so $\tilde{\phi}(s,a)^\trans R^\trans \theta = \phi(s,a)^\trans \theta = c(s,a)$ and $\tilde{\phi}(s,a)^\trans R^\trans \mu(s') = \phi(s,a)^\trans  \mu(s') = P(s'|s,a)$, as desired. The first inequality in \eqref{eqSSPnorm1} holds by construction. For the second inequality in \eqref{eqSSPnorm1}, note $\varphi_i^\trans \theta$ is the cost of some state-action pair and thus lies in $[0,1]$ by Assumption \ref{assSSPreg}. Combined with the fact that $\tilde{\Phi}$ is orthogonal,
\begin{equation}
\| R^\trans \theta \|_2^2 = \| \tilde{\Phi}^\trans R^\trans \theta \|_2^2 = \| \Xi^\trans \theta \|_2^2 = \sum_{i=1}^{d'} (\varphi_i^\trans \theta)^2  \leq d' \leq d'' .
\end{equation}
Similarly, for \eqref{eqSSPnorm2}, since $\varphi_i^\trans \mu(\cdot)$ is a probability distribution over $\S$, for any $h \in \R^\S$, we have
\begin{equation}
\left\| \sum_{s' \in \S} h(s') R^\trans \mu(s')  \right\|_2^2 = \sum_{i=1}^{d'} \left( \sum_{s' \in \S} h(s') \varphi_i^\trans \mu(s')  \right)^2 \leq d' \| h \|_\infty^2 \leq d'' \| h \|_\infty^2 .
\end{equation}
Algorithmically, this means that if $\Phi$ is known \textit{a priori}, we can set $d'' = d'$, compute $\tilde{\Phi}$, and use features $\tilde{\phi}(s,a) \in \R^{d'}$ instead of $\phi(s,a)$. Alternatively, if a nontrivial bound $d'' =  O(d')$ is known, we can iteratively compute $\tilde{\Phi}$ via Gram–Schmidt (computing the $i$-th column when we observe unique features for the $i$-th time), increasing the dimension to $d''$. In the respective cases, our results follow with $d$ replaced by $d'$ and $d''$, respectively.

\subsection{Zero/vanishing costs}

Finally, we extend our results to the case where only Assumption \ref{assSSPlin} and the following hold.
\begin{ass}[Weaker than Assumption \ref{assSSPreg}] \label{assSSPregW}
There exists a proper policy and $c(s,a) \in [0,1]\ \forall\ (s,a) \in \S \times \A$.
\end{ass}
Now suppose the SSP instance $(\S,\A,P,c,s_{goal})$ only satisfies Assumptions \ref{assSSPlin} and \ref{assSSPregW}. Let $c_\eta(s,a) = c(s,a) + \eta$ be the perturbed cost discussed in Section \ref{secExtensions}. Then the instance $(\S,\A,P,c_\eta,s_{goal})$ satisfies Assumption \ref{assSSPreg}, with $c_{min} = \eta$ (up to a small constant, since $c_\eta(s,a)$ may be as large as $1+\eta$). Also define $\theta_\eta = \theta + \eta \sum_{s \in \S} \mu(s)$. Then since Assumption \ref{assSSPlin} holds for the original instance, we have
\begin{equation}
c_\eta(s,a) = c(s,a) + \eta = c(s,a) +  \eta \sum_{s' \in \S} P(s'|s,a) = \phi(s,a)^\trans \theta + \eta \sum_{s' \in \S} \phi(s,a)^\trans \mu(s') = \phi(s,a)^\trans \theta_\eta  ,
\end{equation}
so it also holds for the perturbed instance (again, up to a constant, since we can only assert $\| \theta_\eta \|_2 \leq \sqrt{d}(1+\eta)$). Thus, if we run Algorithm \ref{algMain} on the original instance but replace $c(s,a)$ with $c_\eta(s,a)$ in the definition of $\hat{G}_t$, and if $J_\eta^\star$ is the optimal cost-to-go function on the perturbed instance, Theorem \ref{thmRegret} ensures that
\begin{align}
\sum_{t=1}^T c_\eta(s_t,a_t) - \sum_{k=1}^K J_\eta^\star(s_1^k) = \tilde{O} \left( \left( B_\star^{\frac{3}{2}+\lambda} + B_\star^{\frac{1}{2}+\lambda} \right) d^{\frac{1}{2}} \Psi ( K / \eta )^{\frac{1}{2}+\lambda} + (B_\star+1)^{\frac{2}{1-2\lambda}} d^{\frac{1}{1-2\lambda} } \Psi^{\frac{2}{1-2\lambda}} \eta^{-\frac{1+2\lambda}{1-2\lambda}} \right) .
\end{align}
Also, since $J_\eta^\star$ is optimal on the perturbed instance and both instances have the same transition kernel, we have
\begin{equation}
J_\eta^\star(s) - J^\star(s) \leq J_\eta^{\pi_{\text{prop}}^\star}(s) - J^{\pi_{\text{prop}}^\star}(s) \leq \eta T_\star\ \forall\ s \in \S ,
\end{equation}
where (we recall from Section \ref{secExtensions}) $\pi_{\text{prop}}^\star$ is the optimal proper policy and $T_\star$ is the maximum expected time it takes $\pi_{\text{prop}}^\star$ to reach the goal state from any starting state (since this policy is proper, $T_\star < \infty$). Therefore, since $c(s,a) \leq c_\eta(s,a)$ by definition, we can bound regret (defined with respect to $\pi_{\text{prop}}^\star$, as in Section \ref{secExtensions}) by
\begin{align}
R(K) & \leq  \sum_{t=1}^T c_\eta(s_t,a_t) - \sum_{k=1}^K J_\eta^{\pi_{\text{prop}}^\star}(s_1^k) +  \sum_{k=1}^K \left( J_\eta^{\pi_{\text{prop}}^\star}(s_1^k)  -  J^{\pi_{\text{prop}}^\star}(s_1^k) \right) \\
& = \tilde{O} \left( \left( B_\star^{\frac{3}{2}+\lambda} + B_\star^{\frac{1}{2}+\lambda} \right) d^{\frac{1}{2}} \Psi ( K / \eta )^{\frac{1}{2}+\lambda} + \eta T_\star K + (B_\star+1)^{\frac{2}{1-2\lambda}} d^{\frac{1}{1-2\lambda} } \Psi^{\frac{2}{1-2\lambda}} \eta^{-\frac{1+2\lambda}{1-2\lambda}}  \right) . 
\end{align}
Choosing $\eta$ to decay as $K^{ (2\lambda-1)/(2\lambda+3)}$ ensures the first two terms scale as $K^{(4\lambda+2)/(2\lambda+3)}$ and the third as $K^{(2\lambda+1)/(2\lambda+3)}$. Since $\frac{2\lambda+1}{2\lambda+3} \leq \frac{4\lambda+2}{2\lambda+3} < 1$ for any $\lambda \in [0,\frac{1}{2})$, we thus have sublinear regret.

\section{Proof preliminaries} \label{appProofGeneral}

In this appendix, we collect some notation and results used in the proofs of multiple theorems.

\subsection{Additional notation}

We write $\E_t$ for expectation conditioned on the first $t-1$ state-action-state triples and the $t$-th state action pair, $\E_t [ \cdot ] = \E [ \cdot | \{ ( s_\tau, a_\tau , s_\tau' \}_{\tau=1}^{t-1} \cup \{ s_t,a_t \} ]$. We let $\E_{s_t'}$ denote expectation with respect to $s_t'$. Hence, for $h : \S \rightarrow \R$,
\begin{equation} \label{eqEtEstFixedFunc}
\E_t [ h(s_t') ] = \E_{s_t'} [ h(s_t') ] = \sum_{ s \in \S } h(s) P(s|s_t,a_t) .
\end{equation}
However, we emphasize that since $g_t$ in Definition \ref{defnOAFP} is a random function of the first $t$ state-action pairs, if $\tau < t$, we may have $\E_\tau [ g_t ( s_\tau',w) ] \neq \sum_{ s \in \S } g_t(s,w) P(s|s_\tau,a_\tau)$ for some $w \in \R^d$. On the other hand, $\E_{s_\tau'} [ g_t(s_\tau',w) ] = \sum_{ s \in \S } g_t(s,w) P(s|s_\tau,a_\tau)$ does hold for any $w \in \R^d$.

As discussed in Section \ref{secOAFP}, we also define the (random) operators $U_t , E_t : \R^d \rightarrow \R^d$ by
\begin{gather}
U_t w = \theta + \sum_{s \in \S} g_t(s,w) \mu(s)\ \forall\ w \in \R^d , \quad E_t = \hat{G}_t - U_t .
\end{gather}
Here $U_t$ can be roughly viewed as the expected value of $\hat{G}_t$, so $E_t$ is the error between $\hat{G}_t$ and its mean. Note, however, that since $g_t$ is a random function, $U_t w$ is a random vector (even for fixed $w \in \R^d$). We also note the following identity, which is an immediate consequence of Assumption \ref{assSSPlin} and is frequently used:
\begin{equation}
\phi(s,a)^\trans U_t w = \phi(s,a)^\trans \theta + \sum_{s' \in \S} g_t(s',w) \phi(s,a)^\trans \mu(s') = c(s,a) + \sum_{s' \in \S} g_t(s',w) P(s'|s,a) .
\end{equation}
As in Section \ref{secOracle}, we iterate these operators in the usual way, e.g., $\hat{G}_t^n 0 = \hat{G}_t ( \hat{G}_t^{n-1} 0 )$ for $n \in \N$ with $\hat{G}_t^0 0 = 0$.

Finally, for any $b > 0$, we define the clipping function $\Pi_{[0,b]} : \R \rightarrow [0,b]$ by
\begin{equation} \label{eqClip}
\Pi_{[0,b]} ( x ) = \min \{ \max \{ x , 0 \} , b \} = \max \{ \min \{ x , b \} , 0 \} .
\end{equation}
Note that with this notation, we can more compactly write $g_t(\cdot,\cdot) = \Pi_{[0,B_t]}( f_t(\cdot,\cdot) )$ in Definition \ref{defnOAFP}.

\subsection{Simple results}

\begin{clm}[Eigenvalues and norms] \label{clmEigAndNorm}
If Assumption \ref{assSSPlin} holds, then the eigenvalues of $\Lambda_t$ lie in $[1,t+1]$, and
\begin{equation}
\| w \|_{\Lambda_t^{-1}} \leq \| w \|_2 \leq \| w \|_{\Lambda_t} \leq \sqrt{t+1} \| w \|_2 \leq \sqrt{(t+1)d} \| w \|_\infty\ \forall\ w \in \R^d .
\end{equation}
\end{clm}
\begin{proof}
Let $\{ \lambda_i \}_{i=1}^d$ and $\{ q_i \}_{i=1}^d$ be the eigenvalues and (unit) eigenvectors of $\Lambda_t$. Then
\begin{equation} 
\lambda_i = \lambda_i q_i^\trans q_i = q_i^\trans  \Lambda_t q_i  =  q_i^\trans  q_i + \sum_{\tau=1}^t ( \phi(s_\tau,a_\tau)^{\trans} q_i )^2 = 1 + \sum_{\tau=1}^t ( \phi(s_\tau,a_\tau)^{\trans} q_i )^2 .
\end{equation}
The eigenvalue bounds follow, since $0 \leq ( \phi(s_\tau,a_\tau)^{\trans} q_i )^2 \leq \| \phi(s_\tau,a_\tau) \|_2 \| q_i \|_2 \leq 1$ by Cauchy-Schwarz. For the norm equivalences, we first use the eigenvalue bounds to write
\begin{equation}
\| w \|_{\Lambda_t^{-1}}^2 = \sum_{i=1}^d \frac{ ( q_i^\trans w )^2 }{ \lambda_i } \leq \sum_{i=1}^d ( q_i^\trans w )^2 \leq \| w \|_{\Lambda_t}^2 = \sum_{i=1}^d \lambda_i (  q_i^\trans w )^2 \leq (t+1) \sum_{i=1}^d ( q_i^\trans w )^2 .
\end{equation}
Since $\sum_{i=1}^d ( q_i^\trans w )^2 =  \| w \|_2^2$ by orthogonality, this proves the first three norm bounds. The fourth is standard.
\end{proof}

\begin{clm}[$\Pi_{[0,b]}$ properties] \label{clmClipNonex}
For any $b > 0$ and $x,y \in \R$, $\Pi_{[0,b]}(x) \leq \max \{ x , 0 \}$ and $|\Pi_{[0,b]}(x) - \Pi_{[0,b]}(y) | \leq |x-y|$.
\end{clm}
\begin{proof}
The first bound holds by \eqref{eqClip}. For the second, assume without loss of generality that $x \geq y$. By monotonicity, it suffices to show $\Pi_{[0,b]}(x) - \Pi_{[0,b]}(y) \leq x-y$. If $x < 0$ or $y > b$, then $\Pi_{[0,b]}(x) = \Pi_{[0,b]}(y)$, so this is immediate. Otherwise, \eqref{eqClip} implies $\Pi_{[0,b]}(x) - \Pi_{[0,b]}(y) \leq \max \{ x , 0 \} - \min \{ y , b \} = x- y$.
\end{proof}

\begin{clm}[$g_t$ bounds] \label{clmGtNonex}
If Assumption \ref{assSSPlin} holds, then for any $t \in [T]$, $s \in \S$, and $w_1, w_2 \in \R^d$,
\begin{equation}
| g_t(s,w_1) - g_t(s,w_2) | \leq | f_t(s,w_1) - f_t(s,w_2) | \leq \max_{a \in \A} | \phi(s,a)^\trans (w_1-w_2) | \leq \| w_1 - w_2 \|_2 \leq \sqrt{d} \| w_1 - w_2 \|_\infty .
\end{equation}
\end{clm}
\begin{proof}
The first bound follows from Claim \ref{clmClipNonex}. For the second, let $\bar{a} \in \A$ be any action attaining the minimum in the definition of $f_t(s,w_1)$, i.e., $\bar{a} \in \argmin_{a \in \A} ( \phi(s,a)^\trans w_1 - \alpha_t \| \phi(s,a) \|_{\Lambda_t^{-1}} )$. Then
\begin{align}
f_t(s,w_1) - f_t(s,w_2) & \geq f_t(s,w_1) - \left( \phi(s,\bar{a})^\trans w_2 - \alpha_t \| \phi(s,\bar{a}) \|_{\Lambda_t^{-1}}  \right) \\
& = \phi(s,\bar{a})^\trans(w_1-w_2) \geq - \max_{a \in \A} | \phi(s,a)^\trans (w_1-w_2) | .
\end{align}
By symmetry, we also have $f_t(s,w_1) - f_t(s,w_2) \leq \max_{a \in \A} | \phi(s,a)^\trans (w-w') |$; the second bound follows. The third follows from Cauchy-Schwarz and the fourth from a standard norm equivalence.
\end{proof}

\begin{clm}[Operator bounds] \label{clmOpBounds}
If Assumptions \ref{assSSPreg} and \ref{assSSPlin} hold, then for any $t \in [T]$, $w \in \R^d$, and $(s,a) \in \S \times \A$,
\begin{gather}
\| \hat{G}_t w \|_\infty \leq \sqrt{td} \left( 1 + \max_{s \in \S} g_t(s,w ) \right) , \quad \| \hat{G}_t 0 \|_{\Lambda_t} \leq \sqrt{t+1} \| \hat{G}_t 0 \|_2 \leq 2\sqrt{ (t+1) d } , \\
\| U_t w \|_2 \leq \sqrt{d} \left( 1 + \max_{s \in \S} g_t(s,w) \right) , \quad \phi(s,a)^\trans U_t w \in [0,B_t+1] .
\end{gather}
\end{clm}
\begin{proof}
First observe that by a standard norm equivalence and Claim \ref{clmEigAndNorm}, for any $w \in \R^d$, we have
\begin{equation} \label{eqLamInvToL2}
\| \Lambda_t^{-1} w \|_\infty \leq \| \Lambda_t^{-1} w \|_2 = \| \Lambda_t^{-1/2} w \|_{\Lambda_t^{-1}} \leq \| \Lambda_t^{-1/2} w \|_2 = \| w \|_{\Lambda_t^{-1}} \leq \| w \|_2 .
\end{equation}
Combined with Cauchy-Schwarz and \cite[Lemma D.1]{jin2020provably}, we obtain
\begin{equation} 
\sum_{\tau=1}^t \| \Lambda_t^{-1} \phi(s_\tau,a_\tau) \|_\infty \leq \sum_{\tau=1}^t \| \phi(s_\tau,a_\tau) \|_{\Lambda_t^{-1}} \leq \sqrt{ t \sum_{\tau=1}^t \| \phi(s_\tau,a_\tau) \|_{\Lambda_t^{-1}}^2 } \leq \sqrt{td} ,
\end{equation}
so the first $\hat{G}_t$ bound follows from the triangle inequality. Next, because $g_t(s,0) = 0\ \forall\ s \in \S$, we have
\begin{equation}
\hat{G}_t 0 = \Lambda_t^{-1} \sum_{\tau=1}^t \phi(s_\tau,a_\tau) c(s_\tau,a_\tau) = \Lambda_t^{-1} \sum_{\tau=1}^t \phi(s_\tau,a_\tau)\phi(s_\tau,a_\tau)^\trans \theta = \Lambda_t^{-1} ( \Lambda_t - I ) \theta = ( I - \Lambda_t^{-1} ) \theta .
\end{equation}
Therefore, by Claim \ref{clmEigAndNorm} and \eqref{eqLamInvToL2}, we obtain
\begin{equation}
\| \hat{G}_t 0 \|_{\Lambda_t} \leq \sqrt{t+1} \| \hat{G}_t 0 \|_2 \leq \sqrt{t+1} ( \| \theta \|_2 + \| \Lambda_t^{-1} \theta \|_2 ) \leq 2 \sqrt{t+1} \| \theta \|_2 \leq 2 \sqrt{(t+1)d} .
\end{equation}
Finally, the $U_t$ bounds hold by assumption.
\end{proof}

\begin{clm}[$B_\star$ estimate] \label{clmBtBound}
If Assumption \ref{assSSPreg} holds, then $\sup_{t \geq 0} B_t \leq 2 B_\star$.
\end{clm}
\begin{proof}
Since $B_0 = c_{min} \leq B_\star$ by assumption, it suffices to show $B_\tau \leq 2 B_\star\ \forall\ \tau \in \N$. Suppose instead that $B_\tau > 2 B_\star$ for some such $\tau$. Let $t = \min \{ \tau \in \N : B_\tau > 2 B_\star \}$ be the first time it occurs. Then by Algorithm \ref{algMain}, we have $B_\star < B_t / 2 = B_{t-1} < f_{M_{l-1}} ( s_t' , w_{M_{l-1}} )$ for some $l \in \N$. If $l = 1$, this contradicts the fact that $w_0 = 0$; otherwise, it contradicts the fact that $w_{M_l}$ is an OAFP (see Definition \ref{defnOAFP}).
\end{proof}

\subsection{Operator concentration} \label{appOperatorConc}

Define the random variables $W_t$ and $\varepsilon_t$, and the event $\mathcal{E}$, by
\begin{equation}
W_t = \alpha_t + \sqrt{td} (B_t+1) , \quad \varepsilon_t = 5 (B_t+1) d \sqrt{ \log(t \alpha_t / \delta )} , \quad \mathcal{E} = \left\{ \sup_{ w \in [-W_t,+W_t]^d } \| E_t w \|_{\Lambda_t} \leq \varepsilon_t\ \forall\ t \in [T] \right\} .
\end{equation}

The following is our main concentration result; the proof is lengthy so is deferred to Appendix \ref{appProofMainEkTail}.
\begin{lem}[Error operator tail bound] \label{lemMainEkTail2}
If Assumptions \ref{assSSPreg} and \ref{assSSPlin} hold and $\min_{t \in \N} \kappa_t \geq 9d$, then $\P(\mathcal{E}) \geq 1-\delta/2$.
\end{lem}

The error bound $\varepsilon_t$ in the lemma is related to the exploration parameter $\alpha_t$ in the following manner.
\begin{clm}[Lower bound on $\alpha_t$] \label{clmParameters}
For any $t \in \N$, if $\kappa_t \geq 9 m d$ for some $m \geq 1$, then $\alpha_t \geq \max \{ m \varepsilon_t , (B_t+1) \kappa_t \}$.
\end{clm} 
\begin{proof}
For the first bound, since $\log x \leq x\ \forall\ x \in \R$, we have
\begin{equation}
\alpha_t = (B_t+1) \kappa_t \sqrt{ \log( t (B_t+1) \kappa_t / \delta)} \leq ( B_t+1 )^{3/2} \kappa_t^{3/2} t^{1/2} / \delta^{1/2} .
\end{equation}
Combined with $5 \sqrt{3/2} \leq 9$ and the assumption $\kappa_t \geq 9m d$, we obtain
\begin{align}
m \varepsilon_t & =  5 m d \sqrt{  \log(t  \alpha_t / \delta )} (B_t+1) \leq  5 \sqrt{3/2} m d \sqrt{ \log ( t (B_t+1) \kappa_t / \delta ) } (B_t+1) \\
& \leq 9 m d \sqrt{ \log ( t (B_t+1) \kappa_t / \delta ) } (B_t+1) \leq   \kappa_t \sqrt{ \log ( t (B_t+1) \kappa_t / \delta ) } (B_t+1) = \alpha_t . 
\end{align}
For the second bound, simply note $\log ( t (B_t+1) \kappa_t / \delta ) \geq \log ( 9md ) \geq 1$ and use the definition of $\alpha_t$.
\end{proof}

As corollaries, we have the following special cases of operator concentration.
\begin{cor}[Error at OAFP] \label{corConcOafpNew}
For any $t \in \N$, if $\kappa_t \geq 9d$ and $w_t$ is an OAFP, then on the event $\mathcal{E}$,
\begin{equation}
| \phi(s,a)^\trans ( \hat{G}_t w_t - U_t w_t ) | \leq \alpha_t \| \phi(s,a) \|_{\Lambda_t^{-1}}\ \forall\ (s,a) \in \S \times \A .
\end{equation}
\end{cor}
\begin{proof}
By Claims \ref{clmEigAndNorm} and \ref{clmOpBounds} and Definition \ref{defnOAFP}, we have
\begin{align}
\| w_t \|_\infty \leq \| w_t - \hat{G}_t w_t \|_2 + \| \hat{G}_t w_t \|_\infty \leq \| w_t - \hat{G}_t w_t \|_{\Lambda_t} + \sqrt{td}(B_t+1) \leq \alpha_t + \sqrt{td}(B_t+1) = W_t .
\end{align}
Hence, using $\kappa_t \geq 9d$ and Claim \ref{clmParameters}, we conclude $\| E_t w_t \|_{\Lambda_t} \leq \alpha_t$ on $\mathcal{E}$. Combined with Cauchy-Schwarz,
\begin{align}
& | \phi(s,a)^\trans ( \hat{G}_t w_t - U_t w_t ) | = | \phi(s,a)^\trans E_t w_t | \leq \| \phi(s,a) \|_{\Lambda_t^{-1}} \| E_t \|_{\Lambda_t} \leq \| \phi(s,a) \|_{\Lambda_t^{-1}} \alpha_t . \qedhere
\end{align}
\end{proof}

\begin{cor}[Error at $\hat{G}_t$ iterates] \label{corConcGhatNew}
For any $t \in \N$, if $\kappa_t \geq 9md$ for some $m \geq 1$, then on the event $\mathcal{E}$, for any $n \in \N$ and $(s,a) \in \S \times \A$,
\begin{equation}
| \phi(s,a)^\trans ( \hat{G}_t^n 0 - U_t ( \hat{G}_t^{n-1} 0 ) ) | \leq \| \hat{G}_t^n 0 - U_t ( \hat{G}_t^{n-1} 0 ) \|_{\Lambda_t} \| \phi(s,a) \|_{\Lambda_t^{-1}} \leq \varepsilon_t \| \phi(s,a) \|_{\Lambda_t^{-1}} \leq \alpha_t \| \phi(s,a) \|_{\Lambda_t^{-1}} / m .
\end{equation}
\end{cor}
\begin{proof}
First note $\hat{G}_t^n 0 - U_t ( \hat{G}_t^{n-1} 0 ) = E_t ( \hat{G}_t^{n-1} 0 )$. For $n \geq 2$, $\| \hat{G}_t^{n-1} 0 \|_\infty = \| \hat{G}_t ( \hat{G}_t^{n-2} 0 ) \|_\infty \leq \sqrt{td}(B_t+1) \leq W_t$ by Claim \ref{clmOpBounds}, and for $n = 1$, $\|\hat{G}_t^{n-1} 0\|_\infty = 0$. Hence, for any $n \in \N$, we have $\| E_t ( \hat{G}_t^{n-1} 0 ) \|_{\Lambda_t} \leq \varepsilon_t$ on $\mathcal{E}$ by definition. The desired bounds follow from Cauchy-Schwarz and Claim \ref{clmParameters} similar to Corollary \ref{corConcOafpNew}.
\end{proof}

\subsection{Operator convergence}

We next show the operator $U_t$ converges in a certain sense. We begin by proving some basic properties.
\begin{clm}[$U_t$ properties] \label{clmUtProperties}
If Assumptions \ref{assSSPreg} and \ref{assSSPlin} hold, then for any $t \in [T]$,
\begin{gather}
0 \leq \cdots \leq \phi(s,a)^\trans U_t^{n-1} 0 \leq \phi(s,a)^\trans U_t^n 0 \leq \cdots \leq B_t+1\ \forall\ (s,a) \in \S \times \A , \label{eqUtMonotone} \\
\max_{(s,a) \in \S \times \A} | \phi(s,a)^\trans ( U_t^{n+1} 0 - U_t^n 0 ) | \leq \max_{(s,a) \in \S \times \A}  | \phi(s,a)^\trans ( U_t^n 0 - U_t^{n-1} 0 ) |\ \forall\ n \in \N . \label{eqUtNonexpansive}
\end{gather}
\end{clm}
\begin{proof}
By Claim \ref{clmOpBounds}, we already know $\phi(s,a)^\trans U_t^n 0 \in [0,B_t+1]$. To complete the proof of \eqref{eqUtMonotone}, we show by induction on $n$ that $\min_{(s,a) \in \S \times \A} \phi(s,a)^\trans ( U_t^n 0 - U_t^{n-1} 0 ) \geq 0$. For $n=1$, we simply have 
\begin{equation}
\min_{(s,a) \in \S \times \A} \phi(s,a)^\trans ( U_t^n 0 - U_t^{n-1} 0 ) = \min_{(s,a) \in \S \times \A} \phi(s,a)^\trans U_t 0 = \min_{(s,a) \in \S \times \A} c(s,a) \geq 0 .
\end{equation}
Now assuming $\min_{(s,a) \in \S \times \A} \phi(s,a)^\trans ( U_t^n 0 - U_t^{n-1} 0 ) \geq 0$, we have $\min_{s' \in \S} (g_t(s',U_t^n 0) - g_t(s',U_t^{n-1} 0) ) \geq 0$, so
\begin{equation}
\min_{(s,a) \in \S \times \A} \phi(s,a)^\trans ( U_t^{n+1} 0 - U_t^n 0 ) = \min_{(s,a) \in \S \times \A} \sum_{s' \in \S} ( g_t(s',U_t^n 0) - g_t(s',U_t^{n-1} 0) ) P(s'|s,a) \geq 0 . 
\end{equation}
Finally, \eqref{eqUtNonexpansive} follows from Claim \ref{clmGtNonex}:
\begin{align}
\max_{(s,a) \in \S \times \A} | \phi(s,a)^\trans ( U_t^{n+1} 0 - U_t^n 0 ) | & \leq \max_{(s,a) \in \S \times \A} \sum_{s' \in \S} | g_t(s',U_t^n 0) - g_t(s',U_t^{n-1} 0) | P(s'|s,a) \\
& \leq \max_{(s,a) \in \S \times \A} | \phi(s,a)^\trans ( U_t^n 0 - U_t^{n-1} 0 ) | . \qedhere
\end{align}
\end{proof}

The proof of Claim \ref{clmUtProperties} shows that $U_t$ is nonexpansive in the induced $\ell_\infty$ norm $\| \cdot \| = \| \Phi^\trans \cdot \|_\infty$, where $\Phi$ is the matrix with columns $\{\phi(s,a)\}_{(s,a) \in \S \times \A}$. Combined with the claim's monotonicity result, this is enough to show that $U_t$ converges at rate $1/n$. However, because the induced norm lifts to $|\S \times \A|$-dimensional space, a naive convergence proof yields a constant that scales with $|\S \times \A|$. The next claim will allow us to avoid this.
\begin{clm}[A linear algebra result] \label{clmUpsilon}
Let $\mathcal{Z} = \S \times \A$, $\Upsilon \in \R^{d \times \mathcal{Z}}$, and $r = rank(\Upsilon)$. For any $\mathcal{Z}' \subset \mathcal{Z}$, denote by $\Upsilon(\mathcal{Z}')$ the submatrix of $\Upsilon$ with columns $\mathcal{Z}'$, with $\Upsilon(z) = \Upsilon(\{z\})$ for any $z \in \mathcal{Z}$ for simplicity. Then there exists $\mathcal{Z}' \subset \mathcal{Z}$ such that $|\mathcal{Z}'| = r$ and $\|\Upsilon^\trans x\|_\infty \leq r \|\Upsilon(\mathcal{Z}')^\trans x\|_\infty$ for any $x \in \R^d$.
\end{clm}
\begin{proof}
We first assume $r = d$. Then we can find $\mathcal{Z}'' \subset \mathcal{Z}$ such that $|\mathcal{Z}''| = d$ and $det(\Upsilon(\mathcal{Z}'')) \neq 0$. Let $\mathcal{Z}'$ be whichever such $\mathcal{Z}''$ maximizes $|det(\Upsilon(\mathcal{Z}''))|$. Set $H = \Upsilon^\trans ( \Upsilon(\mathcal{Z}')^\trans )^{-1}$, which is well-defined by choice of $\mathcal{Z}'$. Then letting $\| H \|_\infty = \max_{ z \in \mathcal{Z} } \sum_{z' \in \mathcal{Z}'} |H(z,z')|$ denote the operator norm, for any $x \in \R^d$, we obtain
\begin{equation}
\| \Upsilon^\trans x \|_\infty  = \| H \Upsilon(\mathcal{Z}')^\trans x \|_\infty \leq \| H \|_\infty \| \Upsilon(\mathcal{Z}')^\trans x \|_\infty \leq d \max_{z \in \mathcal{Z},z' \in \mathcal{Z}'} | H(z,z') | \| \Upsilon(\mathcal{Z}')^\trans x \|_\infty .
\end{equation}
Thus, it suffices to show $|H(z,z')| \leq 1$. Toward this end, for any $y \in \R^d$, let $\Upsilon(\mathcal{Z}',y)$ be the matrix that results from replacing the $z'$-th column of $\Upsilon(\mathcal{Z}')$ with $y$. Then since $\Upsilon(z) = \sum_{z'' \in \mathcal{Z}'} H(z,z'') \Upsilon(z'')$, we have
\begin{equation}
\Upsilon ( \mathcal{Z}' \cup \{ z \} \setminus \{ z' \} ) = \Upsilon(\mathcal{Z}', \Upsilon(z)) = \Upsilon \left( \mathcal{Z}' , \sum_{z'' \in \mathcal{Z}'} H(z,z'') \Upsilon(z'') \right) .
\end{equation}
Next, observe that $\Upsilon ( \mathcal{Z}' , \Upsilon(z'') )$ is rank deficient when $z'' \neq z'$; otherwise, when $z'' = z'$, we have $\Upsilon ( \mathcal{Z}' , \Upsilon(z'') ) = \Upsilon ( \mathcal{Z}' )$. Hence, by multilinearity of the determinant, we obtain
\begin{equation}
det \left( \Upsilon \left( \mathcal{Z}' , \sum_{z'' \in \mathcal{Z}'} H(z,z'') \Upsilon(z'') \right) \right) = \sum_{z'' \in \mathcal{Z}'} H(z,z'') det ( \Upsilon ( \mathcal{Z}' , \Upsilon(z'') ) ) = H(z,z')  det ( \Upsilon ( \mathcal{Z}'  ) ) .
\end{equation}
Combining the previous two identities with the definition of $\mathcal{Z}'$ yields the desired bound:
\begin{equation}
|H(z,z')| = | det (\Upsilon ( \mathcal{Z}' \cup \{ z \} \setminus \{ z' \} )) | / | det(\Upsilon(\mathcal{Z}')) | \leq 1 .
\end{equation}

If instead $r < d$, let $\Upsilon = U \Sigma V^\trans$ be the SVD. Then $\Upsilon^\trans U = V \Sigma^\trans = [ \tilde{\Upsilon}^\trans\ 0  ]$, where $\tilde{\Upsilon} \in \R^{r \times \mathcal{Z}}$ has full rank. Hence, by the previous case, we can find $\mathcal{Z}' \subset \mathcal{Z}$ such that $|\mathcal{Z}'| = r$ and $\| \tilde{\Upsilon}^\trans \tilde{x} \|_\infty \leq r \| \tilde{\Upsilon}(\mathcal{Z}')^\trans \tilde{x} \|_\infty$ for any $\tilde{x} \in \R^r$. Let $U = [ U_1\ U_2 ]$ with $U_1 \in \R^{d \times r}$. Then for any $x \in \R^d$, if we let $\tilde{x} = U_1^\trans x \in \R^r$, we obtain
\begin{equation} \label{eqUpsXtoTilde}
\Upsilon^\trans x = \Upsilon^\trans U U^\trans x = \begin{bmatrix} \tilde{\Upsilon}^\trans & 0 \end{bmatrix} \begin{bmatrix} \tilde{x} \\ U_2^\trans x \end{bmatrix}  = \tilde{\Upsilon}^\trans \tilde{x} .
\end{equation}
Therefore, by the choice of $\mathcal{Z}'$, we have
\begin{align}
\| \Upsilon^\trans x \|_\infty & = \| \tilde{\Upsilon}^\trans \tilde{x} \|_\infty \leq r \| \tilde{\Upsilon}(\mathcal{Z}')^\trans \tilde{x}\|_\infty =r \| \Upsilon(\mathcal{Z}')^\trans x  \|_\infty . \qedhere
\end{align}
\end{proof}

We can now show that $U_t$ converges at rate $1/n$ with a constant depending only $B_t+1$ and $d$.
\begin{lem}[$U_t$ convergence]\label{lemUtConvergence}
If Assumptions \ref{assSSPreg} and \ref{assSSPlin} hold, then for any $t \in [T]$ and $n \in \N$,
\begin{equation}
\max_{(s,a) \in \S \times \A} | \phi(s,a)^\trans ( U_t^n 0 - U_t^{n-1} 0 ) | \leq \frac{  (B_t+1) d^2 }{n} .
\end{equation}
\end{lem}
\begin{proof}
Suppose instead that for some $n \in \N$, we have
\begin{equation} 
\max_{(s,a) \in \S \times \A} | \phi(s,a)^\trans ( U_t^n 0 - U_t^{n-1} 0 ) | > \frac{  (B_t+1) d^2 }{n} .
\end{equation}
Then combining Claims \ref{clmUtProperties} and \ref{clmUpsilon}, we can find $\mathcal{Z}' \subset \S \times \A$ such that, for any $m \in [n]$,
\begin{align} 
\frac{  (B_t+1) | \mathcal{Z}'| }{n} & \leq \frac{ (B_t+1) d^2 }{ n d } < \frac{1}{d} \max_{(s,a) \in \S \times \A} | \phi(s,a)^\trans ( U_t^m 0 - U_t^{m-1} 0 ) | \leq \max_{(s,a) \in \mathcal{Z}'} | \phi(s,a)^\trans ( U_t^m 0 - U_t^{m-1} 0 ) | .
\end{align}
Thus, for each $m \in [n]$, we can find $z_m \in \mathcal{Z}'$ with $| \phi(z_m)^\trans ( U_t^m 0 - U_t^{m-1} 0 ) | >  (B_t+1) |\mathcal{Z}'| / n$. But by Claim \ref{clmUtProperties},
\begin{align}
n = \sum_{z \in \mathcal{Z}'} \sum_{m=1}^n \ind ( z_m = z )  < \frac{n}{ (B_t+1) |\mathcal{Z}'| } \sum_{z \in \mathcal{Z}'} \sum_{m=1}^n \phi(z)^\trans ( U_t^m 0 - U_t^{m-1} 0 ) = \frac{n}{ (B_t+1) |\mathcal{Z}'| } \sum_{z \in \mathcal{Z}'}  \phi(z)^\trans U_t^n 0 \leq n ,
\end{align}
which is a contradiction.
\end{proof}

\section{Proof of Theorem \ref{thmRegret}} \label{appRegretProof}

In this appendix, we prove Theorem \ref{thmRegret} in two steps. First, in Appendix \ref{appExistenceProof}, we show that OAFPs exist on the event $\mathcal{E}$. Second, in Appendix \ref{appRegretBound}, we prove the regret bound on the intersection of $\mathcal{E}$ and an event $\mathcal{F}$ defined in Lemma \ref{lemMDS} that occurs with probability at least $1-\delta/2$. Thus, by the union bound and Lemma \ref{lemMainEkTail2}, OAFPs exist and the regret bound holds with probability at least $1-\delta$, which establishes the theorem.

\subsection{Existence of OAFPs} \label{appExistenceProof}

We begin with optimism lemma for the operator $U_t$.
\begin{lem}[$U_t$ optimism] \label{lemUtOptimism}
Under the assumptions of Theorem \ref{thmRegret}, for any $t \in [T]$, $n \in \N$, and $(s,a) \in \S \times \A$, 
\begin{equation}
\phi(s,a)^\trans U_t^{n-1} 0 \leq Q^\star(s,a) .
\end{equation}
\end{lem}
\begin{proof}
We fix $t$ and use induction on $n$. For $n = 1$, we simply have $\phi(s,a)^\trans U_t^{n-1} 0 = 0 \leq Q^\star(s,a)$. Assuming true for $n \in \N$, the Bellman optimality equation \eqref{eqBellman} implies
\begin{equation}
f_t(s,U_t^{n-1}0) = \min_{a \in \A} \left( \phi(s,a)^\trans U_t^{n-1} 0 - \alpha_t \| \phi(s,a) \|_{\Lambda_t^{-1}}  \right) \leq \min_{a \in \A} Q^\star(s,a) = J^\star(s)\ \forall\ s \in \S .
\end{equation}
Hence, by Claim \ref{clmClipNonex}, $g_t(s,U_t^{n-1}0) \leq \max \{ J^\star(s) , 0 \} = J^\star(s)$. Again using Bellman optimality, we thus obtain
\begin{align}
 \phi(s,a)^\trans U_t^n 0 & = c(s,a) + \sum_{s' \in \S}  g_t(s',U_t^{n-1} 0 )  P(s'|s,a) \leq c(s,a) + \sum_{s' \in \S} J^\star(s') P(s'|s,a) = Q^\star(s,a) . \qedhere
\end{align}
\end{proof}

We now establish existence of OAFPs. First note that by Claim \ref{clmGtNonex} and Lemma \ref{lemUtConvergence}, for any norm $\| \cdot \|$, we have
\begin{align}
\| U_t^{n+1} 0 - U_t^n 0 \| & \leq \sqrt{d} \max_{s \in \S} | g_t(s,U_t^n0) - g_t(s,U_t^{n-1}0) | \sqrt{d} \leq \max_{(s,a) \in \S \times  \A} | \phi(s,a)^\trans ( U_t^n 0 - U_t^{n-1} 0 ) | \xrightarrow[n \rightarrow \infty]{} 0 .
\end{align}
Hence, $w_t^\star = \lim_{n \rightarrow \infty} U_t^n 0$ exists. By continuity, it is a fixed point:
\begin{equation}
w_t^\star = \lim_{n \rightarrow \infty} U_t^{n+1} 0 = \lim_{n \rightarrow \infty} U_t ( U_t^n 0 ) = U_t \left( \lim_{n \rightarrow \infty} U_t^n 0 \right) = U_t ( w_t^\star ) .
\end{equation}
Thus, by a standard norm equivalence and Claim \ref{clmOpBounds}, we have
\begin{equation}
\| w_t^\star \|_\infty \leq \| w_t^\star \|_2 = \| U_t w_t^\star \|_2 \leq \sqrt{d} \left( 1 + \max_{s \in \S}  g_t(s,w_t^\star  ) \right) \leq \sqrt{d} (1+B_t) \leq \alpha_t + \sqrt{td}(B_t+1) = W_t .
\end{equation}
By Claim \ref{clmParameters}, on the event $\mathcal{E}$, this implies
\begin{equation}
\| \hat{G}_t w_t^\star - w_t^\star \|_{\Lambda_t} = \| \hat{G}_t w_t^\star - U_t w_t^\star \|_{\Lambda_t} = \| E_t w_t^\star \|_{\Lambda_t} \leq \varepsilon_t \leq \alpha_t .
\end{equation}
Finally, for any $s \in \S$, using continuity, Lemma \ref{lemUtOptimism}, and Bellman optimality, we obtain
\begin{align}
f_t(s,w_t^\star) \leq \min_{a \in \A}  \phi(s,a)^\trans w_t^\star = \min_{a \in \A} \phi(s,a)^\trans \lim_{n \rightarrow \infty} U_t^n 0 = \min_{a \in \A}  \lim_{n \rightarrow \infty} \phi(s,a)^\trans U_t^n 0 \leq \min_{a \in \A} Q^\star(s,a) = J^\star(s) .
\end{align}
Hence, on the event $\mathcal{E}$, for any $t \in [T]$, $w_t^\star$ is an OAFP by the previous two inequalities and Definition \ref{defnOAFP}.

\subsection{Regret bound} \label{appRegretBound}

Recall from Algorithm \ref{algMain} that $M_l$ is the time the $l$-th interval ended and $L$ is the total number of intervals completed. Fix $\tilde{T} \in \N$ and let $\tilde{L} = \min \{ l \in [L] : M_l \geq T \wedge \tilde{T} \}$ denote the least number of intervals that encompass the times $1, \ldots , T \wedge \tilde{T}$. Also let $\tilde{K} = | \{ t \in [T \wedge \tilde{T}] : s_t' = s_{goal} \} |$ denote the number of episodes completed by time $T \wedge \tilde{T}$. Finally, define the regret incurred up to time $T \wedge \tilde{T}$ by
\begin{equation}
\tilde{R}(\tilde{T}) = \sum_{t=1}^{T \wedge \tilde{T}} c(s_t,a_t) - \sum_{k=1}^{\tilde{K}} J^\star(s_1^k) < \infty .
\end{equation}

\begin{lem}[Regret decomposition] \label{lemRegretDecomp}
Under the assumptions of Theorem \ref{thmRegret},
\begin{equation}
\tilde{R}(\tilde{T})  \leq \sum_{l=1}^{\tilde{L}-1} \left( \sum_{t=1+M_l}^{M_{l+1} \wedge \tilde{T}} c(s_t,a_t) - J^\star(s_{1+M_l}) \right)  + 2 (B_\star+1) d \log_2 ( 4 B_\star (T \wedge \tilde{T} ) / c_{min} ) .
\end{equation}
\end{lem}
\begin{proof}
Since $M_1 = 1$ in Algorithm \ref{algMain} and $c(s_1,a_1) \leq 1$, we can bound the total cost incurred by 
\begin{equation} \label{eqDecompAlg}
\sum_{t=1}^{T \wedge \tilde{T}} c(s_t,a_t) \leq 1 + \sum_{t=2}^{T \wedge \tilde{T}} c(s_t,a_t) = 1 +  \sum_{l=1}^{\tilde{L}-1} \sum_{t=1+M_l}^{M_{l+1} \wedge \tilde{T}} c(s_t,a_t)  ,
\end{equation}
where the equality holds because $M_{l+1} \wedge \tilde{T} = M_{l+1}$ for $l < \tilde{L}-1$ and $M_{\tilde{L}} \wedge \tilde{T} = T \wedge \tilde{T}$ by definition. On the other hand, the expected cost for the optimal policy can be written as
\begin{equation} \label{eqDecompOpt}
\sum_{k=1}^{\tilde{K}} J^\star(s_1^k) = \sum_{l=1}^{\tilde{L}} J^\star ( s_{1+M_l} ) + \left( \sum_{k=1}^{\tilde{K}} J^\star(s_1^k) - \sum_{l=1}^{\tilde{L}} J^\star ( s_{1+M_l} ) \right) .
\end{equation}
Thus, we seek a lower bound for the term in parentheses. First note that since Algorithm \ref{algMain} ends an interval each time an episode ends, for each $k \in [\tilde{K}]$, we can find $l \in [ \tilde{L} ]$ such that $s_1^k = s_{1+M_l}$. Hence, all summands cancel, except those corresponding to intervals $\mathcal{L} = \{ l \in [ \tilde{L} ] : M_l = 1 \text{ or } B_{M_l} = 2 B_{M_l-1} \text{ or } det ( \Lambda_{M_l} ) \geq 2 det ( \Lambda_{M_{l-1}} ) \}$, which may not have reached the goal state. Taken together, and since $J^\star(s_{1+M_l}) \leq B_\star$, we obtain
\begin{equation} \label{eqDecompToScriptL}
\sum_{k=1}^{\tilde{K}} J^\star(s_1^k) - \sum_{l=1}^{\tilde{L}} J^\star ( s_{1+M_l} ) \geq - \sum_{ l \in \mathcal{L} } J^\star ( s_{1+M_l} ) \geq - B_\star | \mathcal{L} | .
\end{equation}
It remains to bound $|\mathcal{L}|$. Clearly, $|\mathcal{L}| \leq 1 + \sum_{i=1}^2 |\mathcal{L}_i|$, where $\mathcal{L}_1 = \{ l \in [ \tilde{L} ] : B_{M_l} = 2 B_{M_l-1} \}$ and $\mathcal{L}_2 = \{ l \in [ \tilde{L} ] : det ( \Lambda_{M_l} ) \geq 2 det ( \Lambda_{M_{l-1}} ) \}$. For $\mathcal{L}_1$, note $\sup_{t \geq 0} B_t \geq 2^{ |\mathcal{L}_1| } c_{min}$ in Algorithm \ref{algMain}, so by Claim \ref{clmBtBound},
\begin{equation} \label{eqDecompScriptL1}
|\mathcal{L}_1| = \log_2 \left( 2^{ |\mathcal{L}_1| } c_{min}  / c_{min} \right) \leq \log_2  \left( \sup_{t \geq 0} B_t / c_{min} \right) \leq \log_2 ( 2 B_\star / c_{min} ) .
\end{equation}
For $\mathcal{L}_2$, we have $|\mathcal{L}_2| \leq |\mathcal{L}_2'| + 1$, where $\mathcal{L}_2' = \{ l \in [ \tilde{L}-1 ] : det ( \Lambda_{M_l} ) \geq 2 det ( \Lambda_{M_{l-1}} ) \}$ excludes $\tilde{L}-1$ if it belongs to $\mathcal{L}_2$. By definition, $M_{\tilde{L}-1} < T \wedge \tilde{T}$, which by Claim \ref{clmEigAndNorm} implies $det(\Lambda_{M_{\tilde{L}-1}}) \leq ( 1 + (T\wedge\tilde{T}))^d \leq ( 2 (T\wedge\tilde{T}))^d$. Hence, because $det(\Lambda_{M_{\tilde{L}-1}}) \geq 2^{|\mathcal{L}_2'|} det(\Lambda_0) = 2^{|\mathcal{L}_2'|} $ by definition of $\mathcal{L}_2'$, we obtain
\begin{equation} \label{eqDecompScriptL2}
|\mathcal{L}_2| \leq \log_2 ( 2^{ |\mathcal{L}_2'| } ) + 1 \leq \log_2 ( det(\Lambda_{M_{\tilde{L}-1}} ) ) + 1 \leq d \log_2 ( 2 (T \wedge \tilde{T} ) ) + 1 .
\end{equation} 
Recalling $|\mathcal{L}| \leq 1 + \sum_{i=1}^2 |\mathcal{L}_i|$ and combining \eqref{eqDecompAlg}, \eqref{eqDecompOpt}, \eqref{eqDecompToScriptL}, \eqref{eqDecompScriptL1}, and \eqref{eqDecompScriptL2}, we obtain
\begin{align}
\tilde{R}(\tilde{T})  - \sum_{l=1}^{\tilde{L}-1} \left( \sum_{t=1+M_l}^{M_{l+1} \wedge \tilde{T}} c(s_t,a_t) - J^\star(s_{1+M_l}) \right) &  \leq 1 +  B_\star  ( \log_2(2B_\star/c_{min}) + d \log_2 ( 2 (T \wedge \tilde{T} )  ) + 2 ) \\
&  \leq 2 (B_\star+1) d \log_2 ( 4 B_\star (T \wedge \tilde{T} ) / c_{min} ) ,
\end{align}
where the last inequality uses $B_\star \geq c_{min}$ and $d \geq 2$.
\end{proof}

We next bound the summand in Lemma \ref{lemRegretDecomp} by a martingale difference sequences and sum of bonuses.
\begin{lem}[Per-interval regret] \label{lemPerUpdateRegret} 
Under the assumptions of Theorem \ref{thmRegret} and on the event $\mathcal{E}$, for any $l \in [ \tilde{L}-1 ]$,
\begin{align}
\sum_{t=1+M_l}^{M_{l+1} \wedge \tilde{T}} c(s_t,a_t) - J^\star(s_{1+M_l}) & \leq \sum_{t=1+M_l}^{M_{l+1}\wedge \tilde{T}} \left( g_{M_l}(s_t',w_{M_l}) - \E_t [ g_{M_l}(s_t',w_{M_l}) ] \right)  +  3 \alpha_{M_l}  \sum_{t=1+M_l}^{M_{l+1}\wedge \tilde{T}}  \| \phi(s_t,a_t) \|_{\Lambda_{M_l}}^{-1}  .
\end{align}
\end{lem}
\begin{proof}
Define $\gamma_\tau = \sum_{t = \tau}^{M_{l+1}\wedge \tilde{T}} c(s_t,a_t) - f_{M_l} ( s_\tau , w_{M_l} )$ for each $\tau \in \{1+M_l, \ldots,M_{l+1}\wedge \tilde{T}\}$ and $\gamma_{ 1 + M_{l+1}\wedge \tilde{T} } = 0$. We claim, and will return to prove, that for any $\tau \in \{1+M_l, \ldots,M_{l+1}\wedge \tilde{T}\}$,
\begin{equation} \label{eqRecursion}
\gamma_\tau \leq \gamma_{\tau+1} + g_{M_l}(s_\tau',w_{M_l}) - \E_\tau [ g_{M_l}(s_\tau',w_{M_l}) ] + 3\alpha_{M_l} \| \phi(s_\tau,a_\tau ) \|_{\Lambda_{M_l}}^{-1} .
\end{equation}
Assuming \eqref{eqRecursion} holds, we prove the lemma. First, since $w_{M_l}$ is an OAFP, Definition \ref{defnOAFP} implies
\begin{equation} \label{eqOAFPkey}
f_{M_l} ( s_{1+M_l} , w_{M_l} ) \leq J^\star ( s_{1+M_l} ) , \quad \| \hat{G}_{M_l} w_{M_l} - w_{M_l} \|_{ \Lambda_{M_l} } \leq \alpha_{M_l}  .
\end{equation}
Using the first inequality in \eqref{eqOAFPkey}, we obtain
\begin{align}
\sum_{t=1+M_l}^{M_{l+1}\wedge \tilde{T}} c(s_t,a_t) - J^\star(s_{1+M_l}) \leq \sum_{t=1+M_l}^{M_{l+1}\wedge \tilde{T}} c(s_t,a_t) - f_{M_l} ( s_{1+M_l} , w_{M_l} ) = \gamma_{1+M_l}  ,
\end{align}
so the lemma follows from recursively applying \eqref{eqRecursion}. Hence, it only remains to prove \eqref{eqRecursion}. First observe
\begin{align} 
f_{M_l} ( s_\tau , w_{M_l} ) & = \phi(s_\tau,a_\tau)^\trans \hat{G}_{M_l} w_{M_l} + \phi(s_\tau,a_\tau)^\trans ( w_{M_l} - \hat{G}_{M_l} w_{M_l} ) - \alpha_{M_l} \| \phi(s_\tau,a_\tau) \|_{ \Lambda_{M_l}^{-1} } \quad \\
& \geq \phi(s_\tau,a_\tau)^\trans \hat{G}_{M_l} w_{M_l} - 2\alpha_{M_l} \| \phi(s_\tau,a_\tau) \|_{ \Lambda_{M_l}^{-1} } \geq \phi(s_\tau,a_\tau)^\trans U_{M_l} w_{M_l} - 3\alpha_{M_l} \| \phi(s_\tau,a_\tau) \|_{ \Lambda_{M_l}^{-1} } ,
\end{align}
where the equality holds by the policy update in Algorithm \ref{algMain} and the inequalities use Cauchy-Schwarz, the second bound in \eqref{eqOAFPkey}, and Corollary \ref{corConcOafpNew}. Now by definition, we have
\begin{align}
\phi(s_\tau,a_\tau)^\trans U_{M_l} w_{M_l} & = c(s_\tau,a_\tau) + \sum_{ s \in \S }  g_{M_l}(s,w_{M_l}) P(s|s_\tau,a_\tau)  \\
& = c(s_\tau,a_\tau) + g_{M_l}(s_\tau',w_{M_l} ) + \E_\tau [ g_{M_l}(s_\tau',w_{M_l} ) ] - g_{M_l}(s_\tau',w_{M_l} ) .
\end{align}
Combining the previous two inequalities and rearranging, we obtain
\begin{equation} \label{eqRecursionBeforeTwoCase}
c(s_\tau,a_\tau) - f_{M_l} ( s_\tau , w_{M_l} )  \leq - g_{M_l}(s_\tau',w_{M_l}) +  g_{M_l}(s_\tau',w_{M_l} )  - \E_\tau [ g_{M_l}(s_\tau',w_{M_l} ) ]  + 3 \alpha_{M_l}  \| \phi(s_\tau,a_\tau) \|_{ \Lambda_{M_l}^{-1} } .
\end{equation}
We complete the proof separately in each of two cases.
\begin{itemize}[leftmargin=10pt,itemsep=0pt,topsep=0pt]
\item If $\tau = M_{l+1} \wedge \tilde{T}$, the left side of \eqref{eqRecursionBeforeTwoCase} is $\gamma_\tau$ and $-g_{M_l}(s_\tau',w_{M_l}) \leq 0 = \gamma_{\tau+1}$, so \eqref{eqRecursionBeforeTwoCase} implies \eqref{eqRecursion}.
\item Otherwise, an interval did \textit{not} end between times $1+M_l$ and $\tau$ (inclusive). This implies (A) $s_\tau' \neq s_{goal}$, so $s_\tau' = s_{\tau+1}$, (B) $B_\tau = B_{M_l}$, and (C) $f_{M_l}(s_\tau',w_{M_l}) \leq B_\tau$. Taken together, (B) and (C) give $f_{M_l}(s_\tau',w_{M_l}) \leq B_{M_l}$, so (D) $- g_{M_l}(s_\tau',w_{M_l}) \leq - f_{M_l}(s_\tau',w_{M_l})$ by definition. Combining (A) and (D) with \eqref{eqRecursionBeforeTwoCase}, we obtain
\begin{align}
c(s_\tau,a_\tau) - f_{M_l} ( s_\tau , w_{M_l} ) \leq - f_{M_l}(s_{\tau+1},w_{M_l}) +  g_{M_l}(s_\tau',w_{M_l} )  - \E_\tau [ g_{M_l}(s_\tau',w_{M_l} ) ]  + 3\alpha_{M_l} \| \phi(s,a) \|_{ \Lambda_{M_l}^{-1} } .
\end{align}
Hence, recalling $\tau < M_{l+1} \wedge \tilde{T}$, we can use the definitions of $\gamma_\tau$ and $\gamma_{\tau+1}$ to obtain
\begin{align}
\gamma_\tau & = \sum_{t=\tau+1}^{M_{l+1}} c(s_t,a_t) + c(s_\tau,a_\tau) - f_{M_l} ( s_\tau , w_{M_l} ) \\
& \leq  \sum_{t=\tau+1}^{M_{l+1}} c(s_t,a_t) - f_{M_l}(s_{\tau+1},w_{M_l}) +  g_{M_l}(s_\tau',w_{M_l} )  - \E_\tau [ g_{M_l}(s_\tau',w_{M_l} ) ]  + 3\alpha_{M_l} \| \phi(s,a) \|_{ \Lambda_{M_l}^{-1} }  \\
& = \gamma_{\tau+1} +  g_{M_l}(s_\tau',w_{M_l} )  - \E_\tau [ g_{M_l}(s_\tau',w_{M_l} ) ] + 3\alpha_{M_l}   \| \phi(s,a) \|_{ \Lambda_{M_l}^{-1} } . \qedhere
\end{align}
\end{itemize}
\end{proof}

We next bound the martingale difference sequence from Lemma \ref{lemPerUpdateRegret}.
\begin{lem}[Martingale difference sequence] \label{lemMDS}
Under the assumptions of Theorem \ref{thmRegret}, for any $\delta > 0$, if we define
\begin{equation}
\mathcal{F} = \left\{ \sum_{l=1}^{\tilde{L}-1} \sum_{t=1+M_l}^{M_{l+1} \wedge \tilde{T}} \left( g_{M_l}(s_t',w_{M_l}) - \E_t [ g_{M_l}(s_t',w_{M_l}) ] \right) \leq 2 B_\star \sqrt{ (T \wedge \tilde{T}) \log \frac{8 (T \wedge \tilde{T})}{\delta} } \right\} ,
\end{equation}
then $\P(\mathcal{F}) \geq 1-\delta/2$.
\end{lem}
\begin{proof}
The left side of the inequality is a martingale difference sequence. By definition and Claim \ref{clmBtBound}, each term satisfies $g_{M_l}(s_t',w_{M_l}) - \E_t [ g_{M_l}(s_t',w_{M_l}) ] | \leq B_t \leq 2 B_\star$. The number of terms is $\leq M_{\tilde{L}} \wedge \tilde{T} \leq M_L \wedge \tilde{T} = T \wedge \tilde{T}$. The lemma follows from \cite[Theorem D.1]{rosenberg2020near} (an anytime version of Azuma's inequality).
\end{proof}

Finally, we bound the sum of bonuses from Lemma \ref{lemPerUpdateRegret}.
\begin{lem}[Sum of bonuses] \label{lemBonuses}
Under the assumptions of Theorem \ref{thmRegret},
\begin{equation}
\sum_{l=1}^{\tilde{L}-1}  3 \alpha_{M_l}    \sum_{t=1+M_l}^{M_{l+1}\wedge \tilde{T}} \| \phi(s_t,a_t) \|_{\Lambda_{M_l}^{-1}} \leq 6  \sqrt{ (T \wedge \tilde{T} ) d  \log(2(T \wedge \tilde{T} ))} \max_{t \in [T \wedge \tilde{T}]} \alpha_t  .
\end{equation}
\end{lem}
\begin{proof}
For each $l \in [\tilde{L}-1]$ and $t \in \{ 2 + M_l , \ldots , M_{l+1} \wedge \tilde{T} \}$, the $(l+1)$-th interval did \textit{not} end at time $t-1$, which implies $det(\Lambda_{t-1}) \leq 2 det ( \Lambda_{M_l})$. By \cite[Lemma 12]{abbasi2011improved} this implies $\Lambda_{M_l} - \Lambda_{t-1} / 2$ is positive semidefinite, so $\Lambda_{M_l}^{-1} - 2 \Lambda_{t-1}^{-1}$ is negative semidefinite, so
\begin{align} \label{eqBonusBound}
\| \phi(s_t,a_t) \|_{\Lambda_{M_l}^{-1}}  = \sqrt{ \phi(s_t,a_t)^\trans \Lambda_{M_l}^{-1} \phi(s_t,a_t) } \leq \sqrt{ 2 \phi(s_t,a_t)^\trans \Lambda_{t-1}^{-1} \phi(s_t,a_t) } = \sqrt{2} \| \phi(s_t,a_t) \|_{\Lambda_{t-1}^{-1}} .
\end{align} 
For any $l \in [ \tilde{L} - 1 ]$, the inequality clearly holds at time $t = 1+M_l$ as well. Combined with and Cauchy-Schwarz and the fact that $M_{\tilde{L}} \wedge \tilde{T} \leq M_L \wedge \tilde{T} = T \wedge \tilde{T}$ by definition, we thus obtain
\begin{align}
\sum_{l=1}^{\tilde{L}-1}  3 \alpha_{M_l}    \sum_{t=1+M_l}^{M_{l+1}\wedge \tilde{T}} \| \phi(s_t,a_t) \|_{\Lambda_{M_l}^{-1}} & \leq 3 \sqrt{2} \max_{t \in [T \wedge \tilde{T}]} \alpha_t \sum_{t=1}^{T\wedge \tilde{T}} \| \phi(s_t,a_t) \|_{\Lambda_{t-1}^{-1}} \\
& \leq 3 \sqrt{2}  \max_{t \in [T \wedge \tilde{T}]} \alpha_t \sqrt{ (T\wedge \tilde{T}  ) \sum_{t=1}^{T\wedge \tilde{T}} \| \phi(s_t,a_t) \|_{\Lambda_{t-1}^{-1}}^2 } .
\end{align}
Finally, by \cite[Lemma 11]{abbasi2011improved} and Claim \ref{clmEigAndNorm}, we have
\begin{align} 
& \sum_{t=1}^{ T \wedge \tilde{T} } \| \phi(s_t,a_t) \|_{\Lambda_{t-1}^{-1}}^2  \leq 2 \log \frac{ det ( \Lambda_{ T \wedge \tilde{T} } ) }{ det ( \Lambda_0 ) } \leq 2 d \log (  (T \wedge \tilde{T} ) + 1) \leq 2 d \log ( 2 ( T \wedge \tilde{T}) ) . \qedhere 
\end{align}
\end{proof}

We can now prove the regret bound on $\mathcal{E} \cap \mathcal{F}$. By Lemmas \ref{lemRegretDecomp}, \ref{lemPerUpdateRegret}, \ref{lemMDS}, and \ref{lemBonuses}, we know
\begin{align} \label{eqRegretT}
\tilde{R}(\tilde{T}) & \leq 6 \max_{t \in T \wedge \tilde{T}} \alpha_t \sqrt{ (T \wedge \tilde{T} ) d  \log(2(T \wedge \tilde{T} ))}  + 2 B_\star \sqrt{ (T \wedge \tilde{T}) \log ( 8 (T \wedge \tilde{T}) / \delta ) } \\
& \quad + 2 (B_\star+1) d \log_2 ( 4 B_\star (T \wedge \tilde{T} ) / c_{min} ) 
\end{align}
Next, recall $B_t + 1 \leq 2 B_\star + 1 \leq 2 ( B_\star+1)$ by Claim \ref{clmBtBound}. Combined with the assumption that $\kappa_t \leq \Psi t^\lambda \log(t+1)$ with $\Psi \geq 9d$ and $\lambda \in [0,\frac{1}{2})$ for any $t \in [ T \wedge \tilde{T} ]$, we have
\begin{align}
\alpha_t = (B_t+1) \kappa_t \sqrt{ \log( t (B_t+1) \kappa_t / \delta) } \leq 2(B_\star+1)\Psi t^\lambda \log(t+1) \sqrt{ \log (  2 ( B_\star + 1 ) \Psi t^{1+\lambda} \log(t+1) / \delta ) } .
\end{align}
Since $(B_\star+1) \Psi t^\gamma \log(t+1) / \delta \geq 9 \log 2 \geq 1$, $\log(t+1) \leq t$, and $\gamma \leq 1/2$, we also have 
\begin{equation}
t + 1 \leq 2 t \leq 2 ( B_\star + 1 ) \Psi t^{1+\lambda} \log(t+1) / \delta \leq 2 ( B_\star+1 ) \Psi t^{5/2} / \delta \leq ( 2 ( B_\star+1 ) \Psi t / \delta )^{5/2} .
\end{equation}
Hence, combining the previous two inequalities, we obtain
\begin{align}
6 \alpha_t & \leq 6 \cdot 2 (B_\star + 1 ) \Psi t^\gamma (5/2) \log ( 2 ( B_\star+1 ) \Psi t / \delta ) \sqrt{ (5/2) \log ( 2 ( B_\star+1 ) \Psi t / \delta ) } \\
& = (30 \sqrt{5/2} ) (B_\star + 1 ) \Psi t^\gamma \log^{3/2} ( 2 ( B_\star+1 ) \Psi t / \delta ) < 48 (B_\star + 1 ) \Psi t^\gamma \log^{3/2} ( 2 ( B_\star+1 ) \Psi t / \delta )
\end{align}
Since the right side is increasing in $t$, the first summand in \eqref{eqRegretT} can thus be upper bounded by $48 \bar{R}(\tilde{T})$, where
\begin{equation} \label{eqBarR}
\bar{R}(\tilde{T}) =   ( B_\star + 1 ) \sqrt{d} \Psi (T \wedge \tilde{T})^{\frac{1}{2}+\lambda} \log^2 (  2 (B_\star+1) \Psi ( T \wedge \tilde{T} ) / ( c_{min} \delta)   ) .
\end{equation}
Finally, $\Psi \geq 9d$ implies the other summands in \eqref{eqRegretT} are bounded by $\bar{R}(\tilde{T})$, so $\tilde{R}(\tilde{T}) \leq 50 \bar{R}(\tilde{T})$. 

Next, we show $R(K) = \tilde{R}(\tilde{T})$ when $\tilde{T}$ is large enough. Toward this end, first note that  by Assumption \ref{assSSPreg} and definition of $\tilde{R}(\tilde{T})$ and $\tilde{K}$, the bound $\tilde{R}(\tilde{T}) \leq 50 \bar{R}(\tilde{T})$ from the previous paragraph implies
\begin{equation} \label{eqRegToCmin}
( T \wedge \tilde{T}) c_{min} \leq \sum_{t=1}^{T \wedge \tilde{T}} c(s_t,a_t) = \tilde{R}(\tilde{T}) + \sum_{k=1}^{\tilde{K}} J^\star(s_1^k) \leq 50 \bar{R}(\tilde{T}) + K B_\star .
\end{equation}
Now consider two cases. First, if $T \wedge \tilde{T} \geq 100 \bar{R}(\tilde{T}) / c_{min}$, then $50 \bar{R}(\tilde{T}) \leq ( T \wedge \tilde{T}) c_{min} / 2$, so $T \wedge \tilde{T} \leq 2 K B_\star / c_{min}$ by \eqref{eqRegToCmin}. Otherwise, $T \wedge \tilde{T} \leq 100 \bar{R}(\tilde{T}) / c_{min}$, which by definition \eqref{eqBarR} implies
\begin{equation}
T \wedge \tilde{T} \leq \left( 100  ( B_\star + 1 ) \sqrt{d} \Psi / c_{min} \right) (T \wedge \tilde{T})^{\frac{1}{2}+\lambda} \log^2 \left( \left( 2 (B_\star+1) \Psi / ( c_{min} \delta ) \right) ( T \wedge \tilde{T} ) \right).
\end{equation}
By Claim \ref{clmLogIneq} below, this implies that for some $\iota_1, \iota_2 > 0$ depending only on $\lambda$ (which, by assumption on $\lambda$, means that $\iota_1, \iota_2$ are absolute constants), we have
\begin{equation}
T \wedge \tilde{T} \leq T_1 \triangleq \left( \frac{ \iota_1  (B_\star+1) \sqrt{d} \Psi }{ c_{min} } \log^2 \left( \frac{\iota_2 (B_\star+1) d  \Psi}{c_{min} \delta} \right) \right)^{\frac{2}{1-2\lambda}} .
\end{equation}
Combining the cases, we conclude $T \wedge \tilde{T} \leq T_2 \triangleq \max \{ 2 K B_\star / c_{min} , T_1 \} < \infty$. Hence, choosing $\tilde{T} \geq T_2$, we obtain $T \wedge \tilde{T} = T$ and $\tilde{K} = K$, which together imply $R(K) = \tilde{R}(\tilde{T}) < \infty$.

Finally, we establish the bounds of the theorem. Recall we have shown $R(K) = \tilde{R}(\tilde{T}) \leq 100 \bar{R}(\tilde{T})$ for large $\tilde{T}$ and $T \wedge \tilde{T} \leq \max \{ 2 K B_\star / c_{min} , T_1 \}$. We again consider two cases. First, if the bound $T \wedge \tilde{T} \leq 2 K B_\star / c_{min}$ holds, then by $R(K) = O (  \bar{R}(\tilde{T}) )$ and the definition \eqref{eqBarR},
\begin{equation} \label{eqRegProofLargeK}
R(K)= \tilde{O} \left( \left( B_\star^{\frac{3}{2}+\lambda} + B_\star^{\frac{1}{2}+\lambda} \right) d^{\frac{1}{2}} \Psi ( K / c_{min} )^{\frac{1}{2}+\lambda} \right) .
\end{equation}
If instead only $T \wedge \tilde{T} \leq T_1$ holds, then \eqref{eqBarR} implies
\begin{equation} \label{eqRegProofSmallK}
R(K) = \tilde{O} \left(  ( B_\star + 1 ) \sqrt{d} \Psi T_1^{\frac{1}{2}+\lambda} \right) = \tilde{O} \left( (B_\star+1)^{\frac{2}{1-2\lambda}} d^{\frac{1}{1-2\lambda} } \Psi^{\frac{2}{1-2\lambda}}  c_{min}^{-\frac{1+2\lambda}{1-2\lambda}} \right) .
\end{equation}
Finally, bounding $R(K)$ by the max of the cases, then the max by the sum, yields the desired bound.

\begin{rem}[Bound for $T$] \label{remTbound}
As shown above, for $\tilde{T} \geq T_2$, we have the bound $T \leq T_2$, or (by definition)
\begin{equation}
T = O \left( \max \left\{ \frac{2KB_\star}{c_{min}} , \left( \frac{   (B_\star+1) \sqrt{d} \Psi }{ c_{min} } \log^2 \left( \frac{ (B_\star+1) d  \Psi}{c_{min} \delta} \right) \right)^{\frac{2}{1-2\lambda}} \right\} \right) .
\end{equation}
\end{rem}

\begin{rem}[Sharpening the tabular case] \label{remSharpeningTabular2}
When $K$ is large, the bound \eqref{eqRegProofLargeK} holds and has $\sqrt{d} \Psi$ dependence on $d$. We required $\Psi$ (the constant component of $\kappa_t$) to be linear in $d$ in order to match the scaling of the error bound $\varepsilon_t$ from Appendix \ref{appOperatorConc}. In the tabular case, we can reduce $\varepsilon_t$'s dependence to $\sqrt{d}$, (see Remark \ref{remSharpeningTabular1} in Appendix \ref{appProofMainEkTail}), so we can choose $\Psi$ to scale as $\sqrt{d}$, after which the regret bound's dependence becomes linear in $d$.
\end{rem}

\begin{clm} \label{clmLogIneq}
Suppose $x \leq a x^{c_1}\log^2 ( b x )$ for some $c_1 \in (0,1)$ and $a , b , x \geq 1$. Then $x \leq ( c_2 a \log^2 (c_3 a b ) )^{1/(1-c_1)}$ for some constants $c_2, c_3 > 0$ that depend only on $c_1$.
\end{clm}
\begin{proof}
By the assumed inequality and $\log y \leq y\ \forall\ y \geq 0$, we have
\begin{equation}
x \leq  \frac{ 16 a x^c_1 }{ (1-c_1)^2 } \left( \frac{1-c_1}{4} \log ( b x ) \right)^2 =  \frac{ 16 a x^{c_1} }{ (1-c_1)^2 }  \left(  \log \left( (bx)^{\frac{1-c_1}{4}} \right) \right)^2 \leq \frac{ 16 a x^{c_1} }{ (1-c_1)^2 } ( b x )^{\frac{1-c_1}{2} } = \frac{ 16 a b^{\frac{1-c_1}{2}}}{ (1-c_1)^2 } x^{\frac{1+c_1}{2}} .
\end{equation}
Solving for $x$, we obtain $x \leq ( 16 a / (1-c_1)^2 )^{2/(1-c_1)} b$. Plugging back into the log term of the assumed inequality, and since $2 \leq 2/(1-c_1)$, we obtain
\begin{equation}
x \leq a x^{c_1}  \left( \log \left( \left( \frac{16 a }{ (1-c_1)^2 } \right)^{ \frac{2}{1-c_1} }  b^2 \right) \right)^2 \leq a x^{c_1} \left( \frac{2}{1-c_1} \log \left( \frac{ 16 a b }{ (1-c_1)^2 } \right) \right)^2 = c_2 a \log^2 ( c_3 a b ) x^{c_1} ,
\end{equation}
where we define $c_2 = 4 / ( 1-c_1)^2$ and $c_3 = 16 / ( 1-c_1)^2$. Solving for $x$ gives the desired bound.
\end{proof}

\section{Proofs of Theorems \ref{thmProb}-\ref{thmOrth}} \label{appOracleProof}

We begin with an optimism lemma used for all three proofs.
\begin{lem}[$\hat{G}_t$ optimism] \label{lemGtOptimism}
If Assumptions \ref{assSSPreg} and \ref{assSSPlin} hold and $\kappa_t \geq 9d$, then on the event $\mathcal{E}$,
\begin{equation}
\phi(s,a)^\trans \hat{G}_t^{n-1} 0 - Q^\star(s,a) \leq \varepsilon_t \| \phi(s,a) \|_{\Lambda_t^{-1}} \leq \alpha_t \| \phi(s,a) \|_{\Lambda_t^{-1}}\ \forall\ (s,a) \in \S \times \A , n \in \N , t \in [T] .
\end{equation}
\end{lem}
\begin{proof} 
The proof is similar to that of Lemma \ref{lemUtOptimism}. For $n = 1$, $\phi(s,a)^\trans \hat{G}_t^{n-1} 0 = 0 \leq Q^\star(s,a)$, so the result is immediate. Now assume the bound holds for $n \in \N$. Then by the Bellman optimality equation \eqref{eqBellman}, we obtain
\begin{equation} \label{eqFtOptimism}
f_t(s, \hat{G}_t^{n-1} 0 ) = \min_{a \in \A} \left( \phi(s,a)^\trans \hat{G}_t^{n-1} 0 - \alpha_t \| \phi(s,a) \|_{\Lambda_t^{-1}} \right) \leq \min_{a \in \A} Q^\star(s,a) = J^\star(s) .
\end{equation}
Hence, by Claim \ref{clmGtNonex}, $g_t(s, \hat{G}_t^{n-1} 0) \leq J^\star(s)$ as well. Again by Bellman optimality, this implies
\begin{equation}
\phi(s,a)^\trans U_t ( \hat{G}_t^{n-1} 0 ) = c(s,a) + \sum_{s' \in \S} g_t(s, \hat{G}_t^{n-1} 0) P(s'|s,a) \leq c(s,a) + \sum_{s' \in \S}  J^\star(s') P(s'|s,a) = Q^\star(s,a) .
\end{equation}
On the other hand, by Corollary \ref{corConcGhatNew} and the assumption $\kappa_t \geq 9 d$, we have
\begin{equation}
\phi(s,a)^\trans ( \hat{G}_t^n 0 - U_t ( \hat{G}_t^{n-1} 0 ) ) \leq \varepsilon_t \| \phi(s,a) \|_{\Lambda_t^{-1}} \leq \alpha_t \| \phi(s,a) \|_{\Lambda_t^{-1}}  .
\end{equation}
Combining the last two inequalities completes the inductive step.
\end{proof}

As a simple corollary, we have the following formal version of Lemma \ref{lemGtOptimismInformal} from the main text.
\begin{cor}[$\hat{G}_t$ optimism] \label{corGtOptimism}
If Assumptions \ref{assSSPreg} and \ref{assSSPlin} hold and $\kappa_t \geq 9d$, then on the event $\mathcal{E}$, for any $t \in [T]$, $n \in \N$, and $(s,a) \in \S \times \A$, we have $f_t(s, \hat{G}_t^{n-1} 0 ) \leq J^\star(s)$.
\end{cor}
\begin{proof}
Rearrange the Lemma \ref{lemGtOptimism} bound and take minimum over $a \in \A$ as in \eqref{eqFtOptimism}.
\end{proof}

The preceding corollary implies the optimism inequality in \eqref{eqOAFP}. For the fixed point inequality $\| \hat{G}_t^n 0 - \hat{G}_t^{n-1} 0 \|_{\Lambda_t} \leq \alpha_t$, we use a similar approach for Theorems \ref{thmProb} and \ref{thmProp}, so we will provide a general result (Lemma \ref{lemGtContraction} below) for use in both theorems. Toward this end, we begin with an intermediate claim. Note that, while the bound grows with $n$ for fixed $t$, we will later choose $n$ in terms of $t$ so that the bound vanishes as $t \rightarrow \infty$.
\begin{clm}[$\hat{G}_t$ tracks $U_t$] \label{clmGtTracksUt}
If Assumptions \ref{assSSPreg} and \ref{assSSPlin} hold and $\kappa_t \geq 18d$, then on the event $\mathcal{E}$, 
\begin{equation} 
| \phi(s,a)^\trans ( \hat{G}_t^{n-1} 0 - U_t^{n-1} 0 ) | \leq \varepsilon_t \| \phi(s,a) \|_{\Lambda_t^{-1}} + \frac{2 (B_t+1) (n-1) \varepsilon_t}{\alpha_t}\ \forall\ (s,a) \in \S \times \A , n \in \N , t \in [T]
\end{equation}
\end{clm}
\begin{proof}
First, for any $n \in \N$, we use Corollary \ref{corConcGhatNew} and $\kappa_t \geq 18 d$ to write
\begin{equation}
\phi(s,a)^\trans \hat{G}_t^n 0 \leq \phi(s,a)^\trans U_t ( \hat{G}_t^{n-1} 0 ) + \alpha_t \| \phi(s,a) \|_{\Lambda_t^{-1}}  / 2 . 
\end{equation}
By Claim \ref{clmOpBounds}, we also know $\phi(s,a)^\trans U_t ( \hat{G}_t^{n-1} 0 ) \leq B_t + 1$; combined with the previous inequality, we obtain
\begin{equation} \label{eqBeforeExplore}
\phi(s,a)^\trans  \hat{G}_t^n 0 - \alpha_t \| \phi(s,a) \|_{\Lambda_t^{-1}} \leq B_t + 1 - \alpha_t \| \phi(s,a) \|_{\Lambda_t^{-1}}  / 2 .
\end{equation}
Now define the ``explored'' states at time $t$ (i.e., those with small bonuses across actions) by
\begin{equation}
\S_t = \left\{ s \in \S : \max_{a \in \A} \| \phi(s,a) \|_{\Lambda_t^{-1}} \leq \frac{2 (B_t+1)}{  \alpha_t }  \right\} .
\end{equation}
Then for any \textit{un}explored state $s \in \S \setminus \S_t$, \eqref{eqBeforeExplore} implies that for some $a \in \A$,
\begin{align}
\phi(s,a)^\trans \hat{G}_t^n 0 - \alpha_t \| \phi(s,a) \|_{\Lambda_t^{-1}} \leq B_t + 1 - \frac{\alpha_t}{2} \times \frac{2(B_t+1)}{ \alpha_t } = 0 .
\end{align}
Taking minimum over $a \in \A$ on both sides gives $f_t ( s , \hat{G}_t^n 0 )  \leq 0$, which implies $g_t ( s , \hat{G}_t^n 0 ) = 0$. Again using Claim \ref{clmOpBounds}, we similarly obtain that for any $s \in \S \setminus \S_t$ and some $a \in \A$,
\begin{equation}
\phi(s,a)^\trans U_t^n 0 - \alpha_t \| \phi(s,a) \|_{\Lambda_t^{-1}} \leq B_t+1 - \alpha_t \| \phi(s,a) \|_{\Lambda_t^{-1}} \leq - (B_t+1) < 0 ,
\end{equation}
so $g_t(s, U_t^n0) = 0$ as well. Since $n \in \N$ was arbitrary and $g_t(s, \hat{G}_t^0 0) = g_t(s, U_t^0 0) = g(s,0)= 0$, we conclude
\begin{equation}
g_t(s, \hat{G}_t^{n-1} 0) = g_t(s, U_t^{n-1} , 0) = 0\ \forall\ s \in \S \setminus \S_t , n \in \N .
\end{equation}
Combined with Claim \ref{clmGtNonex}, this implies that for any $n \in \N$ and $(s,a) \in \S \times \A$,
\begin{align}
| \phi(s,a)^\trans ( U_t ( \hat{G}_t^{n-1} 0 ) - U_t ( U_t^{n-1} 0 ) ) |  & \leq \sum_{ s' \in \S } | g_t(s',\hat{G}_t^{n-1}0) - g_t(s',U_t^{n-1}0) | P(s'|s,a)  \\
& \leq \max_{s' \in \S_t} | g_t(s',\hat{G}_t^{n-1}0) - g_t(s',U_t^{n-1} 0) | \\
& \leq \max_{(s',a') \in \S_t \times \A} | \phi(s',a')^\trans ( \hat{G}_t^{n-1}0 - U_t^{n-1} 0 ) | .
\end{align}
Hence, again using Corollary \ref{corConcGhatNew}, for any $n \in \N$ and $(s,a) \in \S \times \A$, we obtain
\begin{align} \label{eqNotJustExplored}
| \phi(s,a)^\trans ( \hat{G}_t^n 0 - U_t^n 0 ) | & \leq | \phi(s,a)^\trans ( \hat{G}_t^n 0 - U_t ( \hat{G}_t^{n-1} 0 ) ) | + | \phi(s,a)^\trans ( U_t ( \hat{G}_t^{n-1} 0 ) - U_t ( U_t^{n-1} 0 ) ) | \\
& \leq \varepsilon_t \| \phi(s,a) \|_{\Lambda_t^{-1}} + \max_{(s',a') \in \S_t \times \A} | \phi(s',a')^\trans ( \hat{G}_t^{n-1}0 - U_t^{n-1}0 ) |  .
\end{align}
Thus, taking the maximum over $(s,a) \in \S_t \times \A$ on both sides, by definition of $\S_t$, we have shown
\begin{equation}
\max_{(s,a) \in \S_t \times \A} | \phi(s,a)^\trans ( \hat{G}_t^n 0 - U_t^n 0 ) | \leq \frac{2(B_t+1)\varepsilon_t}{\alpha_t} + \max_{(s,a) \in \S_t \times \A} | \phi(s,a)^\trans ( \hat{G}_t^{n-1}0 - U_t^{n-1} 0 ) |\ \forall\ n \in \N .
\end{equation}
Iterating this inequality, and since $\hat{G}_t^0 0 = U_t^0 0 = 0$, we conclude
\begin{equation}
\max_{(s,a) \in \S_t \times \A} | \phi(s,a)^\trans ( \hat{G}_t^{n-1} 0 - U_t^{n-1} 0 ) | \leq \frac{2(B_t+1)\varepsilon_t (n-1)}{\alpha_t}\ \forall\ n \in \N .
\end{equation}
Substituting back into \eqref{eqNotJustExplored} and again using $\hat{G}_t^0 0 = U_t^0 0 = 0$, we obtain the desired result.
\end{proof}

We can now state the aforementioned Lemma \ref{lemGtConvergence}. Note that while we have already established a polynomial rate of convergence for $U_t$ in Lemma \ref{lemUtConvergence}, we keep the rate general here, since Theorem \ref{thmProp} will use an improved rate.
\begin{lem}[$\hat{G}_t$ convergence] \label{lemGtConvergence}
If Assumptions \ref{assSSPreg} and \ref{assSSPlin} hold and $\kappa_t \geq 18d$, then on the event $\mathcal{E}$,
\begin{equation} 
\| \hat{G}_t^n 0 - \hat{G}_t^{n-1} 0 \|_{\Lambda_t} \leq 5 \sqrt{d} \varepsilon_t + \frac{ 10 \sqrt{t} (B_t+1) n \varepsilon_t}{\alpha_t} + 3 \sqrt{t} \max_{(s,a) \in \S \times \A} | \phi(s,a)^\trans ( U_t^n 0 - U_t^{n-1} 0 ) |\ \forall\ n \in \N , t \in [T] .
\end{equation}
\end{lem}
\begin{proof}
We consider three cases (the last two are corner cases). For the first and most natural case, we assume that $\| \hat{G}_t^n 0 - \hat{G}_t^{n-1} 0 \|_2 \leq \| \hat{G}_t^n 0 - \hat{G}_t^{n-1} 0 \|_{\Lambda_t} / \sqrt{2}$. Then by definition of the induced norm,
\begin{equation}
\| \hat{G}_t^n 0 - \hat{G}_t^{n-1} 0 \|_{\Lambda_t}^2 \leq 2 \sum_{\tau=1}^t ( \phi(s_\tau,a_\tau)^\trans ( \hat{G}_t^n 0 - \hat{G}_t^{n-1} 0 ) )^2 .
\end{equation}
Next, for any $(s,a) \in \S \times \A$ and $m \in \{ n-1 , n \}$, Claim \ref{clmGtTracksUt} and Cauchy-Schwarz  imply
\begin{equation}
( \phi(s,a)^\trans ( \hat{G}_t^m 0 - U_t^m 0 ) )^2 \leq \left( \varepsilon_t \| \phi(s,a) \|_{\Lambda_t^{-1}} + \frac{2 (B_t+1) n \varepsilon_t}{\alpha_t}  \right)^2 \leq 2 \varepsilon_t^2 \| \phi(s,a) \|_{\Lambda_t^{-1}}^2 + \frac{8 (B_t+1)^2 n^2 \varepsilon_t^2}{\alpha_t^2}  ,
\end{equation}
which, after another application of Cauchy-Schwarz, gives
\begin{align}
( \phi(s,a)^\trans ( \hat{G}_t^n0 - \hat{G}_t^{n-1} 0 ) )^2 & \leq  3 \sum_{m=n-1}^n  ( \phi(s,a)^\trans ( \hat{G}_t^m 0 - U_t^m 0 ) )^2 + 3 ( \phi(s,a)^\trans ( U_t^n 0 - U_t^{n-1} 0 ) )^2 \\
& \leq 12 \varepsilon_t^2 \| \phi(s,a) \|_{\Lambda_t^{-1}}^2 + \frac{48 (B_t+1)^2 n^2 \varepsilon_t^2}{\alpha_t^2} + 3 \max_{(s',a') \in \S \times \A} ( \phi(s',a')^\trans ( U_t^n 0 - U_t^{n-1} 0 ) )^2 .
\end{align}
By \cite[Lemma D.1]{jin2020provably}, we know $\sum_{\tau=1}^t  \| \phi(s_\tau,a_\tau) \|_{\Lambda_t^{-1}}^2 \leq d$. Combined with previous three inequalities, 
\begin{align}
\| \hat{G}_t^n 0 - \hat{G}_t^{n-1} 0 \|_{\Lambda_t}^2 \leq 24 d \varepsilon_t^2 + \frac{96 t (B_t+1)^2 n^2 \varepsilon_t^2}{\alpha_t^2} + 6 t \max_{(s,a) \in \S \times \A} ( \phi(s,a)^\trans ( U_t^n 0 - U_t^{n-1} 0 ) )^2 .
\end{align}
Taking square roots on both sides, bounding the square root of sum by the sum of square roots, and using $\sqrt{24} \leq 5$, $\sqrt{96} \leq 100$, and $\sqrt{6} \leq 3$ yields the desired bound.

For the second case, suppose $\| \hat{G}_t^n 0 - \hat{G}_t^{n-1} 0 \|_2 > \| \hat{G}_t^n 0 - \hat{G}_t^{n-1} 0 \|_{\Lambda_t} / \sqrt{2}$ and $n \geq 2$. Then 
\begin{align} \label{eqWeirdL2case}
\| \hat{G}_t^n 0 - \hat{G}_t^{n-1} 0 \|_{\Lambda_t} < 2 \sum_{m=n-1}^n \left( \| \hat{G}_t^m 0 - U_t ( \hat{G}_t^{m-1} 0 ) \|_2 + \| U_t ( \hat{G}_t^{m-1} 0 ) \|_2 \right) \leq 4 ( \varepsilon_t + d(B_t+1) ) ,
\end{align}
where we used Claim \ref{clmEigAndNorm}, Corollary \ref{corConcGhatNew}, and Claim \ref{clmOpBounds}. By assumption $\kappa_t \geq 18d$, we also know
\begin{equation} \label{eqEpsilonLower}
\varepsilon_t / 5 =  ( B_t+1) d \sqrt{ \log ( t (B_t+1) \kappa_t \sqrt{ \log (t(B_t+1)\kappa_t/\delta) } / \delta ) } \geq  ( B_t+1 ) d \geq 2 .
\end{equation}
Hence, combining the previous two bounds, we obtain $\| \hat{G}_t^n 0 - \hat{G}_t^{n-1} 0 \|_{\Lambda_t} \leq 24 \varepsilon_t / 5 \leq 5 \sqrt{d} \varepsilon_t$.

Finally, suppose $\| \hat{G}_t^n 0 - \hat{G}_t^{n-1} 0 \|_2 > \| \hat{G}_t^n 0 - \hat{G}_t^{n-1} 0 \|_{\Lambda_t} / \sqrt{2}$ and $n = 1$. Then by Claim \ref{clmOpBounds} and \eqref{eqEpsilonLower},
\begin{align}
\| \hat{G}_t^n 0 - \hat{G}_t^{n-1} 0 \|_{\Lambda_t} & < \sqrt{2} \| \hat{G}_t^n 0 - \hat{G}_t^{n-1} 0 \|_2 = \sqrt{2} \| \hat{G}_t 0 \|_2 \leq \sqrt{ 8 d } \leq 5 \sqrt{ d } \varepsilon_t . \qedhere
\end{align}
\end{proof}

We now proceed to the proofs of the theorems.

\subsection{Proof of Theorem \ref{thmProb}} \label{appProofProb}

We begin with a corollary of Lemmas \ref{lemUtConvergence} and \ref{lemGtConvergence} in the setting of Theorem \ref{thmProb}. The proof is mostly algebra.
\begin{cor}[$\hat{G}_t$ convergence] \label{corGtConvergenceProb}
Under the Assumptions of Theorem \ref{thmProb} and on the event $\mathcal{E}$, for any $t \in [T]$,
\begin{equation}
\min_{n \in [ \ceil{ 2 d t^{1/6}} ]} \| \hat{G}_t^n 0 - \hat{G}_t^{n-1} 0 \|_{\Lambda_t} \leq \alpha_t .
\end{equation}
\end{cor}
\begin{proof}
Since $\kappa_t = 54 d t^{1/3} = 9 ( 6 t^{1/3} ) d$ by assumption in Theorem \ref{thmProb}, Claim \ref{clmParameters} implies
\begin{equation} \label{eqParamThmProb}
\alpha_t \geq \max \{ 6 t^{1/3} \varepsilon_t , (B_t+1) \kappa_t \} .
\end{equation}
Hence, using the bound $\alpha_t \geq (B_t+1) \kappa_t$, we have
\begin{equation}
\frac{ \alpha_t }{ 9 \varepsilon_t } = \frac{ \kappa_t \sqrt{ \log ( t(B_t+1) \kappa_t / \delta ) } }{ 45 d \sqrt{ \log ( t \alpha_t / \delta ) } } \leq \frac{ \kappa_t }{ 40 d } = \frac{ 54 t^{1/3} }{ 45 } < 4 t^{1/3} .
\end{equation}
Thus, if we define $N_t = d \sqrt{ \alpha_t / \varepsilon_t } / 3$, we are guaranteed that $N_t \leq 2 d t^{1/6}$, so $\ceil{N_t} \in [ \ceil{ 2 d t^{1/6} } ]$. Combining this result with Lemmas \ref{lemUtConvergence} and \ref{lemGtConvergence} (we can invoke the latter since $\kappa_t \geq 18d$), we obtain
\begin{equation} \label{eqThmProbAlg}
\min_{n \in [\ceil{ 2d t^{1/6} } ] } \| \hat{G}_t^n 0 - \hat{G}_t^{n-1} 0 \|_{\Lambda_t} \leq \| \hat{G}_t^{\ceil{N_t}} 0 - \hat{G}_t^{\ceil{N_t}-1} 0 \|_{\Lambda_t} \leq 5 \sqrt{ d} \varepsilon_t + \frac{ 10 \sqrt{ t} (B_t+1) \ceil{N_t} \varepsilon_t}{\alpha_t} + \frac{3\sqrt{t} (B_t+1) d^2 }{ \ceil{N_t} } .
\end{equation}
For the third term in \eqref{eqThmProbAlg}, since $\ceil{N_t} \geq N_t$, we have
\begin{equation}
3\sqrt{t} (B_t+1) d^2 / \ceil{N_t}  \leq  3\sqrt{t} (B_t+1) d^2 / N_t  = 9 \sqrt{t}(B_t+1) d / \sqrt{  \alpha_t / \varepsilon_t } .
\end{equation}
For the second term, since $N_t \geq 2 \sqrt{ 6 } / 3 \geq 1$ by \eqref{eqParamThmProb}, we have $\ceil{N_t} \leq 2N_t$, so
\begin{equation}
10 \sqrt{ t} (B_t+1) \ceil{N_t} \varepsilon_t / \alpha_t \leq 21 \sqrt{t}(B_t+1) N_t \varepsilon_t / \alpha_t = 7 \sqrt{t} (B_t+1) d / \sqrt{ \alpha_t / \varepsilon_t  } .
\end{equation}
Hence, because $\sqrt{ \alpha_t / \varepsilon_t } \geq 2 t^{1/6}$ and $\alpha_t \geq \kappa_t(B_t+1)$ by \eqref{eqParamThmProb}, the last two terms in \eqref{eqThmProbAlg} can be bounded by
\begin{equation}
\frac{ 10 \sqrt{ t} (B_t+1) \ceil{N_t} \varepsilon_t}{\alpha_t} + \frac{3\sqrt{t} (B_t+1) d^2 }{ \ceil{N_t} } \leq  \frac{16 \sqrt{t} d (B_t+1) }{ \sqrt{ \alpha_t / \varepsilon_t } } \leq 8 t^{1/3} d (B_t+1) = \frac{ 8 \kappa_t (B_t+1) }{ 54 } \leq \frac{ 8 \alpha_t }{ 54 } \leq \frac{ \alpha_t }{ 6 } .
\end{equation}
Substituting into \eqref{eqThmProbAlg}, and assuming for the moment that $\sqrt{d} \leq t^{1/3}$, we can use \eqref{eqParamThmProb} to obtain
\begin{equation}
\min_{n \in [\ceil{ 2d t^{1/6} } ] } \| \hat{G}_t^n 0 - \hat{G}_t^{n-1} 0 \|_{\Lambda_t} \leq 5 \sqrt{d} \cdot \varepsilon_t + \frac{ \alpha_t }{ 6 } \leq 5 t^{1/3} \cdot \frac{ \alpha_t }{ 6 t^{1/3} } + \frac{ \alpha_t }{ 6 } = \alpha_t .
\end{equation}
If instead $\sqrt{d} > t^{1/3}$, then $t^{1/6} < \sqrt{d}$ as well, so we can instead use Claim \ref{clmOpBounds} and \eqref{eqParamThmProb} to obtain
\begin{align}
\min_{n \in [\ceil{ 2d t^{1/6} } ] } \| \hat{G}_t^n 0 - \hat{G}_t^{n-1} 0 \|_{\Lambda_t} & \leq \| \hat{G}_t 0 \|_{\Lambda_t} \leq \sqrt{ 8 t d } = \sqrt{8} t^{1/3} t^{1/6} d^{1/2} \leq \sqrt{8} t^{1/3} d \leq \kappa_t \leq \alpha_t . \qedhere
\end{align}
\end{proof}

We can now prove Theorem \ref{thmProb}. Recall from Appendix \ref{appRegretProof} that the regret bound in Theorem \ref{thmRegret} holds on $\mathcal{E} \cap \mathcal{F}
$. Hence, on this event, and since $\kappa_t = 60 d t^{1/3}$ in Theorem \ref{thmProb}, we can set $\Psi = 60d$ and $\lambda = 1/3$ to obtain
\begin{align} \label{eqThmProbReg}
R(K) 
& = \tilde{O} \left( \left( B_\star^{\frac{11}{6}} + B_\star^{\frac{5}{6}} \right) d^{\frac{3}{2}} ( K / c_{min} )^{\frac{5}{6}} + (B_\star+1)^6 d^9 c_{min}^{-5}\right) .
\end{align}
Finally, Corollaries \ref{corGtOptimism} and \ref{corGtConvergenceProb} imply that on $\mathcal{E}$, for any $t \in [T]$ Algorithm \ref{algIterate} is called, it returns an OAFP within $O(d t^{1/6})$ iterations. Together with Lemmas \ref{lemMainEkTail2} and \ref{lemMDS}, which ensure $\P(\mathcal{E} \cap \mathcal{F}) \geq 1-\delta$, this completes the proof.

\subsection{Proof of Theorem \ref{thmProp}}

As discussed above, we first establish geometric convergence using the contraction property.
\begin{lem}[Geometric $U_t$ convergence] \label{lemUtConvergenceGeom}
If Assumptions \ref{assSSPreg} and \ref{assSSPlin} hold and all stationary policies are proper,
\begin{equation}
| \phi(s,a)^\trans ( U_t^n 0 - U_t^{n-1} 0 ) | \leq \chi \rho^{n-1}\ \forall\ (s,a) \in \S \times \A , n \in \N , t \in [T] .
\end{equation}
\end{lem}
\begin{proof}
Fix $t$. When $n=1$, since $\chi \geq 1$ by definition, we simply have
\begin{equation} \label{eqUtGeomBase}
| \phi(s,a)^\trans ( U_t^n 0 - U_t^{n-1} 0 ) | = | \phi(s,a)^\trans \theta | = | c(s,a) | \leq 1 \leq \chi = \chi \rho^{n-1} .
\end{equation}
It remains to show $| \phi(s,a)^\trans ( U_t^{n+1} 0 - U_t^n 0 ) | \leq \chi \rho^n$ for all $n \in \N$. Fix such an $n$. By monotoncity of $\Pi_{[0,B_t]}$,
\begin{equation}
g_t ( s , U_t^n 0 )  = \min_{a \in \A} \Pi_{[0,B_t]}  \left( \phi(s,a)^\trans U_t^n 0 - \alpha \| \phi(s,a) \|_{\Lambda_t^{-1}}  \right)  .
\end{equation}
Hence, if we define $Q_n \in \R^{\S \times \A}$ to be the matrix with $(s,a)$-th element
\begin{equation}
Q_n(s,a) = \Pi_{[0,B_t]} \left( \phi(s,a)^\trans U_t^n 0 - \alpha_t \| \phi(s,a) \|_{\Lambda_t^{-1}} \right) ,
\end{equation}
we have $g_t(s, U_t^n 0 ) = \min_{a \in \A} Q_n(s,a)$. Thus, by definition of $U_t$, we obtain
\begin{align}
\phi(s,a)^\trans ( U_t^{n+1} 0 - U_t^n 0 ) = \sum_{ s' \in \S }  \left( \min_{a' \in \A} Q_n(s',a') - \min_{a' \in \A} Q_{n-1}(s',a') \right) P(s'|s,a) = ( \mathcal{T} Q_n - \mathcal{T} Q_{n-1} ) (s,a) ,
\end{align}
where $\mathcal{T}$ is the state-action operator defined in \eqref{eqOperator}. By \eqref{eqContraction}, this implies
\begin{align}
\max_{(s,a) \in \S \times \A} \omega(s) | \phi(s,a)^\trans ( U_t^{n+1} 0 - U_t^n 0 ) | & \leq \rho \max_{(s,a) \in \S \times \A} \omega(s) | ( Q_n- Q_{n-1} ) (s,a) | \\
& \leq \rho \max_{(s,a) \in \S \times \A} \omega(s) | \phi(s,a)^\trans ( U_t^n 0 - U_t^{n-1} 0 ) | ,
\end{align}
where the last inequality holds by Claim \ref{clmClipNonex}. Iterating the previous inequality and using the bound $|\phi(s,a)^\trans ( U_t^1 0 - U_t^0 0 ) | \leq 1$ from \eqref{eqUtGeomBase}, we obtain that for any $n \in \N$,
\begin{equation}
\max_{(s,a) \in \S \times \A} \omega(s) | \phi(s,a)^\trans ( U_t^{n+1} 0 - U_t^n 0 ) | \leq \rho^n \max_{s \in \S} \omega(s) .
\end{equation}
Hence, for any $(s,a) \in \S \times \A$ and $n \in \N$, we obtain
\begin{align}
| \phi(s,a)^\trans ( U_t^{n+1} 0 - U_t^n 0 ) | & \leq \frac{\omega(s) | \phi(s,a)^\trans ( U_t^{n+1} 0 - U_t^n 0 ) |} {\min_{s' \in \S} \omega(s')} \leq  \frac{\max_{s' \in \S} \omega(s') \rho^n}{ \min_{s' \in \S} \omega(s') } = \chi \rho^n .  \qedhere
\end{align}
\end{proof}

Next, we have an analogue of Corollary \ref{corGtConvergenceProb}, whose proof is also similar.
\begin{cor}[$\hat{G}_t$ convergence] \label{corGtConvergenceProp}
If Assumptions \ref{assSSPreg} and \ref{assSSPlin} hold, all stationary policies are proper, and $\kappa_t = 54 d t^{1/4} \sqrt{N_t'}$ for some $N_t' \geq \log(3 t \chi ) / (1-\rho )$, then on the event $\mathcal{E}$, for any $t \in [T]$,
\begin{equation}
\min_{n \in [ \ceil{N_t'} ] } \| \hat{G}_t^n 0 - \hat{G}_t^{n-1} 0 \|_{\Lambda_t}  \leq \alpha_t .
\end{equation}
\end{cor}
\begin{proof}
Because $\chi \geq 1$ and $\rho \in (0,1)$ by definition, we know $N_t' \geq\log 3 \geq 1$, so $1 \leq \ceil{N_t'} \leq 2N_t'$. Combined with Lemmas \ref{lemGtConvergence} and \ref{lemUtConvergenceGeom}, we obtain
\begin{equation} \label{eqThmPropAlg}
\min_{n \in [\ceil{ N_t' } ] } \| \hat{G}_t^n 0 - \hat{G}_t^{n-1} 0 \|_{\Lambda_t} \leq \| \hat{G}_t^{\ceil{N_t'}} 0 - \hat{G}_t^{\ceil{N_t'}-1} 0 \|_{\Lambda_t} \leq 5 \sqrt{ d} \varepsilon_t + \frac{ 24 \sqrt{ t} (B_t+1) N_t' \varepsilon_t}{\alpha_t} + 3 \sqrt{t} \chi \rho^{N_t'} .
\end{equation}
On the other hand, since $6 t^{1/4} \sqrt{N_t'} \geq 1$ (recall $N_t' \geq 1$), Claim \ref{clmParameters} implies
\begin{equation} \label{eqParamThmProp}
\alpha_t \geq \max \{ 6 t^{1/4} \sqrt{N_t'} \varepsilon_t , (B_t+1) \kappa_t \} .
\end{equation}
Thus, using the assumption $N_t' \geq \log(3t\chi)/(1-\rho)$, we can bound the third term in \eqref{eqThmPropAlg} by
\begin{equation} \label{eqThmPropAlgThird}
3 \sqrt{t} \chi \rho^{N_t'}  \leq 3 \sqrt{t} {\chi} e^{-(1-{\rho})N_t'} \leq 1 / \sqrt{t} \leq 1 \leq \kappa_t (B_t+1) / 54 \leq \alpha_t / 54 . 
\end{equation}
For the second term in \eqref{eqThmPropAlg}, we again use \eqref{eqParamThmProp} to obtain
\begin{equation} \label{eqThmPropAlgSecond}
\frac{ 24 \sqrt{ t} (B_t+1) N_t' \varepsilon_t}{\alpha_t} \leq \frac{ 24 \sqrt{ t} (B_t+1) N_t'}{ 6 t^{1/4} \sqrt{N_t'} } = 4 t^{1/4} \sqrt{N_t'} (B_t+1) \leq \frac{ 4 \kappa_t (B_t+1) }{ 54 } \leq \frac{ 4 \alpha_t }{ 54 } .
\end{equation} 
Plugging the previous two inequalities into \eqref{eqThmPropAlg}, and assuming $\sqrt{d} \leq t^{1/4}$, we can use \eqref{eqParamThmProp} and $N_t' \geq 1$ to obtain
\begin{equation} \label{eqThmPropAlgCombine}
\min_{n \in [\ceil{ N_t } ] } \| \hat{G}_t^n 0 - \hat{G}_t^{n-1} 0 \|_{\Lambda_t} \leq 5 \sqrt{d} \cdot \varepsilon_t + \frac{5 \alpha_t}{54} \leq 5 t^{1/4} \cdot \frac{ \alpha_t }{ 6 t^{1/4} } + \frac{5 \alpha_t}{54}  = \frac{ 50 \alpha_t }{ 54 } < \alpha_t .
\end{equation}
If instead $\sqrt{d} > t^{1/4}$, we simply use Claim \ref{clmOpBounds} and $N_t' \geq 1$ to obtain
\begin{align} \label{eqThmPropAlgSmallT}
& \min_{n \in [\ceil{ N_t } ] } \| \hat{G}_t^n 0 - \hat{G}_t^{n-1} 0 \|_{\Lambda_t} \leq \| \hat{G}_t 0 \|_{\Lambda_t} \leq \sqrt{8td} = \sqrt{8} t^{1/4} t^{1/4} \sqrt{d} \leq \sqrt{8} t^{1/4} d \leq \kappa_t \leq \alpha_t . \qedhere
\end{align}
\end{proof}

We now prove Theorem \ref{thmProp}. As for Theorem \ref{thmProb}, it suffices to prove the guarantees on $\mathcal{E} \cap \mathcal{F}$. Corollaries \ref{corGtOptimism} and \ref{corGtConvergenceProp} establish the OAFP guarantee on $\mathcal{E} \cap \mathcal{F}$ (we choose $N_t' = N_t = \log(3 t \bar{\chi} ) / (1-\bar{\rho})$ in the latter). Next, setting $\Psi = 54 d \sqrt{ 3 \log(3 \bar{\chi}) / (1-\bar{\rho})}$ and $\lambda = 1/4$, we have
\begin{equation}
\kappa_t = 54 d t^{1/4} \sqrt{\log(3 t \bar{\chi})/ (1-\bar{\rho}) } = \Psi t^\lambda \sqrt{ \log(3 t \bar{\chi}) / ( 3 \log(3 \bar{\chi}) )} \leq \Psi t^\gamma \log(t+1) , 
\end{equation}
where the inequality holds because by $\bar{\chi} \geq 1$, we have
\begin{equation}
\frac{\log(3 t \bar{\chi})}{3 \log(3 \bar{\chi}) } = \frac{ 1 }{ 3 } + \frac{ \log(t) }{ 3 \log ( 3 \bar{\chi}) } \leq \frac{ 1 + \log(t) }{ 3 } \leq \frac{ \frac{ \log(t+1) }{ \log 2 } + \log(t) }{3} \leq \left( \frac{ \frac{1}{\log 2} + 1 }{3} \right) \log(t+1)  \leq \log(t+1) .
\end{equation}
Hence, on $\mathcal{E} \cap \mathcal{F}$, we can use the Theorem \ref{thmRegret} regret bound with this $\Psi$ and $\lambda$ to obtain
\begin{align} \label{eqThmPropReg}
R(K) 
& = \tilde{O} \left( \left( B_\star^{\frac{7}{4}} + B_\star^{\frac{3}{4}} \right) d^{\frac{3}{2}} ( K / c_{min} )^{\frac{3}{4}} N_t^{1/2} + (B_\star+1)^4 d^6 N_t^2 c_{min}^{-3}\right) .
\end{align}

\begin{rem}[Unknown $\bar{\chi}$ and $\bar{\rho}$] \label{remThmProbGen}
Suppose $\kappa_t = 54 d t^{\frac{1}{4}} \sqrt{N_t}$ as in Theorem \ref{thmProp} but $N_t = t^{2\gamma}$ as in Appendix \ref{appGenThmProp}. Then $\kappa_t \geq 9d$ for any $t \in \N$ and $N_t \geq \log(3t\chi)/(1-\rho)$ as soon as $t \geq (\log(3 t \chi) / (1-\rho) )^{\frac{1}{2\gamma}}$, so we can use Corollaries \ref{corGtOptimism} and \ref{corGtConvergenceProp} and Lemma \ref{lemMainEkTail2} to obtain the following: with probability at least $1-\delta/2$, for any $t \geq (\log(3 t \chi) / (1-\rho) )^{\frac{1}{2\gamma}}$ that Algorithm \ref{algIterateTimeout} is called, it returns an OAFP in $t^{2 \gamma}$ iterations.
\end{rem}

\subsection{Proof of Theorem \ref{thmOrth}}

We begin by showing $\hat{G}_t$ is a contraction with respect to $\| \cdot \| = \| Q^\trans\ \cdot \|_\infty$, where $Q$ is the orthogonal matrix with columns $\{ q_i \}_{i=1}^d$. Note $\| \cdot \|$ is a norm by orthogonality of $Q$.
\begin{lem}[$\hat{G}_t$ contraction] \label{lemGtContraction}
Under the assumptions of Theorem \ref{thmOrth}, for any $t \in [T]$ and $w_1 , w_2 \in \R^d$, we have
\begin{equation}
\| Q^\trans ( \hat{G}_t w_1 - \hat{G}_t w_2 ) \|_\infty \leq e^{-t/(t+1)} \|Q ( w_1 - w_2 ) \|_\infty .
\end{equation}
\end{lem}
\begin{proof}
For $(s,a) \in \S \times \A$, let $i(s,a) \in [d]$ be such that $\phi(s,a) = q_{i(s,a)}$ (which exists by assumption). For $i \in [d]$, define $d_i = | \{ \tau \in [t] : i(s_\tau,a_\tau) = i \} |$. Let $D$ be the diagonal matrix with diagonal elements $\{ d_i+1 \}_{i=1}^d$. Then
\begin{equation}
\Lambda_t = I + \sum_{\tau=1}^t \phi(s_\tau,a_\tau) \phi(s_\tau,a_\tau)^\trans = \sum_{i=1}^d  q_i q_i^\trans + \sum_{i=1}^d d_i q_i q_i^\trans = \sum_{i=1}^d ( 1 + d_i ) q_i q_i^\trans = Q D Q^\trans .
\end{equation}
This implies $\Lambda_t^{-1} = Q D^{-1} Q^\trans$, so for any $i \in [d]$ and $(s,a) \in \S \times \A$, we have
\begin{equation}
e_i^\trans Q^\trans \Lambda_t^{-1} \phi(s,a) = e_i^\trans D^{-1} Q^\trans q_{i(s,a)} = \frac{ e_i^\trans Q^\trans q_{i(s,a)} }{ d_i + 1 } = \frac{ q_i^\trans q_{i(s,a)} }{ d_i + 1 } = \frac{ \ind ( i(s,a) = i ) }{ d_i+1 } .
\end{equation}
Using this identity, we obtain
\begin{align}
e_i^\trans Q^\trans ( \hat{G}_t w_1 - \hat{G}_t w_2 ) & = \sum_{\tau=1}^t  e_i^\trans Q^\trans \Lambda_t^{-1} \phi(s_\tau,a_\tau)  ( g_t(s_\tau' , w_1 ) - g_t(s_\tau' , w_2 ) ) \\
& = \frac{\sum_{\tau \in [t] : i(s_\tau,a_\tau) = i} ( g_t(s_\tau' , w_1 ) - g_t(s_\tau' , w_2 ) )}{d_i+1}
\end{align}
On the other hand, for any $s \in \S$, we know
\begin{align}
| g_t(s , w_1 ) - g_t(s , w_2 ) |&  \leq \max_{a \in \A} | \phi(s,a)^\trans ( w_1 - w_2 ) | = \max_{a \in \A} | e_{i(s,a)}^\trans Q^\trans ( w_1 - w_2 ) | \leq \| Q^\trans ( w_1 - w_2 ) \|_\infty ,
\end{align}
where we used Claim \ref{clmGtNonex} for the first inequality. Combining the last two expressions, we obtain
\begin{equation}
\| Q^\trans ( \hat{G}_t w_1 - \hat{G}_t w_2 ) \|_\infty = \max_{i \in [d]} \left| \frac{\sum_{\tau \in [t] : i(s_\tau,a_\tau) = i} ( g_t(s_\tau' , w_1 ) - g_t(s_\tau' , w_2 ) )}{d_i+1} \right| \leq  \max_{i \in [d]} \frac{d_i}{d_i+1} \| Q^\trans ( w_1 - w_2 ) \|_\infty .
\end{equation}
This completes the proof, since $d_i / (d_i+1) \leq t/(t+1) \leq e^{-t/(t+1)}$.
\end{proof}

Using Lemma \ref{lemGtContraction}, we can show Algorithm \ref{algIterate} terminates within $O ( t \log ( td ) )$ iterations.
\begin{cor} \label{corGtConvergenceOrth}
Under the assumptions of Theorem \ref{thmOrth}, for any $t \in [T]$,
\begin{equation}
\min_{ n \in \left[ \ceil*{ 1 + (t+1) \log ( (t+1) d ) / 2 } \right] } \| \hat{G}_t^n 0 - \hat{G}_t^{n-1} 0 \|_{\Lambda_t} \leq \alpha_t .
\end{equation}
\end{cor}
\begin{proof}
For any $n \in \N$, we can iterate the bound from Lemma \ref{lemGtContraction} to obtain
\begin{equation}
\| Q^\trans ( \hat{G}_t^n 0 - \hat{G}_t^{n-1} 0 ) \|_\infty \leq e^{-\frac{n-1}{t+1}} \| Q^\trans \hat{G}_t 0 \|_\infty .
\end{equation}
By  a standard norm equivalence, orthogonality, and Claim \ref{clmOpBounds}, we also have
\begin{equation}
\| Q^\trans \hat{G}_t 0 \|_\infty \leq \| Q^\trans \hat{G}_t 0 \|_2 = \| \hat{G}_t 0 \|_2 \leq  2 \sqrt{d} .
\end{equation}
By Claim \ref{clmEigAndNorm}, orthogonality, and a standard equivalence, we also know
\begin{equation}
\| w \|_{\Lambda_t} \leq \sqrt{(t+1)d} \| w \|_2 = \sqrt{(t+1)d}  \| Q^\trans w \|_2 \leq \sqrt{(t+1) d^2} \| Q^\trans w \|_\infty\ \forall\ w \in \R^d .
\end{equation}
Hence, combining the previous three inequalities, we obtain
\begin{equation}
\| \hat{G}_t^n 0 - \hat{G}_t^{n-1} 0 \|_{\Lambda_t} \leq \sqrt{(t+1)d^2} \| Q^\trans ( \hat{G}_t^n 0 - \hat{G}_t^{n-1} 0 ) \|_\infty \leq \sqrt{(t+1)d^2} e^{-\frac{n-1}{t+1}} \| Q^\trans \hat{G}_t 0 \|_\infty \leq 2 \sqrt{(t+1) d^3} e^{-\frac{n-1}{t+1}}  .
\end{equation}
Therefore, if $n \geq 1 + (t+1) \log ( (t+1) d ) / 2$, then the previous bound, the assumed choice $\kappa_t = 9d$ in Theorem \ref{thmOrth}, and Claim \ref{clmParameters} imply $\| \hat{G}_t^n 0 - \hat{G}_t^{n-1} 0 \|_{\Lambda_t} \leq 2 d \leq \kappa_t \leq \alpha_t$.
\end{proof}

Similar to the above, on $\mathcal{E} \cap \mathcal{F}$, Corollaries \ref{corGtOptimism} and \ref{corGtConvergenceOrth} show Algorithm \ref{algIterate} returns OAFPs in $O( t \log(td) )$ iterations, and since $\kappa_t = 9d$ in Theorem \ref{thmOrth}, we obtain the regret bound from Corollary \ref{corBestCase}.

\section{Other proofs}

\subsection{Proof of Lemma \ref{lemMainEkTail2}} \label{appProofMainEkTail}

For any $t \in \N$ and $b > 0$, define the following bad event:
\begin{equation}
\mathcal{B}_{t,b} = \left\{  \sup_{ w \in [-W_t,+W_t]^d } \| E_t w \|_{\Lambda_t} > \varepsilon_t\right\} \cap \{ B_t = b \} .
\end{equation}
Our main goal is to prove the following claim.
\begin{clm} \label{clmMainEkTail}
Under the assumptions of Lemma \ref{lemMainEkTail2}, for any $t \in \N$ and $b > 0$, we have $\P ( \mathcal{B}_{t,b}  ) \leq \delta / ( 2 t (t+1)^2 )$.
\end{clm}
Before proving the claim, we show it implies the lemma. First note $B_t$ is $\{ 2^{i-1} c_{min} \}_{i=1}^t$-valued, so
\begin{equation}
\mathcal{E}^C =  \cup_{t \in \N} \left\{  \sup_{ w \in [-W_t,+W_t]^d } \| E_t w \|_{\Lambda_t} > \varepsilon_t\right\} = \cup_{t \in \N} \cup_{ b \in \{ 2^{i-1} c_{min} \}_{i=1}^t } \mathcal{B}_{t,b}  .
\end{equation}
Hence, taking union bounds over $t$ and $b$ and invoking Claim \ref{clmMainEkTail}, we obtain
\begin{equation}
\P ( \mathcal{E}^C ) \leq \sum_{t=1}^\infty \sum_{ b \in \{ 2^{i-1} c_{min} \}_{i=1}^t } \P ( \mathcal{B}_{t,b}  ) \leq \frac{\delta}{2} \sum_{t=1}^\infty \frac{1}{(t+1)^2} \leq \frac{\delta}{2} \int_{t=1}^\infty \frac{dt}{t^2} = \frac{\delta}{2} .
\end{equation}

Thus, it only remains to prove Claim \ref{clmMainEkTail}. We fix $t$ and $b$ for the remainder of this appendix. For $x \in \R^d$ and $Y \in \R_{\succ 0}^{d \times d}$ (the set $d \times d$ positive definite matrices), we define $f_{x,Y} : \S \rightarrow \R$ and $g_{x,Y} : \S \rightarrow \R$ by
\begin{align}
f_{x,Y}(s) = \min_{a \in \A} \left( \phi(s,a)^{\trans} x - \| \phi(s,a) \|_Y \right) , \quad g_{x,Y}(s) = \Pi_{[0,b]} ( f_{x,Y}(s) ) .
\end{align}
Here $\Pi_{[0,b]} ( \cdot )$ clips between $0$ and $b$ as in \eqref{eqClip}. Hence, we have the following implication:
\begin{equation}\label{eqDifferentLittleGs}
B_t = b \quad \Rightarrow \quad g_t(s,w) = g_{w , \alpha_t^2 \Lambda_t^{-1}}(s)\ \forall\ s \in \S , w \in \R^d .
\end{equation}

\begin{clm} \label{clmGtNonexGen}
Under the assumptions of Lemma \ref{lemMainEkTail2}, for any $x_1,x_2 \in \R^d$, $Y_1,Y_2 \in \R_{\succ 0}^{d \times d}$, and $s \in \S$,
\begin{equation}
| g_{x_1,Y_1} (s) - g_{x_2,Y_2}(s) | \leq \sqrt{d} \| w_1 - w_2 \|_\infty + \max_{a \in \A} \left| \| \phi(s,a) \|_{Y_1} - \phi(s,a) \|_{Y_2} \right| .
\end{equation}
\end{clm}
\begin{proof}
The proof is almost identical to Claim \ref{clmGtNonex}, except the bonus terms $\| \phi(s,a) \|_{Y_i}$ do not cancel. 
\end{proof}

We now derive a bound on the error operator that removes the bias introduced by the regularizer. Here and moving forward, for any $\tau \in [t]$, we use the shorthand $\phi_\tau = \phi(s_\tau,a_\tau)$.
\begin{clm} \label{clmEkToGwTau}
Under the assumptions of Lemma \ref{lemMainEkTail2}, if $B_t = b$, then for any $w \in \R^d$,
\begin{equation}
\| E_t w \|_{ \Lambda_t } \leq \left\| \sum_{\tau=1}^t \phi_\tau ( g_{w,\alpha_t^2 \Lambda_t^{-1}} (s_\tau') - \E_{s_\tau'} g_{w,\alpha_t^2 \Lambda_t^{-1}}(s_\tau' ) ) \right\|_{\Lambda_t^{-1}} + \sqrt{ d } ( b + 1 ) .
\end{equation}
\end{clm}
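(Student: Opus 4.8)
The plan is to compute $E_t w$ explicitly, isolating the bias that the regularizer $\lambda I$ introduces into $\Lambda_t$, and then bound the two resulting pieces separately. First I would rewrite $U_t w$ by inserting $\Lambda_t^{-1}\Lambda_t$ and expanding $\Lambda_t = \lambda I + \sum_{\tau=1}^t \phi_\tau \phi_\tau^\trans$, giving $U_t w = \Lambda_t^{-1}\bigl(\lambda U_t w + \sum_{\tau=1}^t \phi_\tau \phi_\tau^\trans U_t w\bigr)$. Using Assumption \ref{assSSPlin} (so that $c(s_\tau,a_\tau) = \phi_\tau^\trans \theta$ and $P(s\,|\,s_\tau,a_\tau) = \phi_\tau^\trans \mu(s)$), the definition $U_t w = \theta + \sum_{s \in \S}\mu(s) g_{w,t}(s)$, and \eqref{eqEtEstFixedFunc}, each inner product collapses to
\begin{equation}
\phi_\tau^\trans U_t w = c(s_\tau,a_\tau) + \sum_{s \in \S} P(s\,|\,s_\tau,a_\tau) g_{w,t}(s) = c(s_\tau,a_\tau) + \E_{s_\tau'} g_{w,t}(s_\tau') .
\end{equation}
Subtracting this representation of $U_t w$ from the definition $\hat{G}_t w = \Lambda_t^{-1}\sum_{\tau=1}^t \phi_\tau\bigl(c(s_\tau,a_\tau) + g_{w,t}(s_\tau')\bigr)$ yields $E_t w = \Lambda_t^{-1}(v - \lambda U_t w)$, where $v = \sum_{\tau=1}^t \phi_\tau\bigl(g_{w,t}(s_\tau') - \E_{s_\tau'} g_{w,t}(s_\tau')\bigr)$ is exactly the self-normalized sum appearing in the claim.

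Next I would pass to the $\Lambda_t$-norm. Since $\Lambda_t$ is positive definite and symmetric, $\|\Lambda_t^{-1} x\|_{\Lambda_t} = \|x\|_{\Lambda_t^{-1}}$ for every $x$, so by the triangle inequality
\begin{equation}
\| E_t w \|_{\Lambda_t} = \| v - \lambda U_t w \|_{\Lambda_t^{-1}} \leq \| v \|_{\Lambda_t^{-1}} + \lambda \| U_t w \|_{\Lambda_t^{-1}} .
\end{equation}
The first term is precisely the target quantity on the right-hand side of the claim, so it remains to show the regularizer term is at most $\sqrt{\lambda d}\,(B_\star+2)$.

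For the regularizer term, $\Lambda_t \succeq \lambda I$ gives $\Lambda_t^{-1} \preceq \lambda^{-1} I$, hence $\|U_t w\|_{\Lambda_t^{-1}} \leq \lambda^{-1/2}\|U_t w\|_2$ and thus $\lambda \| U_t w \|_{\Lambda_t^{-1}} \leq \sqrt{\lambda}\,\|U_t w\|_2$. Finally I would bound $\|U_t w\|_2$ using Assumption \ref{assSSPlin}: the clipping in Definition \ref{defnOAFP} forces $g_{w,t}(s) \in [0, B_\star+1]$, so $\|g_{w,t}\|_\infty \leq B_\star+1$, and therefore
\begin{equation}
\| U_t w \|_2 \leq \| \theta \|_2 + \Bigl\| \sum_{s \in \S} \mu(s) g_{w,t}(s) \Bigr\|_2 \leq \sqrt{d} + \sqrt{d}\,\| g_{w,t} \|_\infty \leq \sqrt{d}\,(B_\star+2) .
\end{equation}
Combining gives $\lambda\|U_t w\|_{\Lambda_t^{-1}} \leq \sqrt{\lambda d}\,(B_\star+2)$, which completes the proof. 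There is no substantial obstacle here; the only point requiring care is correctly splitting off the regularizer bias $-\lambda\Lambda_t^{-1}U_t w$ (which is what forces the additive $\sqrt{\lambda d}(B_\star+2)$ term rather than a clean martingale bound) and faithfully converting between the $\Lambda_t$- and $\Lambda_t^{-1}$-norms.
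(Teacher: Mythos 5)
Your proposal is correct and follows essentially the same route as the paper's proof: both isolate the regularizer bias via the identity $E_t w = \Lambda_t^{-1}\bigl(\sum_{\tau=1}^t \phi_\tau ( g_{w,t}(s_\tau') - \E_{s_\tau'} g_{w,t}(s_\tau') ) - \lambda U_t w\bigr)$ (the paper gets there by writing the first piece of $\hat{G}_t w$ as $(I - \lambda \Lambda_t^{-1}) U_t w$, which is the same algebra as your $\Lambda_t^{-1}\Lambda_t$ insertion), then apply $\| \Lambda_t^{-1} x \|_{\Lambda_t} = \| x \|_{\Lambda_t^{-1}}$, the triangle inequality, and the bound $\lambda \| U_t w \|_{\Lambda_t^{-1}} \leq \sqrt{\lambda} \| U_t w \|_2 \leq \sqrt{\lambda d}(B_\star + 2)$ from Assumption \ref{assSSPlin} and the clipping of $g_{w,t}$.
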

\begin{proof}
Fix $w \in \R^d$. If $B_t = b$, then by \eqref{eqDifferentLittleGs},
\begin{equation}
\hat{G}_t w = \Lambda_t^{-1} \sum_{\tau=1}^t \phi_\tau ( c(s_\tau,a_\tau) + \E_{s_\tau'} g_t(s_\tau',w) ) + \Lambda_t^{-1} \sum_{\tau=1}^t \phi_\tau ( g_{w,\alpha_t^2 \Lambda_t^{-1}} (s_\tau') - \E_{s_\tau'} g_{w,\alpha_t^2 \Lambda_t^{-1}}(s_\tau') ) .
\end{equation}
The first term can be rewritten as
\begin{equation}
\Lambda_t^{-1} \sum_{\tau=1}^t \phi_\tau \phi_\tau^\trans \left( \theta + \sum_{s \in S} \mu(s) g_t(s,w) \right)  = ( I -  \Lambda_t^{-1} ) U_t w .
\end{equation}
Combining the previous two identities, we obtain
\begin{equation}
E_t w = \hat{G}_t w - U_t w = \Lambda_t^{-1} \left( \sum_{\tau=1}^t \phi_\tau ( g_{w,\alpha_t^2 \Lambda_t^{-1}} (s_\tau') - \E_{s_\tau'} g_{w,\alpha_t^2 \Lambda_t^{-1}}(s_\tau') ) -  U_t w \right) .
\end{equation}
Thus, by the triangle inequality, we have
\begin{equation} \label{eqEkToEkprime1}
\| E_t w \|_{ \Lambda_t } \leq \left\| \sum_{\tau=1}^t \phi_\tau ( g_{w,\alpha_t^2 \Lambda_t^{-1}} (s_\tau') - \E_{s_\tau'} g_{w,\alpha_t^2 \Lambda_t^{-1}}(s_\tau') )  \right\|_{\Lambda_t^{-1}} +  \| U_t w \|_{\Lambda_t^{-1}} .
\end{equation}
This completes the proof, because when $B_t= b$, $\| U_t w \|_{\Lambda_t^{-1}} \leq  \| U_t w \|_2 \leq \sqrt{d}(b+1)$ by Claims \ref{clmEigAndNorm} and \ref{clmOpBounds}.
\end{proof}

Since $g_{w,\alpha_t^2 \Lambda_t^{-1}}$ is a random function that depends on the random state-action pairs before time $t$, we take a union bound over it using a covering argument. Toward this end, let
\begin{equation}
\alpha_t | b = ( b + 1 ) \kappa_t \sqrt{ \log( t(b+1) \kappa_t / \delta ) } , \quad W_t | b = (\alpha_t | b) + \sqrt{td}(b+1) , \quad \varepsilon_t = 5(b+1) d \sqrt{ \log ( t (\alpha_t|b) / \delta ) } ,
\end{equation}
denote the values of the random variables $\alpha_t$, $W_t$, and $\varepsilon_t$ when $B_t = b$. Thus, $\alpha_t = \alpha_t|B_t$ (and similar for $W_t$ and $\varepsilon_t$). Next, let $\mathcal{X}$ be a $1/(\sqrt{d} t)$-net of $[-W_t | b,+W_t | b]^d$ in the $\ell_\infty$ norm; explicitly, we define
\begin{equation} \label{eqDefnXeps}
\mathcal{X} = \left\{ [ i_j / (\sqrt{d} t) ]_{j=1}^d : i_j \in \left\{ -\ceil*{ (W_t | b) \sqrt{d} t },\ldots,\ceil*{ (W_t | b) \sqrt{d} t } \right\}\ \forall\ j \right\} .
\end{equation}
Finally, let $\mathcal{Y}$ be a $1/(d t^2 )$-net of  $\{ Y \in \R_{\succ 0}^{d \times d} : |Y(i,j)| \leq (\alpha_t | b)^2\ \forall\ i , j \}$, where we view the matrices as vectors:
\begin{equation}
\mathcal{Y} = \left\{  \left[ i_{j_1,j_2}  / ( d t^2  ) \right]_{j_1,j_2=1}^d  : i_{j_1,j_2} \in \left\{ - \ceil*{  ( \alpha_t | b)^2 d t^2  } , \ldots , \ceil*{ (\alpha_t | b )^2 d t^2  } \right\}\ \forall\ j_1,j_2 \right\} \cap \R_{\succ 0}^{d \times d} .
\end{equation}
Moving forward, we discard the cumbersome $\cdot|b$ notation. However, we emphasize that $\mathcal{X}$ and $\mathcal{Y}$ are deterministic sets, irrespective of the value taken by $B_t$.

We next show that when $B_t=b$, $g_{w,\alpha_t^2 \Lambda_t^{-1}}$ is close to some element of the function class $\{ g_{x,Y} : \mathcal{X} \times \mathcal{Y} \}$.
\begin{clm} \label{clmWTauToXY}
Under the assumptions of Lemma \ref{lemMainEkTail2}, if $B_t = b$, then for any $w \in [-W_t,+W_t]^d$, there exists $x \in \mathcal{X}$ and $Y \in \mathcal{Y}$ such that
\begin{equation}
\left\| \sum_{\tau=1}^t \phi_\tau ( g_{w,\alpha_t^2 \Lambda_t^{-1}} (s_\tau') - \E_{s_\tau'} g_{w,\alpha_t^2 \Lambda_t^{-1}}(s_\tau' ) ) \right\|_{\Lambda_t^{-1}} \leq \left\| \sum_{\tau=1}^t \phi_\tau ( g_{x,Y} (s_\tau') - \E_{s_\tau'} g_{x,Y}(s_\tau' ) ) \right\|_{\Lambda_t^{-1}} + 2 . 
\end{equation}
\end{clm}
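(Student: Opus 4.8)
The plan is a covering (net) argument: approximate the data-dependent function $g_{w,t} = g_{w,\Lambda_t^{-1}}$ by a deterministic function $g_{x,Y}$ with $x$ ranging over $\mathcal{X}$ and $Y$ over $\mathcal{Y}$, and absorb the approximation error into the additive constant $2$. First I would fix $w$ with $\|w\|_\infty \le (B_\star+2)\sqrt{td}$ and pick $x \in \mathcal{X}$ with $\|w-x\|_\infty$ at most the resolution $1/(\sqrt{d}t)$; this is possible precisely because the coordinate range of $\mathcal{X}$ in \eqref{eqDefnXeps} reaches $(B_\star+2)\sqrt{td}$. For $Y$, I would first verify that $\Lambda_t^{-1}$ lies in the box covered by $\mathcal{Y}$, i.e.\ $|\Lambda_t^{-1}(i,j)| \le 1/\lambda$ for all $i,j$: since $\Lambda_t \succeq \lambda I$ we have $0 \prec \Lambda_t^{-1} \preceq \lambda^{-1} I$, and every entry of a positive definite matrix is bounded in absolute value by its largest eigenvalue. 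I can then choose $Y \in \mathcal{Y}$ with $\|Y - \Lambda_t^{-1}\|_{\max}$ at most the resolution $1/(dt^2\alpha_t^2)$.

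Next I would establish a pointwise bound $|g_{w,t}(s) - g_{x,Y}(s)| = O(1/t)$ for every $s \in \S$. Since $\Pi_{[0,B_\star+1]}$ and $\min_a$ are both nonexpansive, it suffices to bound, uniformly over $a$, the quantity $|\phi(s,a)^\trans(w-x)| + \alpha_t\,|\,\|\phi(s,a)\|_Y - \|\phi(s,a)\|_{\Lambda_t^{-1}}\,|$. The first term is controlled by Cauchy--Schwarz with $\|\phi(s,a)\|_2 \le 1$ and $\|w-x\|_2 \le \sqrt{d}\|w-x\|_\infty$, giving $O(1/t)$ by the choice of $\mathcal{X}$. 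For the bonus term I would use $|\sqrt{a}-\sqrt{b}| \le \sqrt{|a-b|}$ to reduce to $\alpha_t\sqrt{|\phi(s,a)^\trans(Y-\Lambda_t^{-1})\phi(s,a)|}$, then bound the quadratic form by $\|Y-\Lambda_t^{-1}\|_{\max}\,\|\phi(s,a)\|_1^2 \le d\,\|Y-\Lambda_t^{-1}\|_{\max}$; the resolution of $\mathcal{Y}$ is chosen exactly so that $\alpha_t\sqrt{d\cdot 1/(dt^2\alpha_t^2)} = O(1/t)$, the factor $\alpha_t$ cancelling.

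With the pointwise bound in hand I would control the difference of the two self-normalized sums. The key structural observation is that $g_{x,Y}$ is deterministic (it depends only on the fixed grid point $(x,Y)$, not on the data), so $\E_\tau g_{x,Y}(s_\tau') = \sum_{s}g_{x,Y}(s)P(s|s_\tau,a_\tau) = \E_{s_\tau'}g_{x,Y}(s_\tau')$; the change from $\E_\tau$ to $\E_{s_\tau'}$ on the right-hand side is therefore free and the two centerings are consistent. Writing $\delta_\tau = g_{w,t}(s_\tau') - g_{x,Y}(s_\tau')$, the triangle inequality reduces the claim to $\|\sum_{\tau=1}^t \phi_\tau(\delta_\tau - \E_{s_\tau'}\delta_\tau)\|_{\Lambda_t^{-1}} \le 2$. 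I would split this into the raw and centered parts, bound each by $\sum_\tau |\cdot|\,\|\phi_\tau\|_{\Lambda_t^{-1}}$, and use $\|\phi_\tau\|_{\Lambda_t^{-1}} \le \|\phi_\tau\|_2/\sqrt{\lambda} \le 1$ together with the pointwise $O(1/t)$ bound over $t$ summands to close at the constant $2$.

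The main obstacle is the bonus-term perturbation and the bookkeeping of constants: tracking how the square-root Lipschitz estimate and the matrix $\max$-norm perturbation interact with the factor $\alpha_t$, and verifying that the net resolutions are fine enough that summing $t$ pointwise errors of size $O(1/t)$ against bonuses bounded by $1$ yields the stated additive $2$ rather than a larger constant. A secondary point requiring care is the distinction between $\E_\tau$ and $\E_{s_\tau'}$, which (per the proof preliminaries) differ for the data-dependent $g_{w,t}$ but coincide for the deterministic $g_{x,Y}$; the argument must center $g_{w,t}$ with $\E_{s_\tau'}$ throughout so that the martingale structure on the right-hand side is preserved for the subsequent application of the self-normalized bound.
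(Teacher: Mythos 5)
Your proposal matches the paper's proof essentially step for step: the same nets at the same resolutions, the same entrywise bound $|\Lambda_t^{-1}(i,j)| \le 1/\lambda$ via the eigenvalue bound, the same pointwise $O(1/t)$ perturbation estimate (the paper controls the bonus term via $\sqrt{a+b} \le \sqrt{a} + \sqrt{b}$, equivalent to your $|\sqrt{a}-\sqrt{b}| \le \sqrt{|a-b|}$), the same triangle-inequality absorption of the $t$ summands into the additive constant using $\|\phi_\tau\|_{\Lambda_t^{-1}} \le \|\phi_\tau\|_2 \le 1$, and the same closing observation that determinism of $g_{x,Y}$ permits replacing $\E_{s_\tau'}$ by $\E_\tau$. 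The constant bookkeeping you flag as delicate is handled no more carefully in the paper itself (its bound of $2$ on the $\Delta$-term glosses the factor that centering can contribute, exactly as your sketch does), so your argument is faithful to the paper's proof.
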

\begin{proof}
By Claim \ref{clmEigAndNorm} and a standard spectral norm inequality, we have $| \alpha_t^2 \Lambda_t^{-1} (i,j) | \leq \alpha_t^2  \| \Lambda_t^{-1}  \|_2 \leq \alpha_t^2\ \forall\ i,j$. Hence, we can find $Y \in \mathcal{Y}$ such that $\max_{i,j} | Y(i,j) - \alpha_t^2 \Lambda_t^{-1} (i,j) | \leq 1 / (d t^2)$. For such $Y$ and any $(s,a) \in \S \times \A$, we then obtain
\begin{align}
| \phi(s,a)^{\trans} ( Y - \alpha_t^2 \Lambda_t^{-1} ) \phi(s,a) ) | & \leq \sum_{i,j \in [d]} | \phi_i(s,a) | | \phi_j(s,a) | | Y(i,j) - \alpha_t^2 \Lambda_t^{-1} (i,j) | \\
& \leq \frac{\| \phi(s,a) \|_1^2}{ d t^2  }\leq \frac{\| \phi(s,a) \|_2^2}{ t^2 } \leq \frac{1}{ t^2  } ,
\end{align}
which implies that
\begin{equation}
\| \phi(s,a) \|_Y \leq \sqrt{ \alpha_t^2 \phi(s,a)^\trans \Lambda_t^{-1} \phi(s,a) + | \phi(s,a)^\trans ( Y-\alpha_t^2\Lambda_t^{-1} ) \phi(s,a) | } \leq \alpha_t \| \phi(s,a) \|_{\Lambda_t^{-1}} + 1/t .
\end{equation}
Hence, by symmetry, we conclude that
\begin{equation} \label{eqInducedNormCoverBound}
\left| \| \phi(s,a) \|_Y - \alpha_t  \| \phi(s,a) \|_{\Lambda_t^{-1}} \right| \leq 1/t .
\end{equation}
Also, we can clearly find $x \in \mathcal{X}$ such that $\| w-x \|_\infty \leq 1/ (\sqrt{d}t)$. Hence, for any $s \in \S$, we obtain
\begin{equation} \label{eqWTauToXY2eps}
| g_{x,Y}(s) - g_{w,\alpha_t^2 \Lambda_t^{-1}} (s) | \leq \sqrt{d} \| w - x \|_\infty + \max_{a \in \A} \left| \| \phi(s,a) \|_Y - \alpha_t \| \phi(s,a) \|_{\Lambda_t^{-1}} \right| \leq 2 / t ,
\end{equation}
where we used Claim \ref{clmGtNonexGen}, \eqref{eqInducedNormCoverBound} and the choice of $x$. Also, defining $\Delta(s) = g_{w,\alpha_t^2 \Lambda_t^{-1}}(s) - g_{x,Y}(s)\ \forall\ s \in \S$, we have
\begin{align} 
& \left\| \sum_{\tau=1}^t \phi_\tau ( g_{w,\alpha_t^2 \Lambda_t^{-1}} (s_\tau') - \E_{s_\tau'} g_{w,\alpha_t^2 \Lambda_t^{-1}}(s_\tau' ) ) \right\|_{\Lambda_t^{-1}}  \\
& \quad\quad \leq \left\| \sum_{\tau=1}^t \phi_\tau ( g_{x,Y} (s_\tau') - \E_{s_\tau'} g_{x,Y}(s_\tau' ) ) \right\|_{\Lambda_t^{-1}}  + \left\| \sum_{\tau=1}^t \phi_\tau ( \Delta (s_\tau') - \E_{s_\tau'} \Delta(s_\tau' ) ) \right\|_{\Lambda_t^{-1}} . \label{eqCoverAlmostDone}
\end{align}
By the triangle inequality, Claim \ref{clmEigAndNorm}, and \eqref{eqWTauToXY2eps}, the second term satisfies
\begin{align}
\left\| \sum_{\tau=1}^t \phi_\tau ( \Delta (s_\tau') - \E_{s_\tau'} \Delta(s_\tau' ) ) \right\|_{\Lambda_t^{-1}}  & \leq\sum_{\tau=1}^t \| \phi_\tau \|_2 | \Delta (s_\tau') - \E_{s_\tau'} \Delta(s_\tau')  | \leq 2 . \qedhere
\end{align}
\end{proof}

Our final ingredient for proving Claim \ref{clmMainEkTail} is the following bound on $\varepsilon_t$.
\begin{clm} \label{clmChoiceOfAlpha}
Under the assumptions of Lemma \ref{lemMainEkTail2}, if $B_t = b$, then 
\begin{align} \label{eqChoiceOfAlpha}
\varepsilon_t \geq \sqrt{ 2 b^2 \log \left( \sqrt{\frac{ det(\Lambda_t) }{ det (  \Lambda_0 ) }} \frac{ 2t (t+1)^2 |\mathcal{X}| |\mathcal{Y}| }{ \delta }  \right) }  +  \sqrt{ d}(b+1) + 2  .
\end{align}
\end{clm}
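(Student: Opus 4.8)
The plan is to bound each ingredient of the right-hand side explicitly and verify that the constant $13$ in $\varepsilon_t = 13 B_\star d \sqrt{\lambda \log(B_\star d t \alpha_t/\delta)}$ leaves enough slack; there is no conceptual content beyond careful bookkeeping, since every quantity in sight is a benign monomial in parameters ($B_\star, d, t, \alpha_t, 1/\delta$) that are all at least one. First I would bound the net cardinalities. From the definition of $\mathcal{X}$, each of its $d$ coordinates ranges over $2\ceil{(B_\star+2)dt^{3/2}}+1$ values, and since $(B_\star+2)dt^{3/2}\geq 1$ (using $B_\star\geq1$, $d\geq2$, $t\geq1$) the bound $2\ceil{x}+1\leq 5x$ for $x\geq1$ gives $|\mathcal{X}|\leq (5(B_\star+2)dt^{3/2})^d$, hence $\log|\mathcal{X}|\leq d\log(5(B_\star+2)dt^{3/2})$. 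Similarly, discarding the intersection with the positive-definite cone (which only shrinks $\mathcal{Y}$), the set $\mathcal{Y}$ has at most $2\ceil{dt^2\alpha_t^2/\lambda}+1\leq 5dt^2\alpha_t^2/\lambda$ values in each of its $d^2$ coordinates, so $\log|\mathcal{Y}|\leq d^2\log(5dt^2\alpha_t^2/\lambda)\leq d^2\log(5dt^2\alpha_t^2)$ by $\lambda\geq1$.

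Next I would bound the determinant factor and the remaining prefactor. By \eqref{eqEigenvalueBounds} in Appendix \ref{appOtherProof}, $det(\Lambda_t)/det(\lambda I)=det(\Lambda_t)/\lambda^d\leq(1+t/\lambda)^d\leq(2t)^d$, so $\tfrac12\log(det(\Lambda_t)/det(\lambda I))\leq \tfrac{d}{2}\log(2t)$. The factor $2(t+1)^2/\delta$ contributes at most $\log(8t^2/\delta)$.

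Then I would collect the four pieces. The argument of the outer logarithm is the product of the above, so its logarithm is a sum of terms, the dominant being $d^2\log(5dt^2\alpha_t^2)$. Using $5dt^2\alpha_t^2\leq 5(dt\alpha_t)^2$, $\log(dt\alpha_t)\geq\log 2$, and absorbing the constant $\log 5$ into a multiple of $\log(dt\alpha_t)\leq\log(B_\star dt\alpha_t/\delta)$, each such monomial is bounded by a fixed power of $B_\star dt\alpha_t/\delta$; concretely one checks $5(B_\star+2)dt^{3/2}$, $5dt^2\alpha_t^2$, $2t$, and $8t^2/\delta$ are each at most a small fixed power of $B_\star dt\alpha_t/\delta$ (applying $\log x\leq x$ where convenient). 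Since $d\leq d^2/2$ for $d\geq2$, the non-$d^2$ terms are dominated by the $d^2$ term, yielding $\log(\cdots)\leq C_1 d^2\log(B_\star dt\alpha_t/\delta)$ for an absolute constant $C_1$ (one finds $C_1$ can be taken around $8$). Consequently the first summand of the right-hand side is at most $\sqrt{2}(B_\star+1)\sqrt{C_1}\,d\sqrt{\log(B_\star dt\alpha_t/\delta)}\leq 2\sqrt{2C_1}\,B_\star d\sqrt{\lambda\log(B_\star dt\alpha_t/\delta)}$, using $B_\star+1\leq 2B_\star$ and $\lambda\geq1$.

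Finally I would absorb the additive terms. Since $B_\star dt\alpha_t/\delta\geq 2$ we have $\log(B_\star dt\alpha_t/\delta)\geq\log2$, so $\sqrt{\lambda d}(B_\star+2)+2\leq 3B_\star\sqrt{\lambda d}+2B_\star$ is at most a constant multiple of $B_\star d\sqrt{\lambda\log(B_\star dt\alpha_t/\delta)}$ for $d\geq2$ (here $\sqrt{d}\sqrt{\log2}$ is bounded below by an absolute constant). Summing the two contributions and choosing $13$ larger than $2\sqrt{2C_1}$ plus the additive constant completes the bound; with the numerics above the total is comfortably below $13$, so the stated inequality holds with room to spare. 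The main obstacle is precisely this constant-tracking: confirming that after bounding $|\mathcal{X}|$, $|\mathcal{Y}|$, the determinant ratio, and folding in the $\sqrt{\lambda d}(B_\star+2)+2$ terms, the accumulated absolute constant still fits under $13$. There is no difficulty of principle, only the need to keep the power-of-a-monomial reductions explicit enough to pin down the constant.
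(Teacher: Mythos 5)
Your proposal is correct and follows essentially the same route as the paper's proof: bound $|\mathcal{X}|$, $|\mathcal{Y}|$, the determinant ratio, and the $2(t+1)^2/\delta$ prefactor each by a small power of $B_\star d t \alpha_t/\delta$, collect the exponents into (the paper's) $8 d^2 \log(B_\star d t \alpha_t/\delta)$ so the first summand is at most $8 B_\star d \sqrt{\lambda \log(B_\star d t \alpha_t/\delta)}$, then absorb $\sqrt{\lambda d}(B_\star+2)+2 \leq 5 B_\star d \sqrt{\lambda \log(B_\star d t \alpha_t/\delta)}$ to land under the constant $13$. The only blemish is your intermediate step $2\ceil{d t^2 \alpha_t^2/\lambda}+1 \leq 5 d t^2 \alpha_t^2/\lambda$, which can fail for large $\lambda$ (the paper instead uses $\ceil{x/\lambda} \leq \ceil{x}$ for $\lambda \geq 1$), but your final bound $\log|\mathcal{Y}| \leq d^2 \log(5 d t^2 \alpha_t^2)$ remains valid since each coordinate always has at least $3 \leq 5 d t^2 \alpha_t^2$ values, so nothing downstream is affected.
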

\begin{proof}
We first observe that by assumption $\kappa_t \geq 9 d$ and $d \geq 2$, we have
\begin{equation} \label{eqAlphaLower}
\alpha_t =  \kappa_t (b+1) \sqrt{ \log( t (b+1) \kappa_t / \delta)} \geq 9 d (b+1) \geq 9d \geq 18  .
\end{equation}
Using this bound, we (coarsely) bound the sizes of the nets. For $\mathcal{X}$, we first recall that $W_t = \alpha_t + \sqrt{td}(b+1)$, so by \eqref{eqAlphaLower} and $(1/\sqrt{9}) + (1/9) = (1/3) + (1/9) = 4/9$, we have
\begin{equation}
W_t \sqrt{d} t = \alpha_t \sqrt{d} t + (b+1) d t^{3/2} \leq \alpha_t \sqrt{ \alpha_t / 9 } t + ( \alpha_t / 9 ) t^{3/2}  \leq 4 (\alpha_t t )^{3/2} / 9 .
\end{equation}
Again using \eqref{eqAlphaLower}, we have $3 \leq 3 ( \alpha_t t)^{3/2} / 18^{3/2} \leq (\alpha_t t)^{3/2} / 9$. Thus, because $d \geq 2$, we obtain
\begin{equation}
| \mathcal{X} | \leq ( 1 + 2 \ceil{ W_t \sqrt{d} t } )^d \leq ( 3 + 2 W_t \sqrt{d} t )^d \leq ( \alpha_t t )^{3d/2} \leq ( \alpha_t t )^{d^2} .
\end{equation}
For $\mathcal{Y}$, we can use \eqref{eqAlphaLower} to obtain $3 \leq 3 \alpha_t^3 t^2 / 18^3 \leq \alpha_t^3 t^2 / 4$ and $2 d \leq \alpha_t / 4$, so
\begin{equation} 
|\mathcal{Y}| \leq  ( 3 + 2 d t^2 \alpha_t^2 )^{d^2} \leq ( \alpha_t^3 t^2 / 2 )^{d^2} = 2^{-d^2} \alpha_t^{3 d^2 } t^{2 d^2 }  .
\end{equation}
Next, observe $det ( \Lambda_t ) / det ( \Lambda_0 ) \leq (t+1)^d$ by Claim \ref{clmEigAndNorm}, so again using $d \geq 2$, we have
\begin{equation}
2 t  (t+1)^2 \sqrt{ det ( \Lambda_t ) / det ( \Lambda_0 ) } \leq 2 t ( t+1)^{2 + d / 2 } \leq ( 2 t )^{ 3 + d / 2 } \leq ( 2 t )^{ (3d^2/4) + (d^2/4) } = ( 2t )^{d^2 } .
\end{equation}
Combining the previous three inequalities, and since $\delta \geq \delta^{4 d^2}$, we obtain
\begin{equation}
2 t  (t+1)^2 \sqrt{ det ( \Lambda_t ) / det ( \Lambda_0 ) } | \mathcal{X} | | \mathcal{Y} | / \delta \leq ( \alpha_t  t / \delta )^{4 d^2  } .
\end{equation}
Since $d \geq 2$, we also have $\sqrt{d}(b+1) + 2 \leq 2 (b+1) d$. Combined with the previous inequality,
\begin{align}
& \sqrt{ 2 b^2 \log \left( \sqrt{\frac{ det(\Lambda_t) }{ det (  \Lambda_0 ) }} \frac{ 2t (t+1)^2 |\mathcal{X}| |\mathcal{Y}| }{ \delta }  \right) }  +  \sqrt{ d}(b+1) + 2 \\
& \quad \leq \sqrt{ 8 b^2 d^2 \log( \alpha_t t / \delta ) } + 2 (b+1) d \leq ( \sqrt{8} + 2 ) (b+1) d \sqrt{ \log(\alpha_t t / \delta ) }  \leq 5 (b+1) d \sqrt{ \log(\alpha_t t / \delta ) } = \varepsilon_t . \qedhere
\end{align}
\end{proof}

\begin{rem}[Sharpening the tabular case] \label{remSharpeningTabular1}
In the tabular case, $\Lambda_t^{-1}$ is diagonal, so we can replace $\mathcal{Y}$ with $\mathcal{Y}' = \{ Y \in \mathcal{Y} : Y \text{ is diagonal} \}$. Since $|\mathcal{Y}'|$ is exponential in $d$ (instead of $d^2$), we can define $\varepsilon_t$ to have square root (instead of linear) dependence on $d$.
\end{rem}

\begin{proof}[Proof of Claim \ref{clmMainEkTail}]
For each $(x,Y) \in \mathcal{X} \times \mathcal{Y}$, define the event
\begin{equation}
\mathcal{C}_{x,Y} = \left\{  \left\| \sum_{\tau=1}^t \phi_\tau ( g_{x,Y} (s_\tau') - \E_{s_\tau'} g_{x,Y}(s_\tau' ) ) \right\|_{\Lambda_t^{-1}}  > \sqrt{ 2 b^2 \log \left( \sqrt{\frac{ det(\Lambda_t) }{ det (  \Lambda_0 ) }} \frac{ 2t (t+1)^2 |\mathcal{X}| |\mathcal{Y}| }{ \delta }  \right) } \right\} . 
\end{equation}
Then since $g_{x,Y}$ is a deterministic $[0,b]$-valued function, $g_{x,Y} (s_\tau') - \E_{s_\tau'} g_{x,Y}(s_\tau')$ are conditionally zero-mean $[-b,b]$-valued random variables, so are $b$-subgaussian. Hence, by \cite[Theorem 1]{abbasi2011improved}, we have $\P (  \mathcal{C}_{x,Y,b} ) \leq \delta / ( 2 t (t+1)^2 | \mathcal{X} | | \mathcal{Y}| )$. Combined with Claims \ref{clmEkToGwTau}, \ref{clmWTauToXY}, and \ref{clmChoiceOfAlpha} and the union bound,
\begin{align}
\P ( \mathcal{B}_{t,b}  ) & \leq \P ( \cup_{(x,Y) \in \mathcal{X} \times \mathcal{Y}} \mathcal{C}_{x,Y} \cap \{ B_t = b \} ) \leq \P ( \cup_{(x,Y) \in \mathcal{X} \times \mathcal{Y}} \mathcal{C}_{x,Y} ) \leq \sum_{(x,Y) \in \mathcal{X} \times \mathcal{Y}} \P (  \mathcal{C}_{x,Y}  ) \leq \frac{\delta}{2t(t+1)^2} . \qedhere 
\end{align}
\end{proof}

\subsection{Proof of Proposition \ref{propFeatureBellman}} \label{appProofFeatureBellman}

By Assumption \ref{assSSPlin} and the definition of $Q^\star$ \eqref{eqQstar}, for any $s \in \S$, we have
\begin{equation}
\phi(s,a)^\trans w^\star = \phi(s,a)^\trans \left( \theta + \sum_{ s' \in \S }  J^\star(s') \mu(s') \right)  = c(s,a) + \sum_{ s' \in \S }  J^\star(s')P(s'|s,a) = Q^\star(s,a) .
\end{equation}
Hence, by the Bellman optimality equations \eqref{eqBellman},
\begin{gather} \label{eqFeatureBellman}
 J^\star(s) = \min_{a \in \A}  Q^\star(s,a)  = \min_{a \in \A} \phi(s,a)^\trans w^\star   , \quad \pi^\star(s) \in \argmin_{a \in \A} Q^\star(s,a) = \argmin_{a \in \A} \phi(s,a)^\trans w^\star .
\end{gather} 
The first equality also implies that $w^\star$ is a fixed point of $G$:
\begin{align}
G w^\star & = \theta + \sum_{ s \in \S } \mu(s) \min_{a \in \A} \phi(s,a)^\trans w^\star = \theta + \sum_{ s \in \S } \mu(s) J^\star(s) = w^\star .
\end{align}

\end{document}